
\documentclass{article}

\usepackage{microtype}
\usepackage{graphicx}
\usepackage{subfigure}
\usepackage{booktabs} 

\usepackage[toc,page,header]{appendix}
\usepackage{minitoc}

\usepackage{amsmath,amsthm,amssymb,amsfonts}
\usepackage{bbm}
\usepackage{mathtools}
\usepackage{extarrows}
\usepackage{comment}
\usepackage{algorithm}
\usepackage{algpseudocode,algorithmicx}

\newcommand{\RR}{\mathbb{R}}
\newcommand{\EE}{\mathbb{E}}
\newcommand{\PP}{\mathbb{P}}

\newcommand{\vtheta}{\boldsymbol{\theta}}

\DeclarePairedDelimiter{\dotp}{\langle}{\rangle}

\newtheorem{lemma}{Lemma}
\newtheorem{theorem}{Theorem}
\newtheorem{remark}{Remark}

\newtheorem{definition}{Definition}
\newtheorem{assumption}{Assumption}

\algrenewcommand{\algorithmiccomment}[1]{\hskip0em$\triangleright$ #1}
\algblock{ParFor}{EndParFor}
\algnewcommand\algorithmicparfor{\textbf{for}}
\algnewcommand\algorithmicpardo{\textbf{do in parallel}}
\algnewcommand\algorithmicendparfor{\textbf{end for}}
\algrenewtext{ParFor}[1]{\algorithmicparfor\ #1\ \algorithmicpardo}
\algrenewtext{EndParFor}{\algorithmicendparfor}

\usepackage{hyperref}



\usepackage[accepted]{icml2020}

\icmltitlerunning{VAFL: a Method of Vertical Asynchronous Federated Learning}

\begin{document}

\twocolumn[
\icmltitle{VAFL: a Method of Vertical Asynchronous Federated Learning}



\icmlsetsymbol{equal}{*}

\begin{icmlauthorlist}
\icmlauthor{Tianyi Chen}{rpi}
\icmlauthor{Xiao Jin}{rpi}
\icmlauthor{Yuejiao Sun}{ucla}
\icmlauthor{Wotao Yin}{ali}
\end{icmlauthorlist}

\icmlaffiliation{rpi}{Department of Electrical, Computer, and Systems Engineering, Rensselaer Polytechnic Institute, Troy, NY, USA.}
\icmlaffiliation{ucla}{Department of Mathematics,	University of California, Los Angeles, Los Angeles, CA, USA.}
\icmlaffiliation{ali}{DAMO Academy, Alibaba US, Seattle, WA, USA. Authors are listed in alphabetical order}

\icmlcorrespondingauthor{Wotao Yin}{wotao.yin@alibaba-inc.com}

\icmlkeywords{Machine Learning, ICML}

\vskip 0.3in
]



\printAffiliationsAndNotice{}  

\begin{abstract}
Horizontal Federated learning (FL) handles multi-client data that share the same set of features, and vertical FL trains a better predictor that combine all the features from different clients. 
This paper targets solving vertical FL in an asynchronous fashion, and develops a simple FL method. The new method allows each client to run stochastic gradient algorithms without coordination with other clients, so it is suitable for intermittent connectivity of clients. This method further uses a new technique of \emph{perturbed local embedding} to ensure data privacy and improve communication efficiency. Theoretically, we present the convergence rate and privacy level of our method for strongly convex, nonconvex and even nonsmooth objectives separately. Empirically, we apply our method to FL on various image and healthcare datasets. The results compare favorably to centralized and synchronous FL methods.

\end{abstract}

\doparttoc 
\faketableofcontents 

\vspace{-0.1cm}
\section{Introduction}

Federated learning (FL) is an emerging machine learning framework where a central server and multiple clients (e.g., financial organizations) collaboratively train a machine learning model \cite{Konecn2016Fed,mcmahan2017,bonawitz2017}. 
Compared with existing distributed learning paradigms, FL raises new challenges including 
the difficulty of synchronizing clients, the heterogeneity of data, and the privacy of both data and models. 

Most of existing FL methods consider the scenario where each client has data of a different set of  
subjects but their data share many common features. 
Therefore, they can collaboratively learn a joint mapping from the feature space to the label space. 
This setting is also referred to data-partitioned or horizontal FL \cite{konevcny2016federated,mcmahan2017}. 

Unlike the data-partitioned setting, in many learning scenarios, multiple clients handle data about the same set of subjects, but \emph{each client has a unique set of features}. This case arises in e-commerce, financial, and healthcare applications \cite{hardy2017private}. For example, an e-commerce company may want to predict a customer's credit using her/his historical transactions from multiple financial institutions; and, a healthcare company wants to evaluate the health condition of a particular patient using his/her clinical data from various hospitals \cite{sun2019privacy}.
In these examples, data owners (e.g., financial institutions and hospitals) have different records of those users in their joint user base, so by combining their features, they can establish a more accurate model. We refer to this setting as feature-partitioned or vertical FL \cite{yang2019fed}. 

Compared to the relatively well-studied horizontal FL setting \cite{mcmahan2017communication}, the vertical FL setting has its unique features and challenges \cite{hu2019fdml,kairouz2019advances}. 
In horizontal FL, the global model update at a server is an additive aggregation of the local models, which are updated by each client using its own data. In contrast, the global model in vertical FL is the concatenation of local models, which are coupled by the loss function, so updating a client's local model requires the information of the other clients' models. Stronger model dependence in the vertical setting leads to challenges on privacy protection and communication efficiency.

\subsection{Prior art}
We review prior work from the following three categories.

\noindent{\bf Federated learning.}
Since the seminal work \cite{Konecn2016Fed,mcmahan2017}, there has been a large
body of studies on FL in diverse settings. 
The most common FL setting is the horizontal setting, where a large set of data are partitioned among clients that share the same feature space \cite{konevcny2016federated}. 
To account for the personalization, multi-task FL has been studied in \cite{smith2017} that preserves the specialty of each client while also leveraging the similarity among clients, and horizontal FL with local representation learning has been empirically studied in \cite{liang2020think}. 
Agnostic FL has also been proposed in \cite{mohri2019icml}, where the federated model is optimized for any target distribution formed by a mixture of the client distributions. 
Communication efficiency has been an important issue in FL. 
Popular methods generally aim to: c1) reduce the number of bits per communication
round, including \citep{seide20141, alistarh2017qsgd,strom2015scalable, aji2017sparse}, to list a few; and, c2) save the number of communication
rounds \cite{chen2018lag,wang2018coop,liu2019communication,chen2020lasg}.

\noindent{\bf Privacy-preserving learning.}
More recently, feature-partitioned vertical FL has gained popularity in the financial and healthcare applications \cite{hardy2017private,yang2019fed,niu2019,kairouz2019}. 
Different from the aggregated gradients in the horizontal case, the local gradients in the vertical FL may involve raw data of those features owned by other clients, which raises additional concerns on privacy. 
Data privacy has been an important topic since decades ago \cite{yao1982}. 
But early approaches typically require expensive communication and signaling overhead when they are applied to the FL settings.
Recently, the notion of differential privacy becomes popular because i) it is a quantifiable measure of privacy \cite{dwork2014algorithmic,abadi2016,dong2019gaussian}; and, ii) many existing learning algorithms can achieve differential privacy via simple modifications. 
In the context of learning from multiple clients, it has been studied in \cite{bonawitz2017,hamm2016}. But all these approaches are not designed for the vertical FL models and the flexible client update protocols.

\noindent{\bf Asynchronous and parallel optimization.}
Regarding methodology, asynchronous and parallel optimization methods are often used to solve problems with asynchrony and delays, e.g., \cite{recht2011}. 
For the feature-partitioned vertical FL setting in this paper, it is particularly related to the Block Coordinate Descent (BCD) method \cite{xu2013,razaviyayn2013opt}. 
The asynchronous BCD and its stochastic variant have been developed under the condition of bounded delay in \cite{peng2016,lian2017nips,cannelli2016}. The Recent advances in this direction established convergence under unbounded delay with blockwise or stochastic update \cite{sun2017nips,dutta2018slow}. However, all the aforementioned works consider the shared memory structure so that each computing node can access the entire dataset, which significantly alleviates the negative effect of asynchrony and delays. Moreover, the state-of-the-art asynchronous methods cannot guarantee i) the convergence when the loss is nonsmooth, and, ii) the privacy of the local update which is at the epicenter of FL.

\subsection{This work}

The present paper puts forth an optimization method for vertical FL, which is featured by three main components. 

1. A \emph{general optimization formulation} for vertical FL that consists of a global model and one local embedding model for each client. 
The local embedding model can be linear or nonlinear, or even nonsmooth. It maps raw data to compact features and, thus, reduces the number of parameters that need to be communicated to and from the global model.

2. Flexible \emph{federated learning algorithms} that allow intermittent or even strategic client participation, uncoordinated training data selections, and data protection by differential-privacy based methods (for specific loss functions, one can instead apply multiple-party secure computing protocols).

3. \emph{Rigorous convergence analysis} that establishes the performance lower bound and the privacy level. 


We have also numerically validated our vertical FL algorithms and their analyses on federated logistic regression and deep learning. 
Tests on image and medical datasets demonstrate the competitive performance of our algorithms relative to centralized and synchronous FL algorithms. 

\section{Vertical federated learning}
This section introduces the formulation of vertical FL.

\subsection{Problem statement}
Consider a set of $M$ clients: ${\cal M}:=\{1,\ldots,M\}$. A dataset of $N$ samples, $\{\mathbf{x}_n, y_n\}_{n=1}^N$, are maintained by $M$ local clients. Each client $m$ is also associated with a unique set of features. For example, client $m$ maintains feature $x_{n,m}\in\mathbb{R}^{p_m}$ for $n=1,\ldots,N$, where $x_{n,m}$ is the $m$-th block of $n$-th sample vector $\mathbf{x}_n:=[x_{n,1}^{\top},\cdots,x_{n,M}^{\top}]^{\top}$ at client $m$. Suppose the $n$-th label $y_n$ is stored at the server. 

To preserve the privacy of data, the client data $x_{n,m}\in\mathbb{R}^{p_m}$ are not shared with other clients as well as the server. Instead, each client $m$ learns a local (linear or nonlinear) embedding $h_m$ parameterized by $\theta_m$ that maps the high-dimensional vector $x_{n,m}\in\mathbb{R}^{p_m}$ to a low-dimensional one $h_{n,m}:=h_m(\theta_m;x_{n,m})\in\mathbb{R}^{\underline{p_m}}$ with $\underline{p_m}\ll p_m$. 
Ideally, the clients and the server want to solve 
\begin{align}\label{eq.prob}
&F(\theta_0,\vtheta):=\frac{1}{N}\sum\limits_{n=1}^N\ell\left(\theta_0, h_{n,1},\ldots, h_{n,M}; y_n\right)+\sum_{m=1}^M r(\theta_m)\nonumber\\
& \text{with}~~~ h_{n, m}:=h_m(\theta_m;x_{n,m}),~~~m=1,\cdots,M
\end{align}
where $\theta_0$ is the global model parameter kept at and learned by the server, and $\vtheta:=[\theta_1^{\top},\cdots,\theta_M^{\top}]^{\top}$ concatenates the local models kept at and learned by local clients, $\ell$ is the loss capturing the accuracy of the global model parameters $\theta_0, \theta_1,\ldots, \theta_M$, and $r$ is the per-client regularizer that confines the complexity of or encodes the prior knowledge about the local model parameters.  

For problem \eqref{eq.prob}, the local information of client $m$ is fully captured in the embedding vector $h_{n,m},~\forall n=1,\cdots, N$. 
Hence, the quantities that will be exchanged between server and clients are $\{h_{n,m}\}$ and the gradients of $\ell(\theta_0, h_{n,1},\ldots, h_{n,M}; y_n)$ with respect to (w.r.t.) $\{h_{n,m}\}$. 
See a diagram for VAFL implementation in Figure \ref{fig:a-diag}.

\begin{algorithm}[t]
\caption{Vertical asynchronous federated learning}\label{alg:Async-SGD-FL}
    \begin{algorithmic}[1]
    \State{\textbf{initialize:}~$\theta_0$, $\{\theta_m\}$, datum index $n$, client index $m$}
    \While{not convergent}
        \State{\hspace{-7pt}{\bf when} a {\bf Client $m$} is activated, {\bf do}:}
           \State{$^\dag$select private datum (or data mini-batch) $x_{n,m}$}
            
            \State{\hspace{0pt}{\bf $^\dag$upload} secure information $h_{n,m}\!=\!h_m(\theta_m;x_{n,m})$}
            \State{\hspace{0pt}{\bf query} $\!\nabla_{h_{n,m}}\!\ell(\theta_0,h_{n,1},\ldots, h_{n,M}; y_n\!)$ from Server}
            \State{\hspace{0pt}update local model $\theta_m$}
        \vspace{3pt}
        \State{\hspace{-7pt}{\bf when Server} receives $h_{n,m}$ from Client $m$, {\bf do}:}
            \State{compute $\nabla_{\theta_0}\ell(\theta_0,h_{n,1},\ldots, h_{n,M}; y_n)$} 
            \State{update server's local model $\theta_0$}
        \vspace{3pt}
        \State{\hspace{-7pt}{\bf when Server} receives a query from Client $m$, {\bf do}:}
            \State{compute $\!\nabla_{h_{n,m}}\!\ell(\theta_0,h_{n,1},\ldots, h_{n,M}; y_n)$}
            \State{{\bf send} it to Client $m$}
    \EndWhile
    \end{algorithmic}
    $^\dag$We can let Step 5 also send  $h_{n,m}$ for those $n$ \emph{not} selected in Step 4. We can re-order Steps 4--7 as 6, 7, \emph{then} 4, and 5. They reduce information delay, yet analysis is unchanged.
\end{algorithm}

\begin{algorithm}[t]
\caption{Vertical $t$-synchronous federated learning}\label{alg:ksync-SGD-FL}
    \begin{algorithmic}[1]
    \State{\textbf{Initialize:}~$\theta_0$, $\{\theta_m\}$, datum index $n$, client index $m$, integer $1\leq t\leq M$}
    \While{not convergent}
        \Statex{\hspace{8pt}Algorithm \ref{alg:Async-SGD-FL}, Lines 3--7}
        \setcounter{ALG@line}{7}
        \State{\hspace{-7pt}{\bf when Server} receives $h_{n,m}$'s from $t$ Clients, {\bf do}:}
        \Statex{\hspace{15pt}Algorithm \ref{alg:Async-SGD-FL}, Lines 9 and 10}
        \setcounter{ALG@line}{10}
        \State{\hspace{-7pt}{\bf when Server} receives queries from $t$ Clients, {\bf do}:}
        \Statex{\hspace{15pt}Algorithm \ref{alg:Async-SGD-FL}, Lines 12 and 13 for each of $t$ clients}
        \setcounter{ALG@line}{13}
   \EndWhile
    \end{algorithmic}
\end{algorithm}

\begin{figure}[t]
\centering
\includegraphics[width=.45\textwidth]{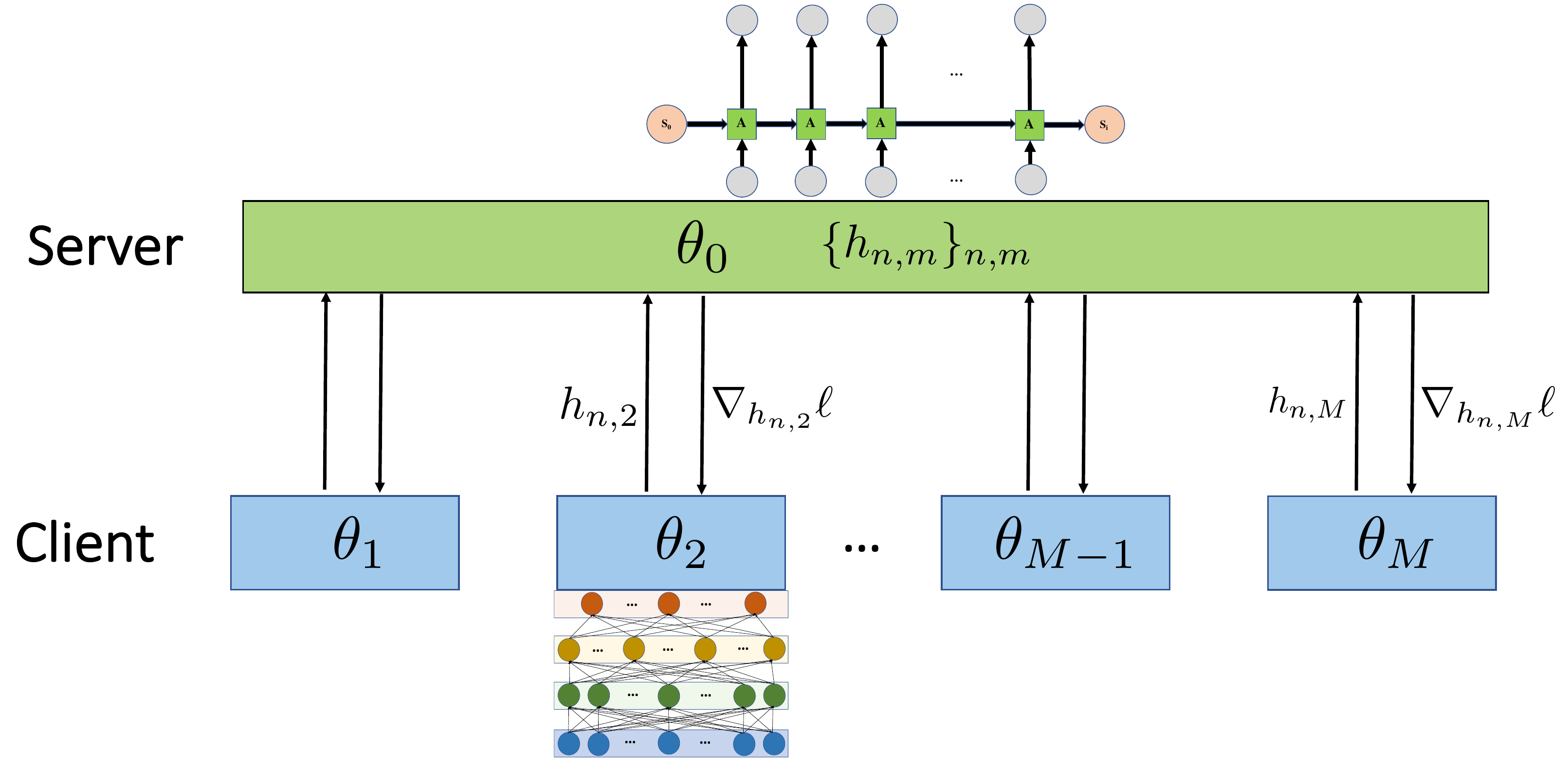}
\vspace{-0.2cm}
  \caption{A diagram for VAFL. The local model at client $m$ is denoted as $\theta_m$ which generates the local embedding $h_{n,m}$.}
\label{fig:a-diag}
\vspace{-0.4cm}
\end{figure}

\subsection{Asynchronous client updates}\label{subsec.avfl}

For FL, we consider solving \eqref{eq.prob} without coordination among clients. 
Asynchronous optimization methods have been used to solve such problems.
However, state-of-the-art methods cannot guarantee i) the convergence when the mapping $h_{n,m}$ is nonlinear (thus the loss is nonsmooth), and, ii) the privacy of the update which is at the epicenter of the FL paradigm. 

We first describe our vertical asynchronous federated learning (VAFL) algorithm in a high level as follows. 
During the learning process, from the server side, it waits until receiving a message from an active client $m$, which is either\\ 
i) \textbf{a query} of the loss function's gradient w.r.t. to the embedding vector $h_{n,m}$; or,\\ 
ii) \textbf{a new embedding vector} $h_{n,m}$ calculated using the updated local model parameter $\theta_m$. 

To response to the query i), the server calculates the gradient for client $m$ using its current $\{h_{n,m}\}$, and sends it to the client; and, upon receiving ii), the server computes the new gradient w.r.t. $\theta_0$ using  the embedding vectors it currently has from other clients and updates its model $\theta_0$. 

For each interaction with server, each \emph{active} client $m$ randomly selects a datum $x_{n,m}$, queries the corresponding gradient w.r.t. $h_{n,m}$ from server, and then it securely uploads the updated embedding vector $h_{n,m}$, and then updates the local model $\theta_m$. 
The mechanism that ensures secure uploading will be described in Section 4. 
Without introducing cumbersome iteration, client, and sample indexes, we summarize the asynchronous client updates in Algorithm \ref{alg:Async-SGD-FL}. 

Specifically, since clients update the model without external coordination, we thereafter use $k$ to denote the global counter (or iteration), which increases by one whenever i) the server receives the new embedding vector $h_{n,m}$ from a client, calculates the gradient, and updates the server model $\theta_0$; and, ii) the corresponding client $m$ obtains the gradient w.r.t. $h_{n,m}$, and updates the local model $\theta_m$. Accordingly, we let $m_k$ denote the client index that uploads at iteration $k$, and $n_k$ denote the sample index used at iteration $k$. 

For notation brevity, we use a single datum $n_k$ for each uncoordinated update in the subsequent algorithms, but the algorithm and its analysis can be easily generalized to a minibatch of data ${\cal N}_k$.
Let $\hat{g}_0^k$ denote the stochastic gradients of the loss at $n_k$-th sample w.r.t. server model $\theta_0$ as
\begin{subequations}\label{eqn:gradient}
\begin{align}
 \hat{g}_0^k:=\nabla_{\theta_0}\ell\big(\theta_0^k,h_{n_k,1}^{k-\tau_{n_k,1}^k},\ldots, h_{n_k,M}^{k-\tau_{n_k,M}^k}; y_{n_k}\big)    
\end{align}
and the gradients w.r.t. the local model $\theta_m$ as
\begin{align}
  \hat{g}_m^k\!:=&\nabla_{\theta_m}\ell\big(\theta_0^k,h_{n_k,1}^{k-\tau_{n_k,1}^k},\ldots, h_{n_k,M}^{k-\tau_{n_k,M}^k}; y_{n_k}\big)  \\
  =&\nabla_{\theta_m} h_{n_k,m}^k\! \nabla_{h_{n_k,m}}\ell \big(\theta_0^k,h_{n_k,1}^{k-\tau_{n_k,1}^k}\!\!\!,\ldots, h_{n_k,M}^{k-\tau_{n_k,M}^k}\!; y_{n_k} \big) .  \nonumber
\end{align}
\end{subequations}
The delay for client $m$ and sample $n$ will increase via
\begin{align} \label{eq.tau}
\tau_{n,m}^{k+1}  =
  \begin{cases}
  ~~~  1, \quad\quad &m= m^k,\, n=n^k, \\
  ~~~\tau_{n,m}^k+1, \quad\quad &{\rm otherwise}.
  \end{cases}
\end{align}

With the above short-hand notation, at iteration $k$, the update at the {\bf server side} is $\theta_0^{k+1}=\theta_0^k-\eta_0^k  \hat{g}_0^k$.
For the {\bf active local client $m_k$} at iteration $k$, its update is
\begin{subequations}
\begin{equation}    
    \theta_{m_k}^{k+1}=
        \theta_{m_k}^k-\eta_{m_k}^k \hat{g}_{m_k}^k-\eta_{m_k}^k \nabla r(\theta_{m_k}^k),
\end{equation}
and for the {\bf other clients $m\neq m_k$}, the update is
\begin{equation}    
    \theta_m^{k+1}=\theta_m^k,
\end{equation}
\end{subequations}
where $\eta_m^k$ is the stepsize and $m_k$ is the index of the client responsible for the $k$th update.

\subsection{Types of flexible update rules}

As shown in \eqref{eqn:gradient}, the stochastic gradients are evaluated using delayed local embedding information {\small$h_m^{k-\tau_{n_k,m}^k}$} from each client $m$, where $\tau_{n_k,m}^k$ is caused by both asynchronous communication and stochastic sampling. 

To ensure convergence, we consider two reasonable settings on the flexible update protocols:\\
1. \emph{Uniformly bounded delay $D$.} We can realize this by modifying the server behavior. 
During the training process, whenever the delay of $\tau_{n_k,m}^k$ exceeds $D(>0)$, the server immediately queries fresh $h_{n,m}$ from client $m$ before continuing the server update process.\\
2. \emph{Stochastic unbounded delay.} In this case, the activation of each client is a stochastic process. The delays is determined by the hitting times of the stochastic processes. For example, if the activation of all the clients follows independent Poisson processes, the delays will be geometrically distributed.\\
3. \emph{$t$-synchronous update, $t>0$.}
While fully asynchronous update is most flexible, $t$-synchronous update is also commonly adopted. In this case, the server computes the gradient w.r.t. $\theta_0$ until receiving $\{h_{n,m}\}$ from $t$ different clients, and then updates the server's model using the newly computed gradient. The $t$-synchronous updates have more stable performance empirically, which is listed in Algorithm \ref{alg:ksync-SGD-FL}. 


\section{Convergence analysis}\label{sec:convergence}
We present the convergence results of our VAFL method for the nonconvex and strongly convex cases and under different update rules. 
Due to space limitation, this section mainly presents the convergence rates for fully asynchronous version of VAFL (Algorithm \ref{alg:Async-SGD-FL}), and the convergence results for $t$-synchronous one (Algorithm \ref{alg:ksync-SGD-FL}) are similar, and thus will be given in the supplementary materials. 


To analyze the performance of Algorithm \ref{alg:Async-SGD-FL}, we first make the following assumptions on sampling and smoothness. 
\begin{assumption}\label{assump:sample}
Sample indexes $\{n_k\}$ are i.i.d. And the variance of gradient follows $    \EE\left[\|g_m^k-\nabla_{\theta_m}F(\theta_0^k,\vtheta^k]\|^2\right]\leq\sigma_m^2, ~ \forall m$, 
where $g_m^k$ is the stochastic gradient $\hat{g}_m^k$ without delay, e.g., $g_m^k:=\nabla_{\theta_m}\ell\left(\theta_0^k,h_{n_k,1}^k,\ldots, h_{n_k,M}^k; y_{n_k}\right)$.
\end{assumption}

\begin{assumption}\label{assump:lip}
The optimal loss is lower bounded $F^*>-\infty$. 
The gradient $\nabla F$ is $L$-Lipschitz continuous, and $\nabla_{\theta_m}F$ is $L_m$-Lipschitz continuous.
\end{assumption}

Generally, assumption \ref{assump:lip} cannot be satisfied under our general vertical FL formulation with \emph{nonsmooth} local embedding functions such as neural networks. However, techniques that we call perturbed local embedding will be introduced to enforce smoothness in Section \ref{sec:randomneuron}.

To handle asynchrony, we need the following assumption, which is often seen in the analysis of asynchronous BCD.

\begin{assumption}\label{assump:client}
The uploading client $m_k$ is independent of $m_0,\ldots, m_{k-1}$ and satisfies $\PP(m_k=m):= q_m$.
\end{assumption}
A simple scenario satisfying this assumptions is that the activation of all clients follows independent Poisson processes. That is, if the time difference between two consecutive activations of client $m$ follows exponential distribution with parameter $\lambda_m$, then the activation of client $m$ is a Possion process with $q_m=\lambda_m^{-1}/{\textstyle\sum_{j=1}^M}\lambda_j^{-1}$.

We first present the convergence results for bounded $\tau_{n_k,m}^k$.

\subsection{Convergence under bounded delay}\label{subsec.bounded}
We make the following assumption \emph{only} for this subsection.
\begin{assumption}[Uniformly bounded delay]\label{assump:boundeddelay}
For each client $m$ and each sample $n$, the delay $\tau_{n,m}^k$ at iteration $k$ is bounded by a constant $D$, i.e., $\tau_{n,m}^k\leq D$.
\end{assumption}


We first present the convergence for the nonconvex case.
\begin{theorem}\label{thm:nc1}
Under Assumptions \ref{assump:sample} -- \ref{assump:boundeddelay}, if $\eta_0^k=\eta_m^k=\min\{\frac{1}{4(1+D)L}, \frac{c_{\eta}}{\sqrt{K}}\}$ with $c_{\eta}\!>\!0$, then we have
\begin{equation}
\small
    \frac{1}{K}\sum\limits_{k=0}^{K-1}\EE[\|\nabla F(\theta_0^k,\vtheta^k)\|^2]={\cal O}\left({1}/{\sqrt{K}}\right).
\end{equation}
\end{theorem}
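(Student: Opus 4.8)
The plan is to run the standard nonconvex descent analysis on the potential $F(\theta_0^k,\vtheta^k)$, paying special attention to (i) the block-coordinate nature of the client updates and (ii) the bias introduced by the stale embeddings. Abbreviating $F^k:=F(\theta_0^k,\vtheta^k)$ and using that only the server block $\theta_0$ and the active client block $\theta_{m_k}$ move at step $k$, I would first invoke the $L$-smoothness of Assumption \ref{assump:lip} to obtain the one-step descent inequality
\begin{equation*}
F^{k+1}\le F^k+\langle\nabla_{\theta_0}F^k,\theta_0^{k+1}-\theta_0^k\rangle+\langle\nabla_{\theta_{m_k}}F^k,\theta_{m_k}^{k+1}-\theta_{m_k}^k\rangle+\tfrac{L}{2}\big(\|\theta_0^{k+1}-\theta_0^k\|^2+\|\theta_{m_k}^{k+1}-\theta_{m_k}^k\|^2\big).
\end{equation*}

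Next I would take the conditional expectation over the sample $n_k$ and the active client $m_k$. Because $\{n_k\}$ are i.i.d.\ (Assumption \ref{assump:sample}), the undelayed direction $g_m^k+\nabla r(\theta_m^k)$ is an unbiased estimate of $\nabla_{\theta_m}F^k$ with variance at most $\sigma_m^2$; and since $m_k$ is independent with $\PP(m_k=m)=q_m$ (Assumption \ref{assump:client}), the expected client contribution is $-\eta\sum_m q_m\langle\nabla_{\theta_m}F^k,\EE[\hat g_m^k+\nabla r(\theta_m^k)]\rangle$, while the server block contributes $-\eta\langle\nabla_{\theta_0}F^k,\EE[\hat g_0^k]\rangle$ at every step. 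Writing each stochastic gradient as its undelayed counterpart plus a residual, the undelayed parts produce the negative terms $-\eta\|\nabla_{\theta_0}F^k\|^2$ and $-\eta q_m\|\nabla_{\theta_m}F^k\|^2$ that drive the descent, the variances feed the $\mathcal O(\eta^2)$ curvature term, and the residuals $\langle\nabla F^k,\hat g^k-g^k\rangle$ form the \emph{delay bias} to be controlled. To reach the \emph{unweighted} $\|\nabla F^k\|^2$ in the statement, I would either run the argument on a $1/q_m$-weighted Lyapunov function or simply absorb $\min_m q_m$ into the final $\mathcal O(\cdot)$ constant.

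The hard part will be bounding the delay bias. Here I would exploit that a stale gradient differs from the fresh one only through the blocks updated inside the delay window: since $h_{n_k,m}^{k-\tau_{n_k,m}^k}$ is generated from $\theta_m^{k-\tau_{n_k,m}^k}$, the $L$-Lipschitz continuity of $\nabla F$ yields $\|\hat g^k-g^k\|\le L\sqrt{\sum_m\|\theta_m^k-\theta_m^{k-\tau_{n_k,m}^k}\|^2}$. Each block difference telescopes over only those past steps at which client $m$ was active, and under Assumption \ref{assump:boundeddelay} that window has length at most $D$, so $\|\theta_m^k-\theta_m^{k-\tau_{n_k,m}^k}\|\le\eta\sum_{j=k-D}^{k-1}\mathbbm{1}[m_j=m]\|\hat g_{m}^j+\nabla r(\theta_m^j)\|$. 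After a Young's inequality this expresses the bias through past gradient magnitudes scaled by $\eta$, and the crucial observation is that, because every delay is at most $D$, each past iteration contributes to at most $D$ later bias terms; summing over $k$ therefore inflates the accumulated bias by a factor of at most $D$, which the stepsize restriction $\eta\le\frac{1}{4(1+D)L}$ is precisely designed to dominate, letting me absorb all bias contributions into a fraction of the main $-\eta\|\nabla F^k\|^2$ descent.

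Finally I would telescope from $k=0$ to $K-1$ and use $F^K\ge F^*$ (Assumption \ref{assump:lip}) to obtain
\begin{equation*}
\frac{c_1\eta}{K}\sum_{k=0}^{K-1}\EE[\|\nabla F^k\|^2]\le\frac{F^0-F^*}{K}+c_2\eta^2\sum_m\sigma_m^2,
\end{equation*}
for constants $c_1,c_2$ depending on $L$, $D$, and $\min_m q_m$. Dividing by $c_1\eta$ and substituting $\eta=\min\{\frac{1}{4(1+D)L},c_\eta/\sqrt K\}$, so that the $c_\eta/\sqrt K$ branch is active once $K$ is large, balances the optimization error $\frac{1}{\eta K}\sim 1/\sqrt K$ against the variance error $\eta\sim1/\sqrt K$, yielding the claimed $\mathcal O(1/\sqrt K)$ rate.
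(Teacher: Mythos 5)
Your proposal is correct and rests on the same core computation as the paper --- the $L$-smooth one-step descent, unbiasedness over $n_k$ and $m_k$ yielding the $-\eta\|\nabla_{\theta_0}F^k\|^2$ and $-\eta q_m\|\nabla_{\theta_m}F^k\|^2$ terms (the paper likewise lower-bounds the weighted sum by $\tfrac14\min_m q_m\,\eta\,\EE\|\nabla F^k\|^2$ rather than reweighting), and staleness controlled by recent parameter movement over a window of length $D$ --- but you organize the delay accounting differently. The paper builds the Lyapunov function $V^k=F(\theta_0^k,\vtheta^k)+\sum_{d=1}^{D}\gamma_d\|\vtheta^{k-d+1}-\vtheta^{k-d}\|^2$, bounds $\|\hat\vtheta^k-\vtheta^k\|^2\le D\sum_{d=1}^{D}\|\vtheta^{k+1-d}-\vtheta^{k-d}\|^2$, and chooses the weights by the recursion $\gamma_{d+1}=\gamma_d-\tfrac32 D\eta L^2-2D\gamma_1\eta^2L^2$ so that all successive-difference terms cancel \emph{per iteration}, after which $V^k$ telescopes; you instead sum the raw descent inequality over $k$ and swap the order of summation, noting each update enters at most $D$ stale windows --- the same double counting that the $\gamma_d$'s encode, and both deliver the condition $\eta L(1+D)\lesssim 1$ that the stepsize is designed to satisfy. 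Two points of care for a full write-up. First, your prose undercounts by one factor: Cauchy--Schwarz over the length-$D$ window plus the at-most-$D$ reuse gives a $D^2$ inflation of the $\eta^2\|\hat g^j\|^2$ terms, i.e., bias of order $\eta(\eta LD)^2$, still absorbed under $\eta\le\frac{1}{4(1+D)L}$, so the conclusion stands. Second, your staleness bound runs through $\|\hat g_m^j\|^2$, which itself contains a stale component, so the absorption is a self-referential inequality that must be solved for the accumulated difference terms; the paper resolves exactly this through the denominator $1-2D^2\bar\eta^2L^2$ in its choice $\gamma_1=\frac{(3/2)\bar\eta D^2L^2}{1-2D^2\bar\eta^2L^2}$, and your stepsize makes it solvable, but the resolution should be made explicit. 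As for what each route buys: the paper's per-iteration Lyapunov descent is reused verbatim to get the $O(1/K)$ strongly convex rate (Theorem 2), which your ergodic summation does not directly provide, while your treatment of the regularizer --- taking $g_m^k+\nabla r(\theta_m^k)$ as the unbiased estimate of $\nabla_{\theta_m}F^k$ --- is cleaner bookkeeping than the paper's, which leaves $\nabla r$ implicit in its gradient definitions.
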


Under the additional assumption of strong convexity, the convergence rate is improved.
\begin{theorem}\label{thm:sc1}
Under Assumptions \ref{assump:sample} -- \ref{assump:boundeddelay}, and the additional assumption that $F$ is $\mu$-strongly convex in $(\theta_0,\vtheta)$, if $\eta^k=\frac{4}{\mu\min\limits_m\sqrt{q_m}(k+K_0)}$ with the constant $K_0>0$, then
\begin{equation}
\small
    \EE F\left(\theta_0^K,\vtheta^K\right)-F^*={\cal O}\left({1}/{K}\right).
\end{equation}
\end{theorem}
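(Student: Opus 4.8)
The plan is to reduce the claim to a scalar recursion on the expected optimality gap $a_k:=\EE[F^k-F^*]$, with $F^k:=F(\theta_0^k,\vtheta^k)$, and then solve that recursion. Since at iteration $k$ only the two blocks $\theta_0$ and $\theta_{m_k}$ move, I would first apply the $L$-smoothness from Assumption~\ref{assump:lip} to those two coordinates to get
\begin{equation*}
F^{k+1}\le F^k+\langle\nabla_{\theta_0}F,\theta_0^{k+1}-\theta_0^k\rangle+\langle\nabla_{\theta_{m_k}}F,\theta_{m_k}^{k+1}-\theta_{m_k}^k\rangle+\tfrac{L}{2}\big(\|\theta_0^{k+1}-\theta_0^k\|^2+\|\theta_{m_k}^{k+1}-\theta_{m_k}^k\|^2\big),
\end{equation*}
and then substitute the updates $\theta_0^{k+1}-\theta_0^k=-\eta^k\hat g_0^k$ and $\theta_{m_k}^{k+1}-\theta_{m_k}^k=-\eta^k(\hat g_{m_k}^k+\nabla r(\theta_{m_k}^k))$.

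Next I would take the conditional expectation given the past. By Assumption~\ref{assump:client} the active block equals $m$ with probability $q_m$ independently of history, so the client inner-product term becomes $-\eta^k\sum_m q_m\|\nabla_{\theta_m}F\|^2$ in expectation (up to the stochastic and delay errors), which together with the always-performed server term reproduces a $q_m$-reweighted copy of $-\eta^k\|\nabla F\|^2$. Assumption~\ref{assump:sample} then bounds the variance $\sigma_m^2$ of the no-delay stochastic gradient, which is absorbed into the order-$(\eta^k)^2$ remainder.

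The step I expect to be the main obstacle is the staleness: the direction $d_m^k:=\hat g_m^k+\nabla r(\theta_m^k)$ is built from the delayed embeddings $h_{n_k,m}^{k-\tau_{n_k,m}^k}$ rather than from the current iterate. I would write $d_m^k=\nabla_{\theta_m}F^k+(\text{zero-mean stochastic error})+(\text{delay error})$ and control the delay error through the $L$-Lipschitz continuity of $\nabla F$, bounding it by a sum of parameter increments $\sum_{j=k-D}^{k-1}\|\theta^{j+1}-\theta^j\|$ (all blocks stacked) over a window whose length stays finite by the bound $\tau\le D$ of Assumption~\ref{assump:boundeddelay}. Because each increment is itself $\mathcal{O}(\eta^j(\|\nabla F^j\|+\sigma))$, the delay contributes only order-$(\eta^j)^2$ terms; summing the per-step inequality over $k$ and exchanging the double sum—using $\tau\le D$ and the slow decay $\eta^{k-d}\le c\,\eta^k$ for $d\le D$, valid because $\eta^k\asymp 1/k$—lets me fold these cross-iteration terms back into the main recursion.

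Finally I would invoke $\mu$-strong convexity through $\|\nabla F^k\|^2\ge 2\mu(F^k-F^*)$ (the PL inequality implied by strong convexity) to turn the accumulated descent into $a_{k+1}\le(1-c_1\mu\eta^k)a_k+c_2(\eta^k)^2$ for suitable $c_1,c_2>0$. With $\eta^k\asymp 1/(k+K_0)$ this is the classical strongly convex SGD recursion, closed by the induction hypothesis $a_k\le C/(k+K_0)$ once $c_1\mu\eta^k(k+K_0)>1$. The role of the factor $\min_m\sqrt{q_m}$ in the prescribed $\eta^k=\frac{4}{\mu\min_m\sqrt{q_m}(k+K_0)}$ is to enforce this threshold uniformly despite the heterogeneous $q_m$-weighted block contributions; pinning down the correct power of $q_m$ (hence the constant $c_1$) is the delicate part of the bookkeeping, as is choosing $K_0$ large enough that the initial stepsize still respects the stability ceiling $\eta^k\lesssim 1/\big((1+D)L\big)$ needed in the descent step.
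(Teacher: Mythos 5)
Your skeleton matches the paper's own architecture up to the crux: a blockwise smoothness descent on the two active blocks, conditional expectation over $m_k$ yielding the $q_m$-weighted gradient terms (this is precisely the paper's Lemma~\ref{lemma:Fdescent}), the strong-convexity/PL consequence $2\mu(F-F^*)\le\|\nabla F\|^2$, and the classical $1/(k+K_0)$ stepsize recursion. The genuine gap is in how you absorb the staleness for a \emph{last-iterate} bound. Your stated device --- ``summing the per-step inequality over $k$ and exchanging the double sum'' --- is the right move for the averaged nonconvex rate of Theorem~\ref{thm:nc1}, but it produces only an aggregate inequality, from which the per-step recursion $a_{k+1}\le(1-c_1\mu\eta^k)a_k+c_2(\eta^k)^2$ that your induction requires cannot be recovered. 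If you instead keep everything per-step, your claim that each increment is ${\cal O}(\eta^j(\|\nabla F^j\|+\sigma))$ and hence that the delay ``contributes only order-$(\eta^j)^2$ terms'' silently treats $\|\nabla F^j\|$ as bounded: the delay error at step $k$ carries factors $\EE\|\nabla F^j\|^2$ for $j\in[k-D,k-1]$, which are dominated neither by $a_k$ nor by $(\eta^k)^2$ without further argument. There is also a self-feedback you do not address: each increment $\|\vtheta^{j+1}-\vtheta^j\|$ itself contains a \emph{stale} gradient, so increments are bounded by earlier increments, and a naive unrolling does not obviously close.

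The paper plugs exactly this hole with the Lyapunov function $V^k=F(\theta_0^k,\vtheta^k)+\sum_{d=1}^{D}\gamma_d\|\vtheta^{k+1-d}-\vtheta^{k-d}\|^2$ of \eqref{eqn:lyapunov1}: in the strongly convex case the coefficients $\gamma_d$ are chosen to solve a linear system so that each memory term not only cancels the incoming delay error but leaves a surplus proportional to $\frac{\mu}{2}\min_m q_m\bar\eta\gamma_1\,\|\vtheta^{k+1-d}-\vtheta^{k-d}\|^2$, i.e., the \emph{entire} Lyapunov function contracts per step, $\EE V^{k+1}\le(1-\frac{\mu}{2}\min_m q_m\,\eta^k)\EE V^k+(\eta^k)^2R$, after which the rate follows from the telescoping products $\prod_k\frac{k+K_0-2}{k+K_0}$ with no induction at all. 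Your route could be repaired by a strengthened induction that tracks both $a_j\le C/(j+K_0)$ and $\EE\|\vtheta^{j+1}-\vtheta^j\|^2\lesssim(\eta^j)^2$, using $\|\nabla F^j\|^2\le 2L(F^j-F^*)$ to convert stale gradient norms into earlier gaps; that component is what is missing as written. Finally, your caution about the power of $q_m$ is warranted, though not for the reason you suggest: the theorem statement prescribes $\min_m\sqrt{q_m}$, while the paper's own proof runs with $\eta^k=\frac{4}{\mu\min_m q_m(k+K_0)}$ (needed so that $1-\frac{\mu}{2}\min_m q_m\eta^k=\frac{k+K_0-2}{k+K_0}$ telescopes), so the square root in the statement appears to be a typo rather than a constant your bookkeeping must reproduce.
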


\subsection{Convergence under stochastic unbounded delay}
We make the following assumption \emph{only} for this subsection.
\begin{assumption}[Stochastic unbounded delay]\label{assump:unboundeddelay}
For each client $m$, the delay $\tau_{n_k,m}^k$ is an random variable with unbounded support. 
And there exists $\bar p_m, \rho>0$ such that $\PP\left(\tau_{n_k,m}^k=d\right)\leq \bar p_m\rho^d:=p_{m,d}$.
\end{assumption}

Under Assumption \ref{assump:unboundeddelay}, we obtain the convergence rates of the same order as those the under bounded delay assumption. 
\begin{theorem}\label{thm:nc2}
Under Assumptions \ref{assump:sample}-\ref{assump:client} and \ref{assump:unboundeddelay}, if $\eta_0^k=\eta_m^k=\min\big\{\frac{1}{4(1+\min_m\sqrt{c_m})L}, \frac{c_{\eta}}{\sqrt{K}}\big\}$, then we have
\begin{equation}
\small
    \frac{1}{K}\sum\limits_{k=0}^{K-1}\EE[\|\nabla F(\theta_0^k,\vtheta^k)\|^2]={\cal O}\left({1}/{\sqrt{K}}\right).
\end{equation}
\end{theorem}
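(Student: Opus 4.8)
The plan is to follow the standard descent-lemma argument for asynchronous stochastic gradient methods, but to replace the deterministic bound on the delay (Assumption \ref{assump:boundeddelay}) by a bound on its tail, furnished by Assumption \ref{assump:unboundeddelay}. First I would invoke the $L$-Lipschitz continuity of $\nabla F$ (Assumption \ref{assump:lip}) to write the one-step descent inequality
\begin{equation*}
\EE\left[F(\theta_0^{k+1},\vtheta^{k+1})\right]\leq \EE\left[F(\theta_0^k,\vtheta^k)\right]-\eta^k\,\EE\left[\dotp*{\nabla F(\theta_0^k,\vtheta^k)}{\hat g^k}\right]+\frac{L(\eta^k)^2}{2}\,\EE\left[\|\hat g^k\|^2\right],
\end{equation*}
where $\hat g^k$ collects the delayed stochastic gradients $\hat g_0^k$ and $\hat g_{m_k}^k$ from \eqref{eqn:gradient}. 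Using Assumption \ref{assump:client} to take the expectation over the active client index $m_k$ (with probabilities $q_m$), the inner-product term reproduces a $q_m$-weighted squared gradient norm, which after absorbing the $\sqrt{q_m}$ factors into the stepsize is the quantity we want to telescope.

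The crux is controlling the error incurred by replacing the delayed gradient $\hat g^k$ by the delay-free gradient $g^k$ of Assumption \ref{assump:sample}. I would write $\hat g^k=g^k+(\hat g^k-g^k)$ and bound the discrepancy $\|\hat g^k-g^k\|$ using $L$-Lipschitzness in terms of the drift $\|h_{n_k,m}^{k}-h_{n_k,m}^{k-\tau_{n_k,m}^k}\|$, i.e.\ how much each embedding has moved over its delay window. That drift telescopes into a sum of per-iteration increments $\eta^j\|\hat g_{m}^j\|$ over the delayed iterations $j=k-\tau_{n_k,m}^k,\ldots,k-1$. The hard part will be that the number of summands is now a random variable with unbounded support, so the constant $(1+D)$ appearing in Theorem \ref{thm:nc1} is no longer available. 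The plan is to take the expectation of this telescoped sum against the geometric tail $\PP(\tau_{n_k,m}^k=d)\leq \bar p_m\rho^d$ from Assumption \ref{assump:unboundeddelay}; because $\sum_{d\geq 0} d\,\rho^d=\rho/(1-\rho)^2<\infty$, the \emph{expected} effective delay is finite, and the geometrically weighted sum of past gradient increments converges. This is precisely where the constant $c_m$ in the stepsize $\eta^k=\min\{\frac{1}{4(1+\min_m\sqrt{c_m})L},\frac{c_\eta}{\sqrt{K}}\}$ enters: I expect $c_m$ to be (an upper bound on) this summable moment $\sum_{d\geq 0}\bar p_m\rho^d$ or the related first-moment $\sum_d d\,\bar p_m\rho^d$, so that $\sqrt{c_m}$ plays the role that $D$ played in the bounded-delay theorem.

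After substituting the delay-error bound back into the descent inequality, I would collect the $\|\nabla F\|^2$ terms on one side and the variance terms $\sigma_m^2$ (from Assumption \ref{assump:sample}) and the geometrically-weighted cross terms on the other. With the constant stepsize $\eta^k=c_\eta/\sqrt{K}$ active for large $K$, the second-order terms scale like $1/K$ and the telescoped gradient-drift terms remain summable thanks to the geometric tail, so summing over $k=0,\ldots,K-1$ and dividing by $K$ yields
\begin{equation*}
\frac{1}{K}\sum_{k=0}^{K-1}\EE\left[\|\nabla F(\theta_0^k,\vtheta^k)\|^2\right]\leq \frac{F(\theta_0^0,\vtheta^0)-F^*}{c_\eta\sqrt{K}}+\frac{C}{\sqrt{K}}={\cal O}\!\left(1/\sqrt{K}\right),
\end{equation*}
where $C$ absorbs $L$, the variances $\sigma_m^2$, and the finite delay moments $c_m$. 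The overall structure therefore mirrors the proof of Theorem \ref{thm:nc1}; the only genuinely new ingredient is the passage from a worst-case delay count $D$ to an expected, geometrically-tamed delay, and I anticipate that making this exchange rigorous — in particular interchanging the expectation over the random delay with the expectation over the sampling and activation randomness, and verifying their independence so the tail bound $p_{m,d}$ may be applied conditionally — will be the main technical obstacle.
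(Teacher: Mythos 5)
Your proposal follows essentially the same route as the paper: its proof is exactly your descent-lemma-plus-delay-decomposition argument, with your ``sum over $k$ and exchange the order of summation against the geometric tail'' step formalized instead as a Lyapunov function $V^k=F(\theta_0^k,\vtheta^k)+\sum_{m}\sum_{d\geq 1}\gamma_{m,d}\|\theta_m^{k+1-d}-\theta_m^{k-d}\|^2$ whose weights $\gamma_{m,d}$ are chosen so the staleness terms cancel per iteration, after which the $\frac{c_\eta}{\sqrt{K}}$ stepsize telescopes to the claimed ${\cal O}(1/\sqrt{K})$ bound. One small correction: since the drift bound uses Cauchy--Schwarz, $\|\theta_m^{k-\tau}-\theta_m^k\|^2\leq\tau\sum_{d=1}^{\tau}\|\theta_m^{k+1-d}-\theta_m^{k-d}\|^2$, the constant $c_m$ is effectively the \emph{second} moment $\sum_{d\geq1}\sum_{s\geq d}s\,p_{m,s}=\sum_{s}s^2p_{m,s}$ of the delay rather than the first moment you anticipated --- still finite under the geometric tail of Assumption \ref{assump:unboundeddelay}, so your argument goes through unchanged.
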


Under the additional assumption of strong convexity, the convergence rate is improved.
\begin{theorem}\label{thm:sc2}
Assume that $F$ is $\mu$-strongly convex in $(\theta_0,\vtheta)$. Then under Assumptions \ref{assump:sample}-\ref{assump:client} and \ref{assump:unboundeddelay}, if $\eta_0^k=\eta_m^k=\frac{2}{\nu(k+K_0)}$ where $K_0$ is a positive constant, then it follows that
\begin{equation}
\small
    \EE F(\theta_0^K,\vtheta^K)-F^*={\cal O}\left({1}/{K}\right).
\end{equation}
\end{theorem}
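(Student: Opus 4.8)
The plan is to adapt the strongly convex, bounded-delay argument behind Theorem \ref{thm:sc1} to the stochastic unbounded-delay regime; the only substantive change is how the staleness of the delayed embeddings is controlled. Where Theorem \ref{thm:sc1} uses the uniform budget $D$, here I would extract an \emph{expected} budget from the geometric tail $p_{m,d}=\bar p_m\rho^d$ of Assumption \ref{assump:unboundeddelay}. Writing $\theta^k$ as shorthand for the concatenation $(\theta_0^k,\vtheta^k)$ and $V^k:=\EE[F(\theta^k)-F^*]$, the target is a one-step recursion of the form
\begin{equation}
V^{k+1}\le\Big(1-\frac{a}{k+K_0}\Big)V^k+\frac{b}{(k+K_0)^2},\nonumber
\end{equation}
with $a>1$ and $b<\infty$; a standard induction lemma then yields $V^K={\cal O}(1/K)$.

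First I would use the joint update $\theta^{k+1}-\theta^k=-\eta^k\hat g^k$ (nonzero only on the server block and the active block $m_k$) together with $L$-smoothness from Assumption \ref{assump:lip} to obtain the descent inequality
\begin{equation}
F(\theta^{k+1})\le F(\theta^k)-\eta^k\langle\nabla F(\theta^k),\hat g^k\rangle+\tfrac{L(\eta^k)^2}{2}\|\hat g^k\|^2.\nonumber
\end{equation}
Taking expectation and invoking Assumptions \ref{assump:sample} and \ref{assump:client}, the active block equals $m$ with probability $q_m$ and the sampling is unbiased, so the averaged direction reproduces $\nabla F(\theta^k)$ up to the staleness bias $\hat g^k-g^k$ and the bounded-variance noise $\sigma_m^2$. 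This is the point where the $\sqrt{q_m}$ inside $\nu$ enters: balancing the per-block contraction against the cross terms (e.g.\ through a $q_m$-weighted norm and Young's inequality) forces the effective contraction coefficient to scale like $\mu\min_m\sqrt{q_m}$, and $\nu$ is chosen below twice this quantity so that $a>1$.

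The crux is bounding the staleness bias under an \emph{unbounded} delay. Using the chain-rule expression for $\hat g_m^k$ in \eqref{eqn:gradient} and Lipschitz continuity of $\nabla F$, I would bound $\|\hat g_m^k-g_m^k\|$ by the embedding drift $\|h_{n_k,m}^k-h_{n_k,m}^{k-\tau_{n_k,m}^k}\|$, and then by the accumulated step sizes over the delay window, $\sum_{j=k-\tau}^{k-1}\eta^j\|\hat g^j\|$. The main obstacle is that this window length $\tau$ is now a random variable with unbounded support, coupled with the stochastic iterates. I would decouple the two using the independence built into Assumptions \ref{assump:client} and \ref{assump:unboundeddelay}, and then integrate over the delay law: since $\PP(\tau_{n_k,m}^k=d)\le\bar p_m\rho^d$ and the per-delay contribution grows at most linearly in $d$, the series $\sum_d d\,\bar p_m\rho^d$ converges, yielding a finite expected staleness that plays the role a $1/(1-\rho)$-type constant, replacing $D$ throughout.

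Finally, I would apply $\mu$-strong convexity to convert $\langle\nabla F(\theta^k),\theta^k-\theta^*\rangle$ (equivalently $\|\nabla F(\theta^k)\|^2$) into the gap $F(\theta^k)-F^*$, assemble the descent inequality into the advertised recursion with $a>1$ guaranteed by the choice of $\nu$ and with $K_0$ taken large enough that $\eta^0$ lies below the stability threshold set by $L$ and $\rho$, and then close the argument with the induction lemma to obtain $\EE F(\theta_0^K,\vtheta^K)-F^*={\cal O}(1/K)$. The $t$-synchronous variant would follow from the same template with $q_m$ replaced by the appropriate selection probabilities.
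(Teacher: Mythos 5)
Your high-level instincts are right (geometric tail $\Rightarrow$ summable staleness, strong convexity via $2\mu(F-F^*)\le\|\nabla F\|^2$, then a $(1-\frac{a}{k+K_0})$-type recursion), but there is a genuine gap at the crux you flag: running the recursion on the bare gap $V^k:=\EE[F(\theta^k)-F^*]$ does not close. When you bound the staleness bias by the accumulated drift over the delay window and integrate against $\PP(\tau^k_{n_k,m}=d)\le\bar p_m\rho^d$, Cauchy--Schwarz over a window of length $d$ gives a factor $d$ times $d$ squared increments, so after swapping sums you get weights $c_{m,d}:=\sum_{s\ge d}s\,p_{m,s}$ multiplying \emph{every} past increment $\|\theta_m^{k+1-d}-\theta_m^{k-d}\|^2$, $d=1,\dots,\infty$ --- not a single finite constant replacing $D$. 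Each such past increment is itself $(\eta^{k-d})^2\|\hat g^{k-d}\|^2$, which again contains stale gradients, so the one-step inequality in terms of the gap alone couples step $k$ to the entire past trajectory with unbounded memory. The paper closes this by augmenting the potential: $V^k=F(\theta_0^k,\vtheta^k)+\sum_m\sum_{d=1}^\infty\gamma_{m,d}\|\theta_m^{k+1-d}-\theta_m^{k-d}\|^2$, choosing $\gamma_{m,d}=\sum_{s\ge d}c_{m,s}\bigl(\tfrac{3}{2}\bar\eta L^2+2q_m\gamma_{m,1}\bar\eta^2L^2+\xi_m\bigr)$ so that each history term telescopes with a strictly negative slack $-\xi_m c_{m,d}$. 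This slack is essential in the strongly convex case: for the $\mathcal{O}(1/K)$ rate you need \emph{every} component of the potential, including the memory terms, to contract at rate $(1-\nu\eta^k)$, which is why the paper's $\nu=\inf_{m,d}\{\frac{\xi_m c_{m,d}}{\bar\eta\gamma_{m,d}},\frac{\mu q_m}{2}\}$ combines the strong-convexity rate with the history-contraction rate, and why positivity of $\nu$ needs the nontrivial verification $\frac{c_{m,d}}{\sum_{s\ge d}c_{m,s}}\ge\frac{1-\rho}{2}$ (the paper's Remark). None of this machinery appears in your sketch, and without it the induction lemma has nothing closed to apply to.

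Two smaller corrections. First, your account of $\nu$ is off: it is not "chosen below twice $\mu\min_m\sqrt{q_m}$" via a $q_m$-weighted Young argument; in the paper it is the infimum described above, determined jointly by $\mu$, $q_m$, $L$, $\bar p_m$ and $\rho$ through the Lyapunov weights. Second, Assumption \ref{assump:unboundeddelay} gives only the tail bound on the delay law, so the "decoupling by independence" you invoke should be phrased as conditioning on the filtration $\Theta^k$ and summing $\EE[\,\cdot\,|\Theta^k]\,\PP(\tau=s)$, as the paper does; as stated, your decoupling step assumes more independence than the assumptions provide.
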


It worth mentioning that under the assumption of bounded delay and unbounded but stochastic delay, without even coordinating clients' gradient samples and local model updates, our algorithm achieves the same order of convergence as that of block-wise SGD in both cases \cite{XuYin2015_block}.


\vspace{-0.1cm}
\section{Perturbed local embedding: Enforcing differential privacy and smoothness}\label{sec:randomneuron}
In this section, we introduce a local perturbation technique that is applied by each client to enforce the differential privacy of local information, which also smoothes the otherwise nonsmooth mapping of local embedding.

\subsection{Local perturbation}
Recall that $h_m$ denotes a local embedding function of client $m$ with the parameter $\theta_m$ which embeds the information of local data $x_{n,m}$ into its outputs $h_{n, m}:=h_m(\theta_m;x_{n,m})$. 
When $h_m$ is linear embedding, it is as simple as $h_m(\theta_m;x_{n,m})=x_{n,m}^{\top}\theta_m$. 
To further account for nonlinear embedding such as neural networks, we represent $h_{n,m}$ in the following composite form
\begin{subequations}\label{eqn:localembed}
\small
\begin{align}
    &u_0=x_{n,m}\\
    &u_l=\sigma_l(w_lu_{l-1}+b_l),~~~l=1,\ldots, L\label{eqn:localem2}\\
    &h_{n,m}=u_L 
\end{align}
\end{subequations}
where $\sigma_l$ is a linear or nonlinear function, and $w_l,b_l$ corresponds to the parameter $\theta_m$ of $h_m$, e.g., $\theta_m:=[w_1,\cdots,w_L, b_1,\cdots,b_L]^{\top}$. 
Here we assume that $\sigma_l$ is $L_{\sigma_l}^0$-Lipschitz continuous.
Specially, when $h_m$ is linear, the composite embedding \eqref{eqn:localembed} corresponds to $L=1, \sigma_1(z)=z$. 

We perturb the local embedding function $h_m$ by adding a random neuron with output $Z_l$ at each layer $l$ (cf. \eqref{eqn:localem2})
\begin{align}\label{eqn:pertlocalembed}
    &u_l=\sigma_l(w_lu_{l-1}+b_l+Z_l),~~~l=1,\ldots, L
\end{align}
where $Z_1,\ldots, Z_L$ are independent random variables. With properly chosen distributions of $Z_l, l=1,\ldots, L$, we show below $h_m$ is smooth and enables differential privacy. While it does not exclude other options, our choice of the perturbation distributions is 
\begin{subequations}\label{eq.pertb}
\small
\begin{align}
        &Z_L\sim\mathcal N\left(0,c^2\right)\\
        &Z_l\sim\mathcal U[-\sqrt{3}c_l,\sqrt{3}c_l],~~~l=1,\cdots, L-1
\end{align}
\end{subequations}
where $\mathcal N\left(0,c^2\right)$ denotes the Gaussian distribution with zero mean and variance $c^2$, and $\mathcal U[-\sqrt{3}c_l,\sqrt{3}c_l]$ denotes the uniform distribution over $[-\sqrt{3}c_l,\sqrt{3}c_l]$.

\begin{figure*}[t]
\vspace{-0.2cm}
\centering
\begin{tabular}{cccc}
\hspace{-0.5cm}
\includegraphics[width=4.8cm]{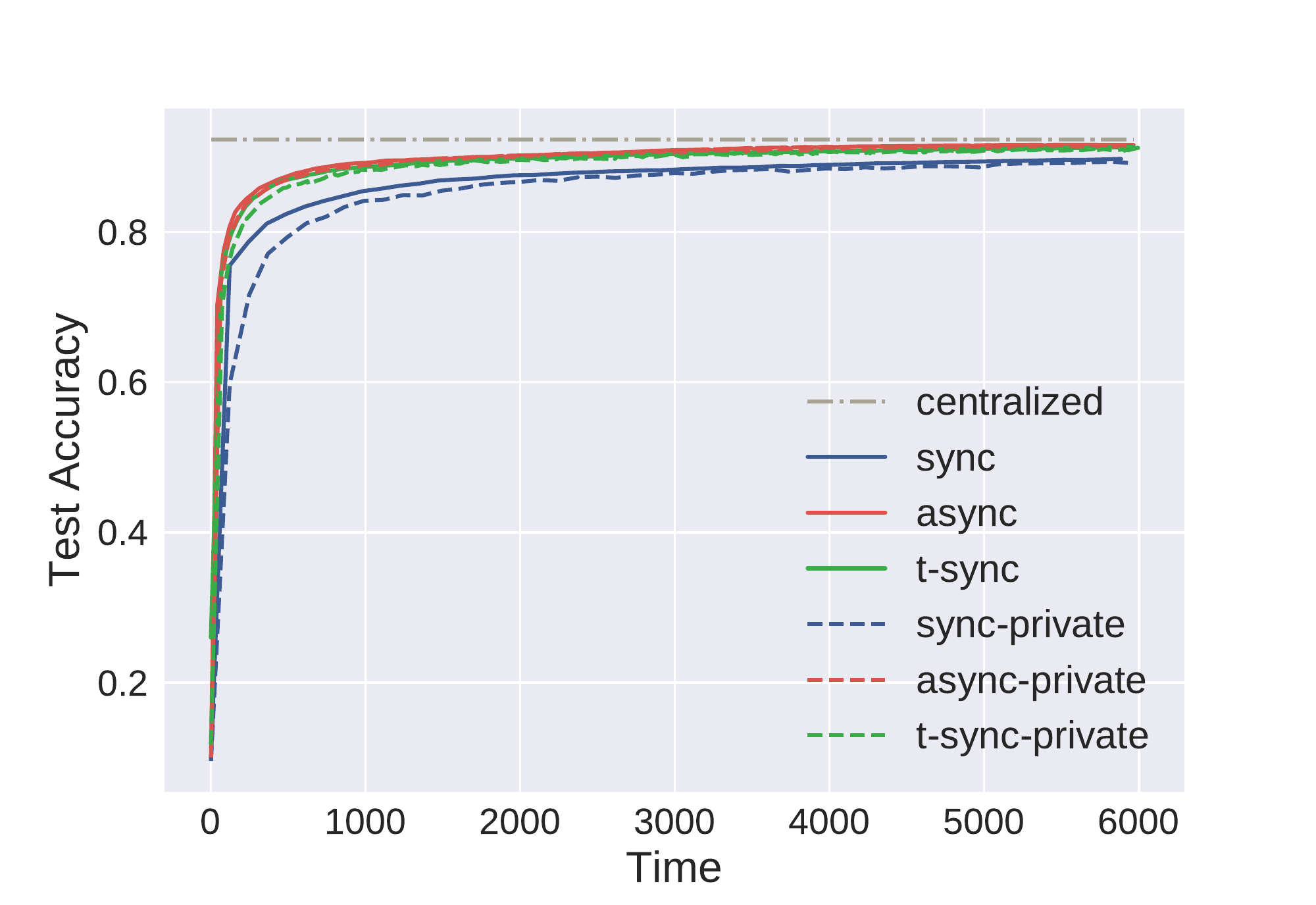}
\hspace{-0.6cm}
\includegraphics[width=4.8cm]{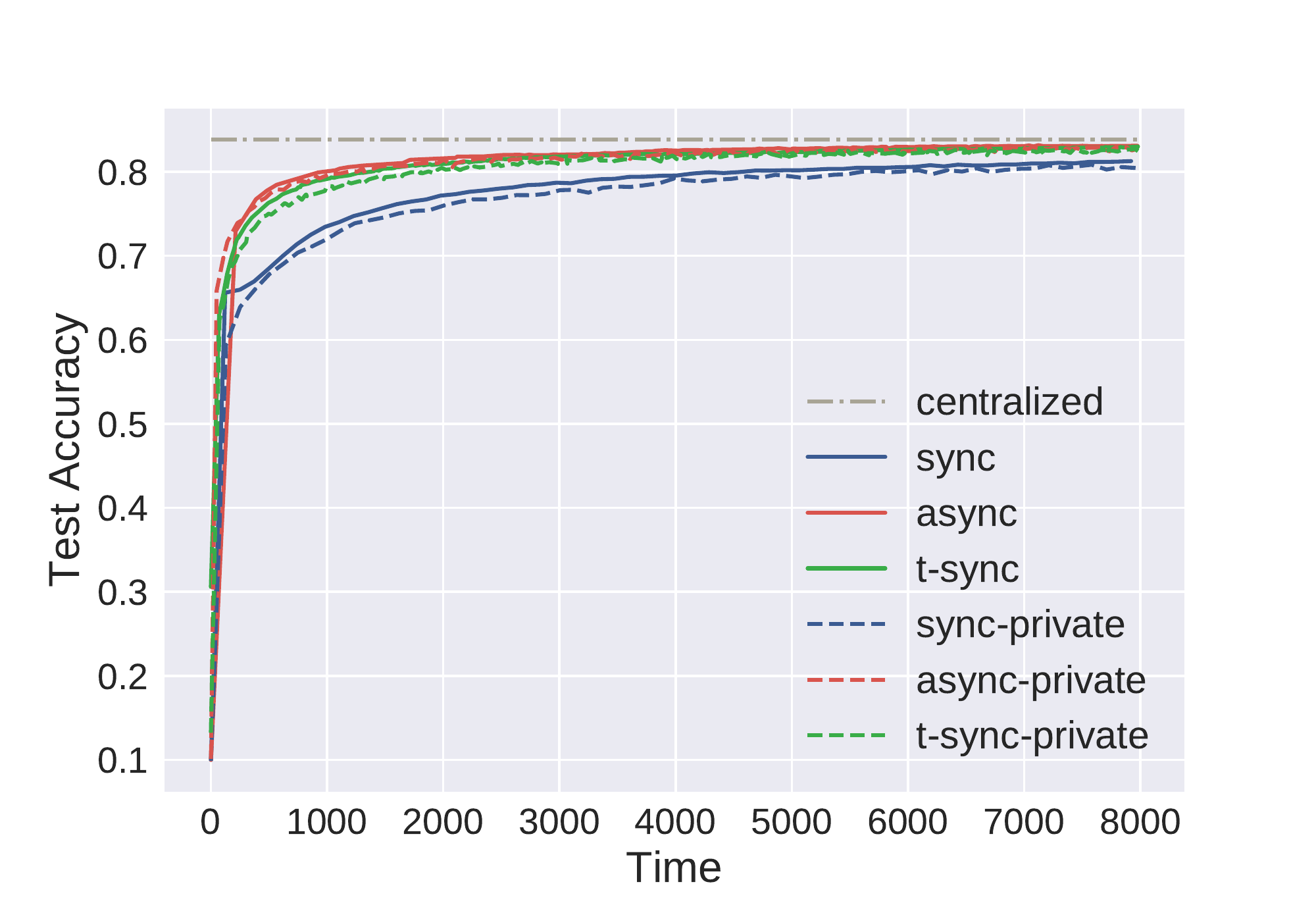}
\hspace{-0.6cm}
\includegraphics[width=4.8cm]{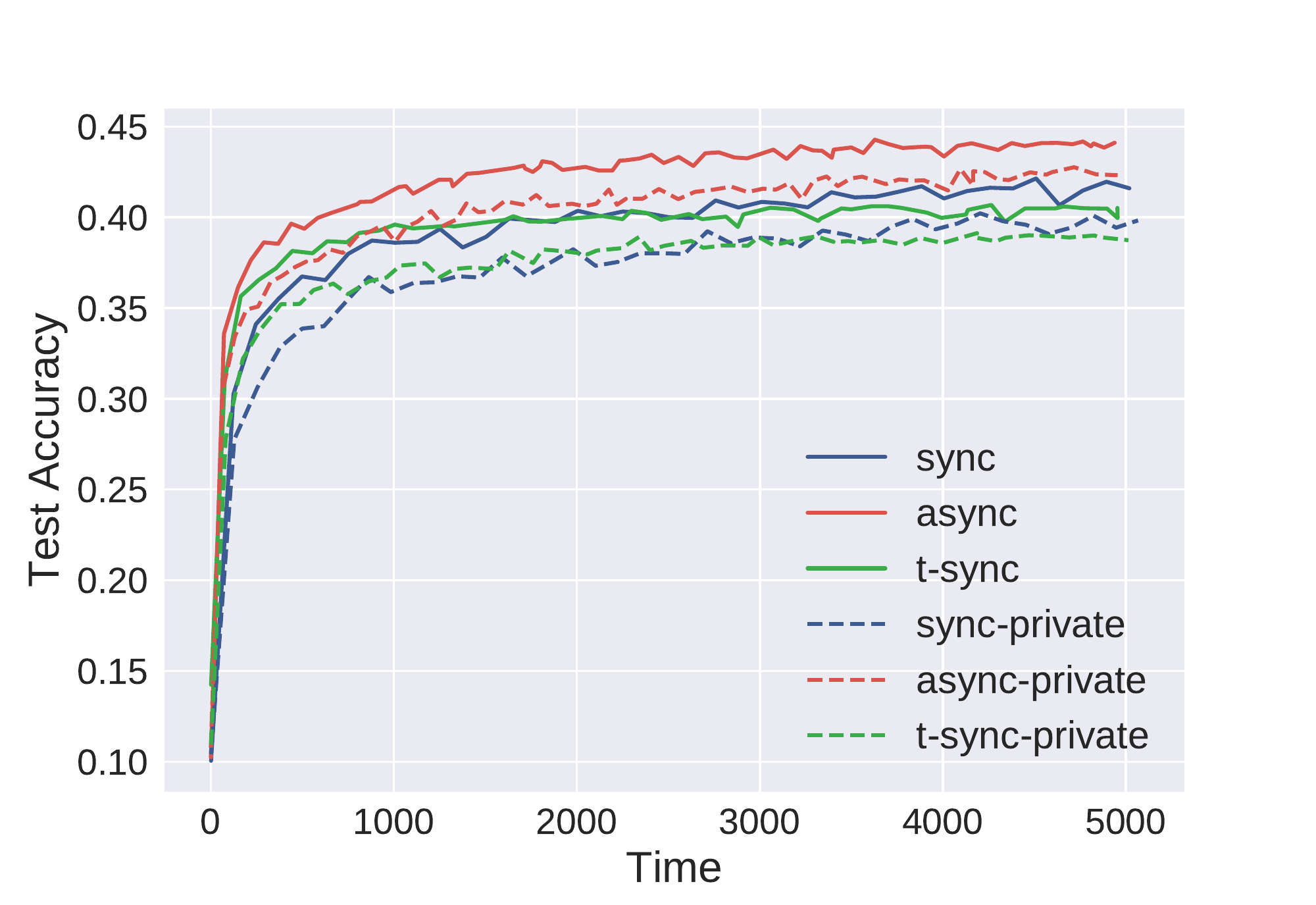}
\hspace{-0.6cm}
\includegraphics[width=4.8cm]{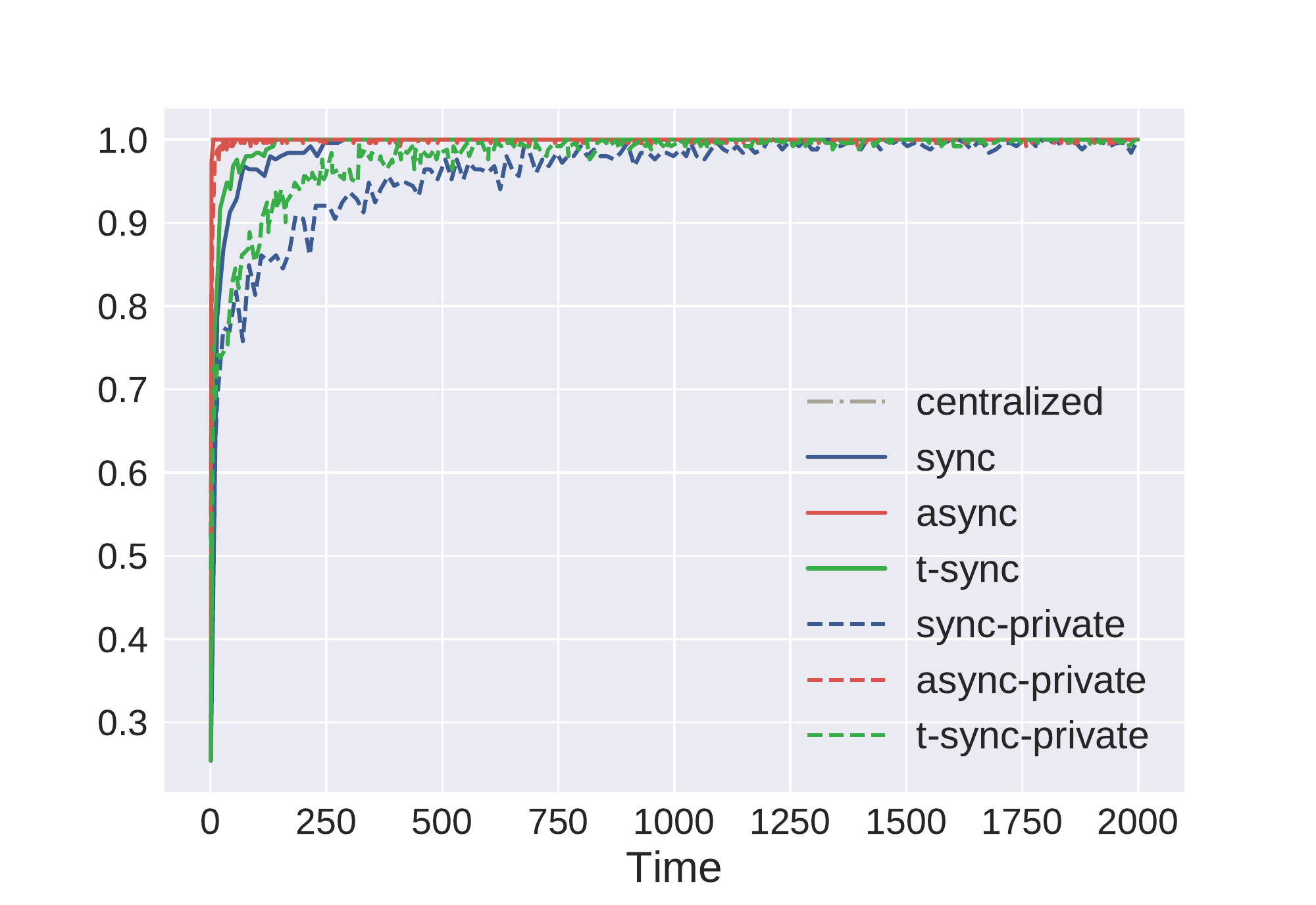}
\end{tabular}
\vspace{-0.5cm}
  \caption{Testing accuracy versus clock time (sec) in MNIST, Fashion-MNIST, CIFAR10 and Parkinson datasets (from left to right).}
\label{fig:test1}
\vspace{-0.2cm}
\end{figure*}


\subsection{Enforcing smoothness}
The convergence results in Section \ref{sec:convergence} hold under Assumption \ref{assump:lip} which requires the smoothness of the overall loss function. 
Inspired by the randomized smoothing technique \cite{duchi2012randomized,nesterov2017random}, we are able to smooth the objective function by taking expectation with respect to the random neurons. Intuitively this follows the fact that the smoothness of a function can be increased by convolving with proper distributions. By adding random neuron $Z_l$, the landscape of $\sigma_l$ will be smoothed in expectation with respect to $Z_l$. And by induction, we can show the smoothness of local embedding vector $h_m$. Then so long as the loss function $\ell$ is smooth w.r.t. the local embedding vector $h_m$, the global objective $F$ is smooth by taking expectation with respect to all the random neurons. 

We formally establish this result in the following theorem.
\begin{theorem}\label{thm:smoothness}
For each embedding function $h_m$, if the activation functions follow $\sigma_l=\sigma, \,\forall l$, and the variances of the random neurons follow \eqref{eq.pertb}, and assume $\|w_l\|$ is bounded, then with $\mathbf{Z}:=[Z_1^{\top},\cdots, Z_L^{\top}]^{\top}$, the perturbed loss satisfies Assumption \ref{assump:lip}, which is given by
\begin{equation}\label{eq.perloss}
\small
    F_c(\theta_0,\vtheta):=\mathbb{E}_{\mathbf{Z}}[F(\theta_0,\vtheta;\mathbf{Z})].
\end{equation}
Starting from $L_{b_L}^{h}={L_{\sigma}^0d}/{c}$, the smoothness constants of the local model $\theta_m$ denoted as $L^{F_c}_{\theta_m}$ satisfy the following recursion ($l=1,\cdots,L-1$)
{\small
\begin{align}\label{eq.smoothness}
    &\!\!L_{b_l}^{h}=L_{b_{l+1}}^{h}\|w_{l+1}\|(L_{\sigma}^0)^2+L_{\sigma}^0\|w_L\|\cdots L_{\sigma}^0\|w_{l+1}\|L_{\bar\sigma}(c_l)\nonumber\\
    &\!\!L_{w_l}^{h}=\EE[\|u_{l-1}\|]L_{b_l}^{h}\nonumber\\
    &\!\!L^{F_c}_{\theta_m}=L^{\ell}_{h_m}(L_{h_m}^0)^2+L_{\ell}^0\sum_{l=1}^L(L_{w_l}^{h}+L_{b_l}^{h})+L^{r}_{\theta_m}
\end{align}}
where $L^{r}_{\theta_m}$ is the smoothness constant of the regularizer w.r.t. $\theta_m$; $L_{b_l}^{h}$ and $L_{w_l}^{h}$ are the smoothness constants of the perturbed local embedding $h$ w.r.t. the bias $b_l$ and weight $w_l$; and $L_{\bar{\sigma}}(c_l):={2\sqrt{d}L_{\sigma}^0}/{c_l}$ is the smoothness constant of the neuron at $l$th layer under the uniform perturbation.
\end{theorem}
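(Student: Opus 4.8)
The plan is to build the result from the bottom up: first establish single-layer randomized-smoothing estimates, then propagate smoothness through the composite network \eqref{eqn:localembed} via a composition rule, and finally fold in the smoothness of the loss $\ell$ and the separable regularizer $r$ to obtain $L^{F_c}_{\theta_m}$. Throughout, the two standing hypotheses ($\sigma_l=\sigma$ for all $l$ and $\|w_l\|$ bounded) are what keep the Lipschitz products and the expectations $\EE[\|u_{l-1}\|]$ finite, so I would invoke them explicitly when interchanging expectation and differentiation.

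First I would prove the two base smoothing estimates, following the randomized-smoothing philosophy of \cite{duchi2012randomized,nesterov2017random}. For the output layer $L$, perturbed by $Z_L\sim\cN(0,c^2)$, the smoothed activation is the Gaussian convolution $\bar\sigma(v):=\EE_{Z_L}[\sigma(v+Z_L)]$; differentiating under the integral and pushing the derivative onto the density by Stein's identity (using $\phi'(z)=-(z/c^2)\phi(z)$) gives $\nabla\bar\sigma(v)=(1/c^2)\EE[Z_L\,\sigma(v+Z_L)]$, whose Lipschitz constant I bound by $(L_\sigma^0/c^2)\EE[|Z_L|]\le L_\sigma^0/c$ per coordinate, so that summing over the $d$-dimensional embedding output yields the base constant $L_{b_L}^{h}=L_\sigma^0 d/c$. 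For an interior layer $l$, perturbed by $Z_l\sim\mathcal{U}[-\sqrt3 c_l,\sqrt3 c_l]$, the uniform density is \emph{not} differentiable, so I cannot push the derivative onto it; instead I use the finite-difference form of the convolution derivative, $\partial_v\EE[\sigma(v+Z_l)]=(\sigma(v+\sqrt3 c_l)-\sigma(v-\sqrt3 c_l))/(2\sqrt3 c_l)$, and bound its Lipschitz constant by the telescoping estimate $|\sigma(v_1\pm a)-\sigma(v_2\pm a)|\le L_\sigma^0|v_1-v_2|$, which after accounting for the $d$ coordinates produces the per-layer smoothing constant $L_{\bar\sigma}(c_l)=2\sqrt d\,L_\sigma^0/c_l$.

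Next I would establish the composition rule: if an outer map is $L^0_f$-Lipschitz and $L_f$-smooth and an inner map is $L^0_g$-Lipschitz and $L_g$-smooth, then $f\circ g$ is smooth with constant $L_f(L^0_g)^2+L^0_f L_g$, which follows from the chain rule together with the triangle inequality applied to the gradient difference $\|\nabla(f\circ g)(x_1)-\nabla(f\circ g)(x_2)\|$. Applying this rule in a backward induction over $l=L,L-1,\ldots,1$ yields the recursion for $L_{b_l}^{h}$: the term $L_{b_{l+1}}^{h}\|w_{l+1}\|(L_\sigma^0)^2$ is the smoothness inherited from layer $l+1$ propagated through the affine map and the Lipschitz activation, while the second term is the fresh smoothing injected at layer $l$ by $Z_l$, pre-multiplied by the product $L_\sigma^0\|w_L\|\cdots L_\sigma^0\|w_{l+1}\|$ of Lipschitz constants of the map from layer $l$ to the output. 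The weight bound $L_{w_l}^{h}=\EE[\|u_{l-1}\|]L_{b_l}^{h}$ then follows because the pre-activation $w_lu_{l-1}+b_l+Z_l$ is linear in $w_l$ with slope $u_{l-1}$, so a perturbation of $w_l$ behaves like a perturbation of $b_l$ scaled by $\|u_{l-1}\|$. Finally I assemble the global constant: writing $F_c=\EE_{\mathbf Z}[F(\cdot;\mathbf Z)]$ and interchanging expectation with differentiation, the composition rule applied to $\ell\circ h_m$ contributes $L^\ell_{h_m}(L^0_{h_m})^2$ plus $L^0_\ell\sum_{l}(L^h_{w_l}+L^h_{b_l})$ over all weight and bias blocks, and the separable regularizer adds $L^r_{\theta_m}$, matching \eqref{eq.smoothness}.

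The hardest part, I expect, is the single-layer multidimensional smoothing estimates. Specifically, justifying differentiation under the expectation for the non-differentiable uniform density must be routed through the boundary-evaluation (finite-difference) form rather than a Stein-type identity, and the correct dimension dependence must be tracked carefully---a factor $d$ from the Gaussian first-moment bound at the output layer versus a factor $\sqrt d$ from combining the independent uniform coordinates in Euclidean norm. An error in either constant would propagate multiplicatively through the entire backward layer recursion, so these base cases are where the bookkeeping must be exact.
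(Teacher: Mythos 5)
Your proposal is correct and takes essentially the same route as the paper: a single-neuron randomized-smoothing estimate (your Stein identity for the Gaussian layer is precisely the paper's Leibniz-rule computation pushing the derivative onto the density, and your finite-difference boundary form for the uniform layer is equivalent to the paper's total-variation bound on translated densities, both yielding $L_{\bar\sigma}(c_l)=2\sqrt{d}L_{\sigma}^0/c_l$ and the base constant $L_\sigma^0 d/c$), followed by the same backward induction through the layers via the composition rule $L_{f\circ g}\leq L_f(L_g^0)^2+L_f^0L_g$, the same linear-in-$w_l$ observation giving $L_{w_l}^{h}=\EE[\|u_{l-1}\|]L_{b_l}^{h}$, and the same final assembly with the smooth loss and regularizer. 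You also correctly flag the two genuine technical points the paper addresses in its Lemma on single-neuron smoothness, namely justifying the interchange of expectation and differentiation (the paper uses dominated convergence, exploiting that $\sigma$ is Lipschitz and differentiable almost everywhere) and tracking the dimension factors in the base constants.
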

\vspace{-0.1cm}
Theorem \ref{thm:smoothness} implies that the perturbed loss is smooth w.r.t. the local model $\theta_m$, and a large perturbation (large $c_l$ or $c$) will lead to a smaller smoothness constant.


\subsection{Enforcing differential privacy}
We now connect the perturbed local embedding technique with the private information exchange in Algorithms \ref{alg:Async-SGD-FL}-\ref{alg:ksync-SGD-FL}. 

As local clients keep sending out embedded information, it is essential to prevent any attacker to trace back to a specific individual via this observation. 
Targeting a better trade-off between the privacy and the accuracy, we leverage the Gaussian differential privacy (GDP) developed in \cite{dong2019gaussian}.
\begin{definition}[\cite{dong2019gaussian}]
A mechanism $\mathcal M$ is said to satisfy $\mu$-GDP if for all neighboring datasets $S$ and $S'$, we have
\begin{equation}\label{def.dp}
    T(\mathcal M(S), \mathcal M(S'))\geq T(\mathcal N(0,1),\mathcal N(\mu, 1))
\end{equation}
where the trade-off function $ T(P,Q)(\alpha)=\inf\{\beta_{\phi}:\alpha_{\phi}\leq\alpha\}$, and $\alpha_{\phi}, \beta_{\phi}$ are type I and II errors given a threshold $\phi$.
\end{definition}
Intuitively, $\mu$-GDP guarantees that distinguishing two adjacent datasets via information revealed by $\mathcal M$ is at least as difficult as distinguishing the two distributions $\mathcal N(0,1)$ and $\mathcal N(\mu, 1)$. Smaller $\mu$ means less privacy loss.

\vspace{-0.1cm}
To characterize the level of privacy of our local embedding approaches, we build on the moments accountant technique originally developed in \cite{abadi2016} to establish that adding random neurons endows Algorithm \ref{alg:Async-SGD-FL} with GDP. 

\begin{theorem}\label{thm:privacy}
Under the same set of assumptions as those in Theorem \ref{thm:smoothness}, for client $m$, if we set the variance of the Gaussian random neuron at the $L$-th layer as
\begin{equation}\label{eqn:privacy}
\small
    c={\cal O}\left({N_m\sqrt{K}}/(\mu N)\right)
\end{equation}
where $N_m$ is the size of minibatch used at client $m$, $N$ is the size of the whole batch, $K$ is the number of queries (i.e. the number of data samples processed by $h_m$ at client $m$), then VAFL satisfies
${\mu}$-GDP for the dataset of client $m$.
\end{theorem}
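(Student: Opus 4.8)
The plan is to analyze the per-client release channel as a $K$-fold composition of subsampled Gaussian mechanisms and then invoke the composition and central-limit results for Gaussian differential privacy (GDP) of \cite{dong2019gaussian}, with the moments-accountant bookkeeping of \cite{abadi2016} as the backup tool. Over the course of training, client $m$ reveals $K$ perturbed embeddings $h_{n,m}$, each computed on a minibatch of size $N_m$ drawn from its $N$ local samples; I would treat the whole sequence as a single composed mechanism and solve for the noise scale $c$ that caps the end-to-end privacy loss at $\mu$.

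First I would bound the per-query sensitivity. Fix a query and two neighboring datasets $S,S'$ that differ in a single record. If that record is absent from the current minibatch, the released embedding is identical and there is no privacy loss; otherwise the released quantity is a deterministic post-processing $\sigma_L(\cdot)$ of the noisy output-layer preactivation $w_Lu_{L-1}+b_L+Z_L$, so by the post-processing invariance of DP it suffices to analyze the Gaussian mechanism applied to $w_Lu_{L-1}+b_L$. Using exactly the hypotheses of Theorem \ref{thm:smoothness} — $L_\sigma^0$-Lipschitz activations, bounded weights $\|w_l\|$, and bounded inputs — the map $x\mapsto w_Lu_{L-1}(x)+b_L$ is Lipschitz on a bounded domain, so its $\ell_2$-sensitivity is at most a constant $\Delta$ that depends only on these bounds (the product of the $\|w_l\|$ and $L_\sigma^0$ factors times the input diameter). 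Adding $Z_L\sim\mathcal{N}(0,c^2)$ is then the Gaussian mechanism, which by \cite{dong2019gaussian} is $(\Delta/c)$-GDP for one full-batch release.

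Next I would fold in subsampling and composition. Each query touches a uniformly random minibatch, so the effective subsampling ratio is $p=N_m/N$; the subsampled Gaussian mechanism attenuates the per-step trade-off function, and in the high-noise regime $c\gg\Delta$ (equivalently $(\Delta/c)^2\ll1$, which holds for large $K$ since $c$ grows with $K$) its single-step GDP parameter scales like $p\,\Delta/c$. Composing $K$ such steps, I would use that GDP composes in quadrature — the composition of $\mu_i$-GDP mechanisms is $\sqrt{\sum_i\mu_i^2}$-GDP — together with the central limit theorem for composition of subsampled Gaussian mechanisms in \cite{dong2019gaussian}, which shows the whole sequence is asymptotically $\bar\mu$-GDP with
\begin{equation}
\bar\mu \;\lesssim\; p\sqrt{K}\,\frac{\Delta}{c} \;=\; \frac{N_m}{N}\,\frac{\sqrt{K}\,\Delta}{c}.
\end{equation}
Requiring $\bar\mu\leq\mu$ and solving for the noise scale gives $c=\mathcal{O}\!\left(N_m\sqrt{K}/(\mu N)\right)$, absorbing the constant sensitivity $\Delta$ into the $\mathcal{O}(\cdot)$, which is exactly \eqref{eqn:privacy}.

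The main obstacle I expect lies in the subsampled-composition step, not the sensitivity bound. Quadrature composition is exact only for un-subsampled Gaussian mechanisms; once minibatch subsampling is introduced the per-step trade-off function is no longer Gaussian, so one must either invoke the asymptotic CLT of \cite{dong2019gaussian} — checking that the per-step parameter $p\Delta/c\to0$ at the right rate as $K\to\infty$ and that a Lindeberg-type condition holds — or track the log-moments $\alpha(\lambda)$ of the privacy-loss random variable directly as in the moments accountant and convert the accumulated bound back into a GDP statement. Getting the $p=N_m/N$ amplification and the $\sqrt{K}$ composition factor to surface simultaneously with clean constants is the delicate part; by contrast, the sensitivity analysis follows directly from the bounded-weight and Lipschitz assumptions already imposed in Theorem \ref{thm:smoothness}.
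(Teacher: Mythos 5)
Your proposal is correct and its skeleton matches the paper's: bound the $\ell_2$-sensitivity of the last-layer preactivation $w_Lu_{L-1}+b_L$ via the Lipschitz activations and bounded weights, observe that adding $Z_L\sim\mathcal N(0,c^2)$ makes this a Gaussian mechanism whose output $\sigma_L(\cdot)$ is mere post-processing, and then account for $K$ compositions with subsampling ratio $N_m/N$. Where you genuinely diverge is the composition step: the paper does \emph{not} work with trade-off functions at all, but directly invokes Theorem 1 of \cite{abadi2016} (the moments accountant) to conclude $(\varepsilon,\delta)$-DP provided $\nu=c\,q\sqrt{T\log(1/\delta)}/\varepsilon$, with $q=N_m/N$ and $T=K$, and leaves the conversion from this $(\varepsilon,\delta)$ calibration to the $\mu$-GDP claim of the theorem implicit; you instead compose in the GDP framework via quadrature plus the central-limit theorem for subsampled Gaussian mechanisms in \cite{dong2019gaussian}, which yields the $\mu$-GDP statement directly and makes the $\sqrt{K}$ and $N_m/N$ factors transparent. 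Each route buys something: the paper's gives a non-asymptotic, explicit-constant noise calibration (under the moments-accountant regularity conditions on $q$ and the noise scale) but is, strictly speaking, a proof of $(\varepsilon,\delta)$-DP rather than of the stated GDP guarantee; yours is asymptotic in $K$ and requires checking the CLT/Lindeberg-type conditions you flag, but it is more faithful to the theorem as stated. One small difference in the sensitivity step: the paper's bound $\bar B$ explicitly accumulates terms $\|w_L\|\sum_{l=1}^{L-1}\bigl(\prod_{j=1}^{l}L_{\sigma_j}\sqrt{d_j}\bigr)c_l$ from the intermediate uniform perturbations rather than coupling them away as you do; since the uniform noise has bounded support this only changes the constant $\Delta$ absorbed into the $\mathcal O(\cdot)$, so it does not affect the final scaling $c=\mathcal O\left(N_m\sqrt{K}/(\mu N)\right)$.
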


\vspace{-0.1cm}
Theorem \ref{thm:privacy} demonstrates the trade-off between accuracy and privacy. To increase privacy, i.e., decrease $\mu$ in \eqref{def.dp}, the variance of random neurons needs to be increased (cf. \eqref{eqn:privacy}). However, as the variance of random neurons increases, the variance of the stochastic gradient \eqref{eqn:gradient} also increases, which will in turn lead to slower convergence.


\begin{figure}[t]
\vspace{-0.5cm}
\centering
\includegraphics[width=.38\textwidth]{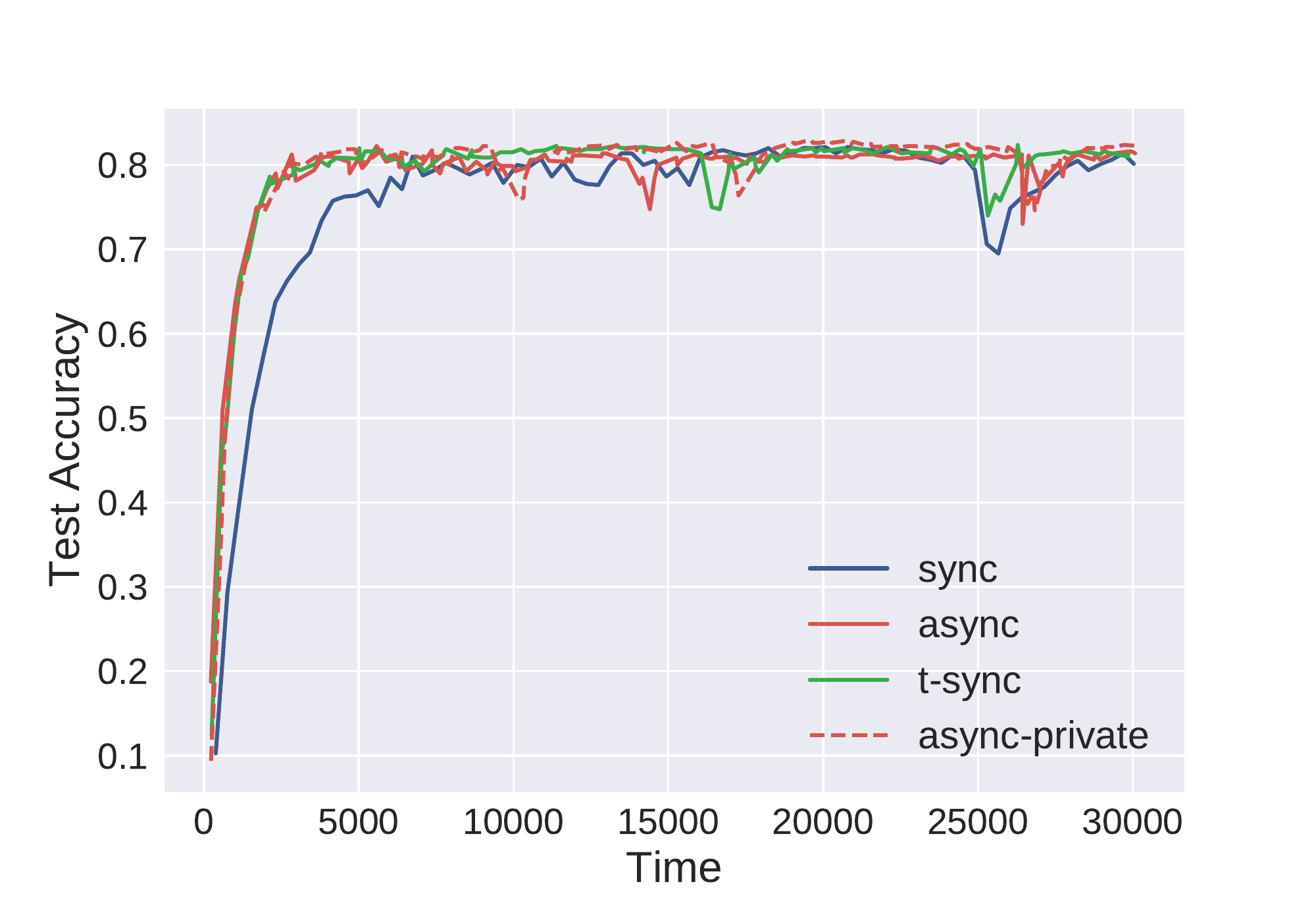}
\vspace{-0.2cm}
\caption{Testing accuracy of VAFL with nonlinear local embedding on \textit{ModelNet40} dataset.}
\label{fig:Nonlinear_acc}
\vspace{-0.3cm}
\end{figure}


\begin{figure}[t]
\vspace{-0.5cm}
\centering
\includegraphics[width=.38\textwidth]{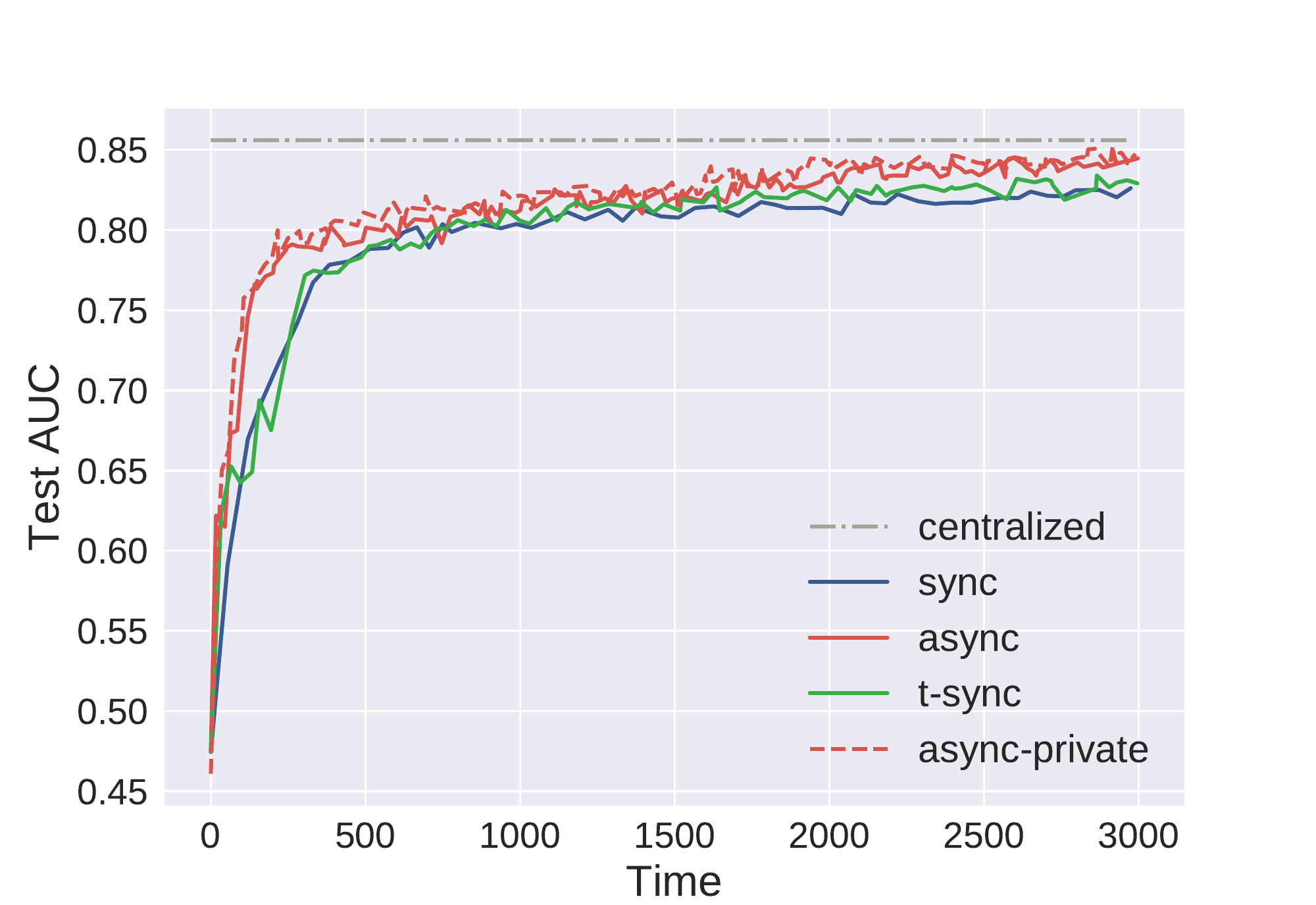}
\vspace{-0.2cm}
\caption{AUC curve of VAFL with local LSTM embedding on \textit{MIMIC-III} clinical care dataset.}
\label{fig:mortality_auc}
\vspace{-0.5cm}
\end{figure}

\section{Numerical tests and remarks}
We benchmark the fully asynchronous version of VAFL ({\bf async}) in Algorithm \ref{alg:Async-SGD-FL}, and $t$-synchronous version of VAFL ({\bf t-sync}) in Algorithm \ref{alg:ksync-SGD-FL} with the synchronous block-wise SGD ({\bf sync}), which requires synchronization and sample index coordination among clients in each iteration. 
We also include private versions of these algorithms via perturbed local embedding technique in Section \ref{sec:randomneuron}. 


\noindent\textbf{VAFL for federated logistic regression.}
We first conduct logistic regression on MNIST, Fashion-MNIST, CIFAR10 and Parkinson disease \cite{sakar2019comparative} datasets. 
The $l_2$-regularizer coefficient is set as $0.001$. 
We select $M=7$ for MNIST and MNIST, $M=8$ for CIFAR10 and $M=3$ for PD dataset. 
The testing accuracy versus wall-clock time is reported in Figures \ref{fig:test1}. The dashed horizontal lines represent the results trained on the centralized (non-federated) model, and the dashed curves represent private variants of considered algorithms with variance $c=0.1$. In all cases, VAFL learns a federated model with accuracies comparable to that of the centralized model that requires collecting raw data. 

\noindent\textbf{VAFL for federated deep learning.}
We first train a neural network modified from MVCNN with 12-view data \cite{wu20153d}. 
We use $M=4$ clients, and each client has 3 views of each object and use a 7-layer CNN as local embedding functions, and server uses a fully connected network to aggregate the local embedding vectors. Results are plotted in Figure \ref{fig:Nonlinear_acc}. 

We also test our VAFL algorithm in MIMIC-III --- an open dataset comprising deidentified health data \cite{johnson2016mimic}. 
We perform the in-hospital mortality prediction as in \cite{Harutyunyan2019} among $M=4$ clients.
Each client uses LSTM as the embedding function. 
In Figure \ref{fig:mortality_auc}, we can still observe that async and $t$-sync VAFL learn a federated model with accuracies comparable to that of the centralized model, and requires less time relative to the synchronous FL algorithm.

\clearpage

\onecolumn
\begin{center}
{\Large \bf Supplementary materials for\\
``VAFL: a Method of Vertical Asynchronous Federated Learning"}
\end{center}

\appendix

In this supplementary document, we first present some supporting lemmas that will be used frequently in this document, and then present the proofs of all the lemmas and theorems in the paper, which is followed by details on our experiments. The content of this supplementary document is summarized as follows. 

\vspace{-1cm}
\addcontentsline{toc}{section}{} 
\part{} 
\parttoc 

\section{Supporting Lemmas}\label{sec.sup_lem}
For notational brevity, we define 
\begin{subequations}
\begin{align}
&G_{0}^k:=\nabla_{\theta_0}F(\theta_0^k, \vtheta^k)\\
& G_m^k:=\nabla_{\theta_m}F(\theta_0^k, \vtheta^k)\\
&g_0^k:=\nabla_{\theta_0}\ell\left(\theta_0^k,h_{n_k,1}^k,\ldots, h_{n_k,M}^k; y_{n_k}\right)\\
& g_m^k:=\nabla_{\theta_m}\ell\left(\theta_0^k,h_{n_k,1}^k,\ldots, h_{n_k,M}^k; y_{n_k}\right)\\
&\hat g_0^k:=\nabla_{\theta_0}\ell(\theta_0,h_{n_k,1}^{k-\tau_{n_k,1}^k},\ldots, h_{n_k,M}^{k-\tau_{n_k,M}^k};y_{n_k})\\
&\hat g_m^k:=\left\{\begin{array}{ll}
    \nabla h_m(\theta_m^k;x_{n_k,m})\nabla_{h_m}\ell(\theta_0^k,h_{n_k,1}^{k-\tau_{n_k,1}^k}\ldots, h_{n_k,M}^{k-\tau_{n_k,M}^k};y_{n_k}) &  \text{if }m=m_k\\
    0 & \text{else}.
\end{array}\right.\\
&\hat\vtheta:=[\theta_1^{k-\tau_{n_k,1}^k};\ldots;\theta_M^{k-\tau_{n_k,M}^k}]\label{eqn:defg}.
\end{align}
\end{subequations}

 To handle the delayed information, we leverage the following Lyapunov function for analyzing VAFL
\begin{equation}\label{eqn:lyapunov1}
    V^k:=F(\theta_0^k, \vtheta^k)+\sum\limits_{d=1}^D\gamma_d\|\vtheta^{k-d+1}-\vtheta^{k-d}\|^2
\end{equation}
where $\{\gamma_d\}$ are a set of constants to be determined later.
\begin{lemma}\label{lemma:l1descent}
Under Assumptions \ref{assump:sample} -- \ref{assump:boundeddelay}, for $\eta_0^k\leq\frac{1}{4L},\eta_m^k\leq\frac{1}{4(L+2\gamma_1)}$, it follows that (with $\gamma_{D+1}=0$)
\begin{align}\label{eqn:l1descent1}
\small
    &\EE V^{k+1}-\EE V^k\leq -\left(\frac{\eta_0^k}{2}-L(\eta_0^k)^2\right)\EE[\|\nabla_{\theta_0}F(\theta_0^k,\vtheta^k)\|^2]\nonumber\\
    &\!-\!\!\sum\limits_{m=1}^M\!q_m\!\left(\!\frac{\eta_{m}^k}{2}-L(\eta_m^k)^2-2\gamma_1(\eta_m^k)^2\!\right)\!\EE[\|\nabla_{\theta_m}F(\theta_0^k,\vtheta^k)\|^2]\nonumber\\
    &\!+\sum\limits_{d=1}^D\left(\!Dc^k\!+\!D\gamma_1\max\limits_m2(\eta_m^k)^2L_m^2\!+\!\gamma_{d+1}\!-\!\gamma_d\right)\nonumber\\
    &\!\times\EE[\|\vtheta^{k+1-d}\!-\!\vtheta^{k-d}\|^2]\nonumber\\
        &\!+\!L(\eta_0^k)^2\sigma_0^2+\sum\limits_{m=1}^Mq_m(L+2\gamma_1)(\eta_m^k)^2\sigma_m^2.
\end{align}
\end{lemma}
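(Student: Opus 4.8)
The plan is to run a one-step Lyapunov-descent argument, handling separately (i) the decrease of $F$ along the joint server/client update and (ii) the shift of the delay-penalty terms $\sum_d\gamma_d\|\vtheta^{k+1-d}-\vtheta^{k-d}\|^2$, and then converting every delayed-gradient discrepancy into exactly those consecutive differences that the penalty is designed to absorb. First I would expand $F(\theta_0^{k+1},\vtheta^{k+1})$ using the $L$-smoothness of Assumption \ref{assump:lip}. Since only the server block $\theta_0$ and the single active block $\theta_{m_k}$ move, the descent inequality produces the two first-order inner products $-\eta_0^k\langle G_0^k,\hat g_0^k\rangle$ and $-\eta_{m_k}^k\langle G_{m_k}^k,\hat g_{m_k}^k+\nabla r(\theta_{m_k}^k)\rangle$, plus the two second-order terms $\tfrac{L}{2}(\eta_0^k)^2\|\hat g_0^k\|^2$ and $\tfrac{L}{2}(\eta_{m_k}^k)^2\|\hat g_{m_k}^k+\nabla r\|^2$. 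Taking expectation, I would average first over the client index via Assumption \ref{assump:client}, which replaces the active-block contributions by $\sum_m q_m(\cdots)$, and then over the sample index via Assumption \ref{assump:sample}, so that $\EE[\hat g_0^k]$ and $\EE[\hat g_m^k+\nabla r]$ equal the true gradients \emph{evaluated at the delayed embeddings}.

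Next I would split each inner product as $-\langle G,\bar G\rangle\le-\tfrac12\|G\|^2+\tfrac12\|G-\bar G\|^2$, where $\bar G$ denotes the true gradient at the delayed point, so that a clean $-\tfrac{\eta_0^k}{2}\|G_0^k\|^2$ (resp. $-\tfrac{\eta_m^k}{2}\|G_m^k\|^2$) appears and the remainder is the squared delay error $\|G_0^k-\bar G_0^k\|^2$. The crucial step is bounding this error: using $L$-Lipschitzness of $\nabla F$ together with the Lipschitz constant of the embedding $h_m$, I would bound the gradient mismatch by the embedding mismatch $\|h_{n_k,m}^k-h_{n_k,m}^{k-\tau}\|$, then by the parameter mismatch $\|\theta_m^k-\theta_m^{k-\tau}\|$, and finally telescope $\theta_m^k-\theta_m^{k-\tau}=\sum_{d=1}^{\tau}(\theta_m^{k-d+1}-\theta_m^{k-d})$. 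Since $\tau\le D$ by Assumption \ref{assump:boundeddelay}, Cauchy--Schwarz over the window contributes a factor $D$ and yields $\sum_{d=1}^D(\text{const})\|\vtheta^{k+1-d}-\vtheta^{k-d}\|^2$; collecting the embedding and Lipschitz constants and the stepsizes into the coefficient $c^k$ gives the $Dc^k$ contribution to the penalty coefficient.

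For the second-order terms I would write $\EE\|\hat g_0^k\|^2\le\|G_0^k\|^2+(\text{delay})+\sigma_0^2$ and analogously for each client, using Assumption \ref{assump:sample} for the variances $\sigma_m^2$ and the same telescoping bound for the delay part. These supply the variance terms $L(\eta_0^k)^2\sigma_0^2$ and $\sum_m q_m(L+2\gamma_1)(\eta_m^k)^2\sigma_m^2$, while the $\|G\|^2$ pieces merge with the first-order $-\tfrac{\eta}{2}\|G\|^2$ to form the coefficients $-(\tfrac{\eta_0^k}{2}-L(\eta_0^k)^2)$ and $-q_m(\tfrac{\eta_m^k}{2}-L(\eta_m^k)^2-2\gamma_1(\eta_m^k)^2)$. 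Finally I would add the penalty shift; reindexing gives
\[\sum_{d=1}^D\gamma_d\|\vtheta^{k+2-d}-\vtheta^{k+1-d}\|^2-\sum_{d=1}^D\gamma_d\|\vtheta^{k+1-d}-\vtheta^{k-d}\|^2=\gamma_1\|\vtheta^{k+1}-\vtheta^k\|^2+\sum_{d=1}^D(\gamma_{d+1}-\gamma_d)\|\vtheta^{k+1-d}-\vtheta^{k-d}\|^2\]
with $\gamma_{D+1}=0$. The new term $\gamma_1\|\vtheta^{k+1}-\vtheta^k\|^2=\gamma_1(\eta_{m_k}^k)^2\|\hat g_{m_k}^k+\nabla r\|^2$, after expectation over $m_k,n_k$, injects the extra $2\gamma_1(\eta_m^k)^2$ into the client gradient coefficient, the $2\gamma_1$ into the variance factor $(L+2\gamma_1)$, and the $D\gamma_1\max_m2(\eta_m^k)^2L_m^2$ (delay part bounded through $L_m$) into the penalty coefficient, while the $\gamma_{d+1}-\gamma_d$ appears directly from the shift.

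The main obstacle is the delay bookkeeping in the middle steps: matching the telescoped-difference coefficients exactly, i.e.\ tracking how the factor $D$ from Cauchy--Schwarz over the delay window and the embedding Lipschitz constants collapse into $Dc^k$, and how the second-order expansion of the $\gamma_1$ penalty term feeds simultaneously into the gradient coefficient, the variance factor, and the $D\gamma_1\max_m2(\eta_m^k)^2L_m^2$ term. The stepsize restrictions $\eta_0^k\le\tfrac1{4L}$ and $\eta_m^k\le\tfrac1{4(L+2\gamma_1)}$ are precisely what keep these gradient coefficients nonnegative, although the stated inequality \eqref{eqn:l1descent1} retains them explicitly.
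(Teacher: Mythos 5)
Your proposal follows the paper's proof essentially step for step: the $L$-smoothness expansion of $F(\theta_0^{k+1},\vtheta^{k+1})$, expectation over $n_k$ (Assumption \ref{assump:sample}) and $m_k$ (Assumption \ref{assump:client}) producing the $q_m$-weighted terms, Young's inequality to peel off a clean $-\tfrac{\eta}{2}\|G\|^2$ plus a squared delay error, the telescoping/Cauchy--Schwarz bound $\|\hat\vtheta^k-\vtheta^k\|^2\le D\sum_{d=1}^D\|\vtheta^{k+1-d}-\vtheta^{k-d}\|^2$ yielding the factor $Dc^k$, and the reindexed penalty shift whose $\gamma_1\|\vtheta^{k+1}-\vtheta^k\|^2=\gamma_1(\eta_{m_k}^k)^2\|\hat g_{m_k}^k\|^2$ term feeds the three destinations ($2\gamma_1(\eta_m^k)^2$ in the gradient coefficient, $2\gamma_1$ in the variance factor, $D\gamma_1\max_m 2(\eta_m^k)^2L_m^2$ in the penalty coefficient) exactly as in the paper's Lemma \ref{lemma:Fdescent} and display \eqref{eqn:decompose2}. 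The only cosmetic deviation is that you apply Young's inequality after taking expectation, treating $\EE[\hat g]$ as ``the true gradient at the delayed point'' --- since the delays $\tau_{n_k,m}^k$ are sample-dependent this needs a small Jensen step, whereas the paper splits $\hat g = g + (\hat g - g)$ pathwise and bounds $\|\hat g_m^k-g_m^k\|\le L_m\|\hat\vtheta^k-\vtheta^k\|$ before taking expectations; both routes land on the same bound.
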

If $\{\gamma_d\}$ are chosen properly as specified in the supplementary materials, the first three terms in the right hand side of \eqref{eqn:l1descent1} is negative. By carefully choosing $\{\eta_0^k,\eta_m^k\}$, we can ensure the convergence of Algorithm \ref{alg:Async-SGD-FL}.

We first quantify the descent amount in the objective value. 
\begin{lemma}\label{lemma:Fdescent}
Under Assumptions \ref{assump:sample}-\ref{assump:client}, the iterates $\{\theta_0^k, \vtheta^k\}$ generated by Algorithm \ref{alg:Async-SGD-FL} satisfy
\begin{align}
\label{eqn:Fdescent}
    \EE[F(\theta_0^{k+1}, \vtheta^{k+1})|\Theta^k]
   & \leq F(\theta_0^k, \vtheta^k)+c_k\EE[\|\hat \vtheta^k-\vtheta^k\|^2|\Theta^k]+L(\eta_0^k)^2\sigma_0^2\nonumber\\
    &+\sum\limits_{m=1}^Mq_mL(\eta_m^k)^2\sigma_m^2 -(\frac{\eta_0^k}{2}-L(\eta_0^k)^2)\|G_0^k\|^2-\sum\limits_{m=1}^Mq_m(\frac{\eta_{m}^k}{2}-L(\eta_m^k)^2)\|G_m^k\|^2
\end{align}
\end{lemma}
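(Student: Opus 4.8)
The plan is to derive the one-step descent directly from the $L$-smoothness of $F$ in Assumption~\ref{assump:lip}, treating the server update and the update of the (randomly chosen) active client jointly. Writing $\Delta_0^k := \theta_0^{k+1}-\theta_0^k = -\eta_0^k\hat g_0^k$ and, for the active client, $\Delta_{m_k}^k := \theta_{m_k}^{k+1}-\theta_{m_k}^k = -\eta_{m_k}^k(\hat g_{m_k}^k+\nabla r(\theta_{m_k}^k))$ while $\Delta_m^k=0$ for $m\neq m_k$, the descent inequality gives
\[
F(\theta_0^{k+1},\vtheta^{k+1}) \le F(\theta_0^k,\vtheta^k) + \langle G_0^k,\Delta_0^k\rangle + \langle G_{m_k}^k,\Delta_{m_k}^k\rangle + \tfrac{L}{2}\big(\|\Delta_0^k\|^2 + \|\Delta_{m_k}^k\|^2\big),
\]
using that the two active blocks are orthogonal, so no cross term appears. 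First I would expand both the linear and the quadratic pieces in terms of $\hat g_0^k$ and $\hat g_{m_k}^k$, keeping the server and client contributions separate since they will ultimately be handled identically.

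The crux is that $\hat g_0^k$ and $\hat g_{m_k}^k$ are evaluated at the \emph{delayed} embeddings $h_{n_k,m}^{k-\tau_{n_k,m}^k}$, so they are biased for $G_0^k,G_{m_k}^k$. I would therefore split each delayed gradient as $\hat g^k = g^k + (\hat g^k - g^k)$, where the no-delay stochastic gradient $g^k$ is unbiased for the true gradient by Assumption~\ref{assump:sample}. For the linear term $\langle G_0^k,-\eta_0^k\hat g_0^k\rangle$, the $g_0^k$ part contributes $-\eta_0^k\|G_0^k\|^2$ in expectation, while the delay part $-\eta_0^k\langle G_0^k,\hat g_0^k-g_0^k\rangle$ is handled by Young's inequality, producing $\tfrac{\eta_0^k}{2}\|G_0^k\|^2$ and a remainder $\tfrac{\eta_0^k}{2}\|\hat g_0^k-g_0^k\|^2$. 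For the quadratic term I would use $\|\hat g_0^k\|^2\le 2\|g_0^k\|^2+2\|\hat g_0^k-g_0^k\|^2$, so that $\tfrac{L}{2}(\eta_0^k)^2\|\hat g_0^k\|^2$ contributes $L(\eta_0^k)^2\|g_0^k\|^2+L(\eta_0^k)^2\|\hat g_0^k-g_0^k\|^2$; the $\|G_0^k\|^2$ coefficients then combine to exactly $-(\tfrac{\eta_0^k}{2}-L(\eta_0^k)^2)$, matching the claim, and the $g_0^k$ term supplies the variance once expectation is taken. The key estimate is to bound every stray $\|\hat g^k-g^k\|^2$ by a multiple of $\|\hat\vtheta^k-\vtheta^k\|^2$: since $\hat g^k$ and $g^k$ differ only by replacing $\theta_m^k$ with $\theta_m^{k-\tau_{n_k,m}^k}$ inside the embeddings (with $\theta_0^k$ fixed), the composite Lipschitz structure of the embeddings $h_m$ and of $\nabla_{\theta_0}\ell,\nabla_{\theta_m}\ell$ yields $\|\hat g^k-g^k\|\le(\mathrm{const})\|\hat\vtheta^k-\vtheta^k\|$, with all such constants and stepsize factors absorbed into the single coefficient $c_k$.

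Finally I would take the conditional expectation $\EE[\cdot\mid\Theta^k]$ over the two sources of randomness at step $k$. Averaging over the sample index $n_k$ makes $g_0^k,g_m^k$ unbiased and, via Assumption~\ref{assump:sample}, turns $\EE\|g_m^k\|^2$ into $\|G_m^k\|^2+\sigma_m^2$, supplying the variance terms $L(\eta_0^k)^2\sigma_0^2$ and $q_mL(\eta_m^k)^2\sigma_m^2$. Averaging over the active client $m_k$ using $\PP(m_k=m)=q_m$ from Assumption~\ref{assump:client} replaces the single active-client contribution by $\sum_{m=1}^M q_m(\cdot)$, producing the factor $q_m$ on each client's gradient and variance term; the regularizer gradient $\nabla r(\theta_{m_k}^k)$ is deterministic and simply completes $\nabla_{\theta_m}\ell+\nabla r$ into $G_m^k$, adding no variance. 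Collecting the surviving delay remainders into $c_k\,\EE[\|\hat\vtheta^k-\vtheta^k\|^2\mid\Theta^k]$ then yields \eqref{eqn:Fdescent}. I expect the main obstacle to be the bias bookkeeping: one must route the entire delay-induced discrepancy into the single $\|\hat\vtheta^k-\vtheta^k\|^2$ term (so that it can later be cancelled by the Lyapunov correction in Lemma~\ref{lemma:l1descent}) without letting it contaminate the negative $\|G_m^k\|^2$ terms or the variance terms, and to confirm that the constants controlling $\|\hat g^k-g^k\|$ are genuinely governed by the smoothness provided by Assumption~\ref{assump:lip}.
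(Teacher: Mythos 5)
Your proposal follows essentially the same route as the paper's proof: the block-wise $L$-smoothness descent on the two active coordinates, the split $\hat g^k = g^k + (\hat g^k - g^k)$ with Young's inequality on the cross term and $\|\hat g^k\|^2 \le 2\|g^k\|^2 + 2\|\hat g^k - g^k\|^2$ on the quadratic term, the variance decomposition $\EE[\|g_m^k\|^2\mid\Theta^k] = \|G_m^k\|^2 + \sigma_m^2$, the two-stage conditional expectation (first over $n_k$, then over $m_k$ with weights $q_m$), and the absorption of all delay-induced error into $c_k\,\EE[\|\hat\vtheta^k-\vtheta^k\|^2\mid\Theta^k]$ via the Lipschitz constants $L_0, L_m$. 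Your explicit handling of the deterministic regularizer term is in fact slightly more careful than the paper, which silently absorbs it.
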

where $\Theta^k$ is the $\sigma$-algebra generated by $\{\theta_0^0,\vtheta^0, \ldots, \theta_0^k,\vtheta^k\}$, and $c_k$ is defined as 
\begin{equation}\label{eqn:const-c}
c_k:=\left(\frac{\eta_0^k}{2}+L(\eta_0^k)^2\right)L_0^2+\max\limits_m\left(\frac{\eta_{m}^k}{2}+L(\eta_m^k)^2\right)L_m^2.
\end{equation}

\begin{proof}
By Assumption \ref{assump:lip}, we have
\begin{align}\label{sec3.eq1}
    &F(\theta_0^{k+1}, \vtheta^{k+1})\nonumber\\
    =&F(\theta_0^k-\eta_0^k\hat g_0^k, \ldots, \theta_{m_k}^k-\hat g_{m_k}^k, \ldots)\nonumber\\
    \leq &F(\theta_0^k, \vtheta^k)-\eta_0^k\dotp{G_0^k, \hat g_0^k)}-\eta_{m_k}^k\dotp{G_{m_k}^k, \hat g_{m_k}^k}+\frac{L(\eta_0^k)^2}{2}\|\hat g_{0}^k\|^2+\frac{L(\eta_{m_k}^k)^2}{2}\|\hat g_{m_k}^k\|^2\nonumber\\
    \leq & \! F(\theta_0^k, \vtheta^k)\!-\eta_0^k\dotp{G_0^k, g_0^k}\!-\eta_0^k\dotp{G_0^k, \hat g_{0}^k-g_{0}^k}\!-\eta_{m_k}^k\dotp{G_{m_k}^k, g_{m_k}^k}\!-\eta_{m_k}^k\dotp{G_{m_k}^k,\hat g_{m_k}^k-g_{m_k}^k}\!+\frac{L(\eta_0^k)^2}{2}\|\hat g_{0}^k\|^2\!+\!\frac{L(\eta_{m_k}^k)^2}{2}\|\hat g_{m_k}^k\|^2\nonumber\\
    \leq& F(\theta_0^k, \vtheta^k)-\eta_0^k\dotp{G_0^k,g_0^k}-\eta_{m_k}^k\dotp{G_{m_k}^k,g_{m_k}^k}+\frac{\eta_0^k}{2}\|G_0^k\|^2+\frac{\eta_{m_k}^k}{2}\|G_{m_k}^k\|^2+L(\eta_0^k)^2\|g_0^k\|^2 + L(\eta_{m_k}^k)^2\|g_{m_k}^k\|^2\nonumber\\
     & +(\frac{\eta_0^k}{2}+L(\eta_0^k)^2)\|\hat g_{0}^k-g_{0}^k\|^2+(\frac{\eta_{m_k}^k}{2}+L(\eta_{m_k}^k)^2)\|\hat g_{m_k}^k-g_{m_k}^k\|^2.
\end{align}
Note that we have 
\begin{align}\label{sec3.eq2}
    \EE\left[\|g_{m_k}^k\|^2|\Theta^k\right]&=\EE\left[\|g_{m_k}^k-G_{m_k}^k+G_{m_k}^k\|^2|\Theta^k\right]\nonumber\\
    &=\EE\left[\|g_{m_k}^k-G_{m_k}^k\|^2|\Theta^k\right]+2\EE\left[\dotp{g_{m_k}^k-G_{m_k}^k, G_{m_k}^k}|\Theta^k\right]+\|G_{m_k}^k\|^2\nonumber\\
    &=\sigma_{m_k}^2+\|G_{m_k}^k\|^2
\end{align}
where the last equality follows from Assumption \ref{assump:sample}.

First we take expectation on \eqref{sec3.eq1} with respect to $n_k$, conditioned on $\Theta^k$ and $m_k=m$, we have
\begin{align}\label{sec3.eq3}
    \EE[F(\theta_0^{k+1}, \vtheta^{k+1})|m_k=m, \Theta^k]
    \leq& F(\theta_0^k, \vtheta^k)-(\frac{\eta_0^k}{2}-L(\eta_0^k)^2)\|G_0^k\|^2-(\frac{\eta_{m}^k}{2}-L(\eta_m^k)^2)\|G_m^k\|^2+L(\eta_0^k)^2\sigma_0^2+L(\eta_m^k)^2\sigma_m^2\nonumber\\
        & +(\frac{\eta_0^k}{2}+L(\eta_0^k)^2)\EE[\|\hat g_{0}^k-g_{0}^k\|^2|m_k=m]+(\frac{\eta_{m}^k}{2}+L(\eta_m^k)^2)\EE[\|\hat g_{m}^k-g_{m}^k\|^2|m_k=m].
\end{align}
Then taking expectation with respect to $m_k$, it follows that  
\begin{align*}
    \EE[F(\theta_0^{k+1}, \vtheta^{k+1})|\Theta^k]
    \leq &F(\theta_0^k, \vtheta^k)-(\frac{\eta_0^k}{2}-L(\eta_0^k)^2)\|G_0^k\|^2-\sum\limits_{m=1}^Mq_m(\frac{\eta_{m}^k}{2}-L(\eta_m^k)^2)\|G_m^k\|^2+L(\eta_0^k)^2\sigma_0^2+\sum\limits_{m=1}^Mq_mL(\eta_m^k)^2\sigma_m^2\\
    &+(\frac{\eta_0^k}{2}+L(\eta_0^k)^2)L_0^2\EE[\|\hat \vtheta^k-\vtheta^k\|^2|\Theta^k]+\sum\limits_{m=1}^Mq_m(\frac{\eta_{m}^k}{2}+L(\eta_m^k)^2)L_m^2\EE[\|\hat \vtheta^k-\vtheta^k\|^2|m_k=m, \Theta^k]\\
    \leq &F(\theta_0^k, \vtheta^k)-(\frac{\eta_0^k}{2}-L(\eta_0^k)^2)\|G_0^k\|^2-\sum\limits_{m=1}^Mq_m(\frac{\eta_{m}^k}{2}-L(\eta_m^k)^2)\|G_m^k\|^2+L(\eta_0^k)^2\sigma_0^2+\sum\limits_{m=1}^Mq_mL(\eta_m^k)^2\sigma_m^2\\
    &+\underbrace{\left(\left(\frac{\eta_0^k}{2}+L(\eta_0^k)^2\right)L_0^2+\max\limits_m\left(\frac{\eta_{m}^k}{2}+L(\eta_m^k)^2\right)L_m^2\right)}_{:=c^k}\EE[\|\hat \vtheta^k-\vtheta^k\|^2|\Theta^k]
\end{align*}
which completes the proof.
\end{proof}

\section{Convergence under bounded delay}

Recalling the definition of $\hat\vtheta^k$ in \eqref{eqn:defg}, if $\tau_{n_k,m}^k\leq D$, then it can be derived that
\begin{equation}
        \|\hat\vtheta^k-\vtheta^k\|^2\leq\sum\limits_{d=1}^DD\left\|\vtheta^{k+1-d}-\vtheta^{k-d}\right\|^2.
\end{equation}

\subsection{Proof of Lemma \ref{lemma:l1descent}}
Recall the definition of $V^k$, that is
\begin{align*}
    V^k=F(\theta_0^k, \vtheta^k)+\sum\limits_{d=1}^D\gamma_d\|\vtheta^{k+1-d}-\vtheta^{k-d}\|^2
\end{align*}
where we initialize the algorithm with $\vtheta^{-D+1}=\cdots=\vtheta^{-1}=\vtheta^0$.
We first decompose $\|\vtheta^{k+1}-\vtheta^k\|^2$ as
\begin{align}\label{eqn:decompose1}
    \|\vtheta^{k+1}-\vtheta^k\|^2=(\eta_{m_k}^k)^2\|\hat g_{m_k}^k\|^2\leq2(\eta_{m_k}^k)^2\|g_{m_k}^k\|^2+2(\eta_{m_k}^k)^2\|\hat g_{m_k}^k-g_{m_k}^k\|^2.
\end{align}
Taking expectation on both sides of \eqref{eqn:decompose1}, and applying \eqref{sec3.eq2} leads to
\begin{align}\label{eqn:decompose2}
    \EE\left[\|\vtheta^{k+1}-\vtheta^k\|^2|\Theta^k\right]&=\sum\limits_{m=1}^M\EE\left[\|\vtheta^{k+1}-\vtheta^k\|^2|m_k=m,\Theta^k\right]\PP(m_k=m)\nonumber\\
    &\leq \sum\limits_{m=1}^M2q_m(\eta_m^k)^2\|G_m^k\|^2+\sum\limits_{m=1}^M2q_m(\eta_m^k)^2\sigma_m^2+2\max\limits_m(\eta_m^k)^2L_m^2\EE[\|\hat\vtheta^k-\vtheta^k\|^2|\Theta^k].
\end{align}

Following Lemma \ref{lemma:Fdescent} in Appendix \ref{sec.sup_lem} and \eqref{eqn:decompose2}, the Lyapunov function $V^k$ satisfies
\begin{align}\label{eqn:l1descent2}
    \EE[V^{k+1}|\Theta^k]-V^k=&\EE[F(\theta_0^{k+1}, \vtheta^{k+1})|\Theta^k]-F(\theta_0^k, \vtheta^k)+\gamma_1\EE[\|\vtheta^{k+1}-\vtheta^k\|^2|\Theta^k]\nonumber\\
    &+\sum\limits_{d=1}^{D-1}(\gamma_{d+1}-\gamma_d)\|\vtheta^{k+1-d}-\vtheta^{k-d}\|^2-\gamma_D\|\vtheta^{k+1-D}-\vtheta^{k-D}\|^2\nonumber\\
    \leq &-\left(\frac{\eta_0^k}{2}-L(\eta_0^k)^2\right)\|G_0^k\|^2-\sum\limits_{m=1}^Mq_m\left(\frac{\eta_{m}^k}{2}-L(\eta_m^k)^2-2\gamma_1(\eta_m^k)^2\right)\|G_m^k\|^2\nonumber\\
    & +L(\eta_0^k)^2\sigma_0^2+\sum\limits_{m=1}^Mq_m(L+2\gamma_1)(\eta_m^k)^2\sigma_m^2\nonumber\\
    & +\sum\limits_{d=1}^{D-1}\left(Dc^k+D\gamma_1\max\limits_m2(\eta_m^k)^2L_m^2+\gamma_{d+1}-\gamma_d\right)\|\vtheta^{k+1-d}-\vtheta^{k-d}\|^2\nonumber\\
    &+\left(Dc^k+D\gamma_1\max\limits_m2(\eta_m^k)^2L_m^2-\gamma_D\right)\|\vtheta^{k+1-D}-\vtheta^{k-D}\|^2.
\end{align}

Since we choose $\eta_0^k,\eta_{m}^k\leq\bar\eta\leq\frac{1}{4(L+2\gamma_1)}$, it follows that $c^k\leq\frac{3}{2}\bar\eta L^2$. 
By taking expectation on both sides of \eqref{eqn:l1descent2}, we have 
\begin{align}\label{eqn:l1descent}
    \EE V^{k+1}-\EE V^k\leq& -\frac{1}{4}\min\{\eta_0^k,q_m\eta_m^k\}\EE[\|\nabla F(\theta_0^k,\vtheta^k)\|^2]+L(\eta_0^k)^2\sigma_0^2+(2\gamma_1+L)\sum\limits_{m=1}^Mq_m(\eta_m^k)^2\sigma_m^2\nonumber\\
    &-\sum\limits_{d=1}^{D-1}\left(\gamma_d-\gamma_{d+1}-\frac{3}{2}D\bar\eta L^2-2D\gamma_1\bar\eta^2L^2\right)\EE\|\vtheta^{k+1-d}-\vtheta^{k-d}\|^2\nonumber\\
    &-\left(\gamma_D-\frac{3}{2}D\bar\eta L^2-2D\gamma_1\bar\eta^2L^2\right)\EE\|\vtheta^{k+1-D}-\vtheta^{k-D}\|^2.
\end{align}

\subsection{Proof of Theorem \ref{thm:nc1}}
Define $\gamma_1=\frac{\frac{3}{2}\bar\eta D^2L^2}{1-2D^2\bar\eta^2L^2}\leq\frac{1}{2}DL$, and $\eta_0^k=\eta_m^k=\eta=\min\{\frac{1}{4(D+1)L}, \frac{c_{\eta}}{\sqrt{K}}\}$. 
Select $\gamma_2,\ldots,\gamma_D$ as follows
\begin{align*}
\gamma_{d+1}=\gamma_d-\frac{3}{2}D\eta L^2-2D\gamma_1\eta^2L^2,~~~ d=1,\ldots, D-1.
\end{align*}
It can be verified that $\gamma_D-\frac{3}{2}D\eta L^2-2D\gamma_1\eta^2L^2\geq 0$. Then $\eqref{eqn:l1descent}$ reduces to
\begin{align}\label{eqn:l1descent3}
    \EE V^{k+1}-\EE V^k\leq& -\frac{1}{4}\min_m q_m\eta\EE[\|\nabla F(\theta_0^k,\vtheta^k)\|^2]+\eta^2L\sigma_0^2+\eta^2(2\gamma_1+L)\sum\limits_{m=1}^Mq_m\sigma_m^2.
\end{align}
By summing over $k=0,\cdots,K-1$ and using $\eta\leq \frac{c_{\eta}}{\sqrt{K}}$, it follows that
\begin{align*}
    \frac{1}{K}\sum\limits_{k=0}^{K-1}\EE[\|\nabla F(\theta_0^k,\vtheta^k)\|^2]&\leq \frac{F^0-F^*+K\eta^2L\sigma_0^2+K\eta^2(2\gamma_1+L)\sum\limits_{m=1}^Mq_m\sigma_m^2}{\frac{1}{4}\min\limits_mq_m\eta K}\\
    &\leq\frac{16DL(F^0-F^*)}{\min\limits_mq_mK}+\frac{4c_{\eta}(F^0-F^*)}{\min\limits_mq_m\sqrt{K}}+\frac{4c_{\eta}L\sigma_0^2+c_{\eta}(8\gamma_1+4L)\sum\limits_{m=1}^Mq_m\sigma_m^2}{\min\limits_mq_m\sqrt{K}}.
\end{align*}
\subsection{Proof of Theorem \ref{thm:sc1}}
By the $\mu$-strong convexity of $F(\theta_0,\vtheta)$, we have
\begin{align}\label{eqn:strongcvx}
    2\mu(F(\theta_0,\vtheta)-F^*)\leq\|\nabla F(\theta_0,\vtheta)\|^2.
\end{align}
Choose $\gamma_d$ such that
\begin{align*}
    &\gamma_d-\gamma_{d+1}-\frac{3}{2}D\bar\eta L^2-2D\gamma_1\bar\eta^2L^2=\frac{\mu}{2}\min\limits_mq_m\bar\eta\gamma_1, ~~~d=1,\ldots, D-1\\
    &\gamma_D-\frac{3}{2}D\bar\eta L^2-2D\gamma_1\bar\eta L^2=\frac{\mu}{2}\min\limits_m q_m\bar\eta\gamma_1.
\end{align*}
Solve the above linear equations above and get
\begin{align*}
    \gamma_1=\frac{\frac{3}{2}\bar\eta D^2L^2}{1-2D^2\bar\eta^2L^2-\frac{\mu}{2}\min\limits_mq_mD\bar\eta},~~~ \gamma_d=(D+1-d)(\frac{3}{2}D\bar\eta L^2+2D\gamma_1\bar\eta L^2+\frac{\mu}{2}\min\limits_mq_m\bar\eta\gamma_1), ~~~d=1,\ldots, D-1.
\end{align*}
If we choose $\eta_0^k=\eta_m^k=\eta^k\leq\bar\eta\leq\frac{1}{4(D+1)L+2\mu\min\limits_mq_mD}$, $\gamma_1\leq 2\bar\eta D^2L^2$, then $\eqref{eqn:l1descent}$ reduces to
\begin{align*}
    \EE V^{k+1}\leq(1-\frac{\mu}{2}\eta^k\min\limits_mq_m)\EE V^k+(\eta^k)^2\left(L\sigma_0^2+(2\gamma_1+L)\sum\limits_{m=1}^Mq_m\sigma_m^2\right).
\end{align*}
Defining $R:=\big(L\sigma_0^2+(2\gamma_1+L)\sum\limits_{m=1}^Mq_m\sigma_m^2\big)$ and $\eta^k=\frac{4}{\mu\min\limits_mq_m(k+K_0)}$, where $K_0=\frac{4(4(D+1)L+2\mu\min\limits_mq_mD)}{\mu\min\limits_mq_m}$, we have
\begin{align*}
    \EE V^k&\leq V^0\prod\limits_{k=0}^{K-1}(1-\frac{\mu}{2}\min\limits_mq_m\eta^k)+R\sum\limits_{k=0}^{K-1}(\eta^k)^2\prod\limits_{j=k+1}^{K-1}(1-\frac{\mu}{2}\min\limits_mq_m\eta^j)\\
    &=V^0\prod\limits_{k=0}^{K-1}\frac{k+K_0-2}{k+K_0}+\frac{16R}{\mu^2\min\limits_mq_m}\sum\limits_{k=0}^{K-1}\frac{1}{(k+K_0)^2}\prod\limits_{j=k+1}^{K-1}\frac{j+K_0-2}{j+K_0}\\
    &\leq \frac{(K_0-2)(K_0-1)}{(K+K_0-2)(K+K_0-1)}V^0+\frac{16R}{\mu^2\min\limits_mq_m}\sum\limits_{k=0}^{K-1}\frac{1}{(k+K_0)^2}\frac{(k+K_0-1)(k+K_0)}{(K+K_0-2)(K+K_0-1)}\\
    &\leq\frac{(K_0-1)^2}{(K+K_0-1)^2}(F(\theta_0^0,\vtheta^0)-F^*)+\frac{16RK}{\mu^2\min\limits_mq_m(K+K_0-1)^2}.
\end{align*}
\section{Convergence under stochastic unbounded delay}
We first present a useful fact. 
Given the definition of $\bar p_m, p_{m,d}$ in Assumption \ref{assump:unboundeddelay}, it can be shown that
\begin{align*}
    &\sum\limits_{s=d}^{\infty}sp_{m,s}=\bar p_m\left(\frac{d\rho^d}{1-\rho}+\frac{\rho^{d+1}}{(1-\rho)^2}\right):=c_{m,d}\\
    &\sum\limits_{s=d}^{\infty}c_{m,s}=\bar p_m\left(\frac{d\rho^d}{(1-\rho)^2}+\frac{2\rho^{d+1}}{(1-\rho)^3}\right)\\
    &\sum\limits_{d=1}^{\infty}c_{m,d}=\bar p_m\left(\frac{\rho}{(1-\rho)^2}+\frac{2\rho^2}{(1-\rho)^3}\right):=c_m.
\end{align*}

For unbounded delay, we have the following relation 
\begin{align*}
    \EE[\|\hat\vtheta^k-\vtheta^k\|^2|\Theta^k]
    &=\sum\limits_{m=1}^M\EE[\|\theta_m^{k-\tau_{n_k,m}^k}-\theta_m^k\|^2|\Theta^k]\\
    &=\sum\limits_{m=1}^M\sum\limits_{s=1}^{\infty}\EE[\|\theta_m^{k-s}-\theta_m^k\||\Theta^k]\PP(\tau_{n_k,m}^k=s)\\
    &\leq\sum\limits_{m=1}^M\sum\limits_{s=1}^{\infty}\sum\limits_{d=1}^ssp_{m,s}\|\theta_m^{k+1-d}-\theta_m^{k-d}\|^2\\
    &=\sum\limits_{m=1}^M\sum\limits_{d=1}^{\infty}c_{m,d}\|\theta_m^{k+1-d}-\theta_m^{k-d}\|^2.
\end{align*}

Similar to \eqref{eqn:decompose2}, we can decompose the difference term as
\begin{align}\label{eqn:decompose3}
    \EE\left[\|\theta_m^{k+1}-\theta_m^k\|^2|\Theta^k\right]\leq 2q_m(\eta_m^k)^2\|G_m^k\|^2+2q_m(\eta_m^k)^2\sigma_m^2+2q_m(\eta_m^k)^2L^2\EE[\|\hat\theta_m^k-\theta_m^k\|^2|\Theta^k].
\end{align}
Following Lemma \ref{lemma:Fdescent} and \eqref{eqn:decompose3}, we have
\begin{align*}
    &\EE[V^{k+1}|\Theta^k]-V^k\\
    =&\EE[F(\theta_0^{k+1}, \vtheta^{k+1})|\Theta^k]-F(\theta_0^k, \vtheta^k)+\sum\limits_{m=1}^M\gamma_{m,1}\EE[\|\theta_m^{k+1}-\theta_m^k\|^2|\Theta^k]+\sum\limits_{m=1}^M\sum\limits_{d=1}^{\infty}(\gamma_{d+1}-\gamma_d)\|\theta_m^{k+1-d}-\theta_m^{k-d}\|^2\\
    \leq &-\left(\frac{\eta_0^k}{2}-L(\eta_0^k)^2\right)\|G_0^k\|^2-\sum\limits_{m=1}^Mq_m\left(\frac{\eta_m^k}{2}-(2\gamma_{m,1}+L)(\eta_m^k)^2\right)\|G_m^k\|^2+L(\eta_0^k)^2\sigma_0^2\\
    &+\sum\limits_{m=1}^Mq_m(\eta_m^k)^2(2\gamma_{m,1}+L)\sigma_m^2 +\sum\limits_{m=1}^M\sum\limits_{d=1}^{k}\left(\left(c^k+2q_m\gamma_{m,1}(\eta_m^k)^2L^2\right)c_{m,d}+\gamma_{m,d+1}-\gamma_{m,d}\right)\|\theta_m^{k+1-d}-\theta_m^{k-d}\|^2.
\end{align*}

If we choose $\eta_0^k,\eta_m^k\leq\bar\eta\leq\frac{1}{4(L+2\max_m\gamma_{m,1})}$, then $c^k\leq 
\frac{3}{2}\bar\eta L^2$. By direct calculation, we have
\begin{align}\label{eqn-pf:lemma3}
    \EE V^{k+1}-\EE V^k\leq&-\frac{1}{4}\min\{\eta_0^k,q_m\eta_m^k\}\EE[\|\nabla F(\theta_0^k,\vtheta^k)\|^2]+L(\eta_0^k)^2\sigma_0^2+\sum\limits_{m=1}^Mq_m(\eta_m^k)^2(2\gamma_{m,1}+L)\sigma_m^2\nonumber\\
    &-\sum\limits_{m=1}^M\sum\limits_{d=1}^{\infty}\left(\gamma_{m,d}-\gamma_{m,d+1}-c_{m,d}\left(\frac{3}{2}\bar\eta L^2+2q_m\gamma_{m,1}\bar\eta^2L^2\right)\right)\EE[\|\theta_m^{k+1-d}-\theta_m^{k-d}\|^2].
\end{align}

If we select $\gamma_{m,d}$ such that 
\begin{align*}
    \left(\frac{3}{2}\bar\eta L^2+2q_m\gamma_{m,1}\bar\eta^2L^2\right)c_{m,d}+\gamma_{m,d+1}-\gamma_{m,d}=-\xi_m c_{m,d}, ~~~m=1,\ldots, M, ~d=1,\ldots, \infty
\end{align*}
then it remains that
\begin{align*}
    \gamma_{m,d}=\sum\limits_{s=d}^{\infty}c_{m,s}\left(\frac{3}{2}\bar\eta L^2+2q_m\gamma_{m,1}\bar\eta^2L^2+\xi_m\right).
\end{align*}

\subsection{Proof of Theorem \ref{thm:nc2}}
We set the parameters as
\begin{equation}
\xi_m=0,\qquad   \qquad c_m=\sum\limits_{d=1}^{\infty}c_{m,d},\qquad\qquad \gamma_{m,1}=\frac{\frac{3}{2}c_m\bar\eta L^2}{1-2c_mq_m\bar\eta^2L^2}\leq2\bar\eta c_mL^2\leq\frac{1}{2}\sqrt{c_m}L 
\end{equation}
and 
\begin{equation}
    \eta_0^k=\eta_m^k=\eta=\min\left\{\frac{1}{4(1+\max_m\sqrt{c_m})L}, \frac{c_{\eta}}{\sqrt{K}}\right\}.
\end{equation}

Plugging these constants into \eqref{eqn-pf:lemma3}, we have
\begin{align}\label{eqn:pf-thm3}
    \EE V^{k+1}-\EE V^k\leq\frac{1}{4}\min\limits_mq_m\eta\EE[\|\nabla F(\theta_0^k,\vtheta^k)\|^2]+\eta^2\sum\limits_{m=1}^Mq_m(2\gamma_{m,1}+L)\sigma_m^2.
\end{align}

By summing \eqref{eqn:pf-thm3} over $k=0,\cdots,K-1$, it follows that
\begin{align*}
    \frac{1}{K}\sum\limits_{k=0}^{K-1}\EE[\|\nabla F(\theta_0^k,\vtheta^k)\|^2]&\leq\frac{F^0-F^*+K\eta^2\sum\limits_{m=1}^M(2\gamma_{m,1}+L)q_m\sigma_m^2}{\frac{1}{4}\min\limits_mq_m\eta K}\\
    &\leq\frac{16(1+\max_m\sqrt{c_m})L(F^0-F^*)}{\min\limits_m q_mK}+\frac{4c_{\eta}(F^0-F^*)}{\min\limits_mq_m\sqrt{K}}+\frac{4c_{\eta}\sum\limits_{m=1}^M(2\gamma_{m,1}+L)q_m\sigma_m^2}{\min\limits_mq_m\sqrt{K}}.
\end{align*}

\subsection{Proof of Theorem \ref{thm:sc2}}
If we set $\xi_m=\frac{1}{4}c_m\bar\eta L^2$ and $\eta^k\leq\bar\eta=\frac{1}{4(1+\max_m\sqrt{c_m})L}$, then
\begin{equation*}
\gamma_{m,1}=\frac{\frac{3}{2}c_m\bar\eta L^2+\xi_m}{1-2c_m\bar\eta^2L^2}= 2c_m\bar\eta L^2\leq\frac{1}{2}\sqrt{c_m}L.
\end{equation*}
Plugging the parameters in \eqref{eqn-pf:lemma3} and using the strong convexity in \eqref{eqn:strongcvx}, we have
\begin{align*}
    \EE V^{k+1}\leq(1-\nu\eta^k)\EE V^k+(\eta^k)^2R
\end{align*}
where $\nu=\inf\limits_{m,d}\{\frac{\xi_m c_{m,d}}{\bar\eta\gamma_{m,d}},\frac{\mu q_m}{2}\}$ and $R:=\big(L\sigma_0^2+(2\gamma_{m,1}+L)\sum\limits_{m=1}^Mq_m\sigma_m^2\big)$.

Choosing $\eta^k=\frac{2}{\nu(k+K_0)}$ with $K_0=\frac{4(1+\max_m\sqrt{c_m})L}{\nu}$, it follows that
\begin{align*}
    \EE V^k&\leq\prod\limits_{k=0}^{K-1}(1-\nu\eta^k)V^0+R\sum\limits_{k=0}^{K-1}(\eta^k)^2\prod\limits_{j=k+1}^{K-1}(1-\nu\eta^j)\\
    &=V^0\prod\limits_{k=0}^{K-1}\frac{k+K_0-2}{k+K_0}+\frac{16R}{\mu^2\min\limits_mq_m}\sum\limits_{k=0}^{K-1}\frac{1}{(k+K_0)^2}\prod\limits_{j=k+1}^{K-1}\frac{j+K_0-2}{j+K_0}\\
    &\leq \frac{(K_0-2)(K_0-1)}{(K+K_0-2)(K+K_0-1)}V^0+\frac{16R}{\mu^2\min\limits_mq_m}\sum\limits_{k=0}^{K-1}\frac{1}{(k+K_0)^2}\frac{(k+K_0-1)(k+K_0)}{(K+K_0-2)(K+K_0-1)}\\
    &\leq\frac{(K_0-1)^2}{(K+K_0-1)^2}(F^0-F^*)+\frac{16RK}{\mu^2\min\limits_mq_m(K+K_0-1)^2}.
\end{align*}

\begin{remark}
To verify the existence of $\nu>0$, we have
\begin{align*}
    \frac{\xi_m c_{m,d}}{\bar\eta\gamma_{m,d}}&=\frac{c_{m,d}}{\sum\limits_{s=d}^{\infty}c_{m,s}}\frac{1}{\bar\eta}\frac{\xi_m}{\frac{3}{2}\bar\eta L^2+2q_m\gamma_{m,1}\bar\eta^2L^2+\xi_m}
    \geq\frac{1}{\bar\eta}\frac{(1-\rho)\xi_m}{3\bar\eta L^2+4q_m\gamma_{m,1}\bar\eta^2L^2+2\xi_m}\\
    &\geq\frac{1}{\bar\eta}\frac{(1-\rho)\xi_m}{5\bar\eta L^2+\xi_m}=\frac{1}{\bar\eta}\frac{(1-\rho)c_m}{20+c_m}
\end{align*}
where we use the fact that $\frac{c_{m,d}}{\sum\limits_{s=d}^{\infty}c_{m,s}}\geq\frac{1-\rho}{2}$. Then $\nu=\min\{\frac{1}{\bar\eta}\frac{(1-\rho)c_m}{20+c_m},\frac{\mu q_m}{2}\}$.
\end{remark}
\section{Convergence results of vertical $t$-synchronous federated learning}
In the $t$-synchronous, we use ${\cal M}^k$ to denote the set of clients that upload at iteration $k$. For notational brevity, we define 
\begin{subequations}
\begin{align*}
 &   \hat g_0^k=\frac{1}{t}\sum\limits_{m\in\mathcal M^k}\nabla_{\theta_0}\ell(\theta_0,h_{n_k,1}^{k-\tau_{n_k(m),1}^k},\ldots, h_{n_k(m),M}^{k-\tau_{n_k(m),M}^k};y_{n_k})\\
  &  \hat g_m^k=\left\{\begin{array}{ll}
\nabla h_m(\theta_m^k;x_{n_k(m),m})\nabla_{h_m}\ell(\theta_0^k,h_{n_k(m),1}^{k-\tau_{n_k(m),1}^k},\ldots, h_{n_k(m),M}^{k-\tau_{n_k(m),M}^k};y_{n_k(m)}), & \text{if } m\in\mathcal M^k;\\
0, & \text{else.}
\end{array}\right.
\end{align*}
\end{subequations}
Similar to Assumption \ref{assump:client}, we assume that
\begin{assumption}\label{assump:tclient}
The probability of client $m$ in the set of uploading clients ${\cal M}_k$ at iteration $k$ is independent of ${\cal M}^{k-1}, \cdots, {\cal M}^1$, and it satisfies 
\begin{align*}
    \PP(m\in\mathcal M^k):=q_m.
\end{align*}
\end{assumption}
\subsection{Connecting with asynchronous case}
Similar to the previous analysis, the objective value satisfies the following inequality
\begin{align*}
F(\theta_0^{k+1},\vtheta^{k+1})
&\leq F(\theta_0^k,\vtheta^k)+\dotp{G_0^k,\theta_m^{k+1}-\theta_0^k}+\dotp{G_m^k, \theta_m^{k+1}-\theta_m^k}+\frac{L}{2}\|\theta_0^k-\vtheta^k\|^2+\sum\limits_{m=1}^M\frac{L}{2}\|\theta_m^{k+1}-\theta_m^k\|^2\\
&=F(\theta_0^k,\vtheta^k)+\dotp{G_0^k, \hat g_0^k}+\sum\limits_{m\in\mathcal M^k}\dotp{G_m^k, \hat g_m^k}+\frac{L(\eta_0^k)^2}{2}\|\hat g_0^k\|^2+\frac{L}{2}\sum\limits_{m\in\mathcal M^k}(\eta_m^k)^2\|\hat g_m^k\|^2\\
&\leq F(\theta_0^k,\vtheta^k)-\eta_0^k\dotp{G_0^k,g_0^k}-\sum\limits_{m\in\mathcal M^k}\eta_m^k\dotp{G_m^k,g_m^k}+\frac{\eta_0^k}{2}\|G_0^k\|^2+\sum\limits_{m\in\mathcal M}\frac{\eta_m^k}{2}\|G_m^k\|^2+L(\eta_0^k)^2\|G_0^k\|^2\\
&~~~~~+\sum\limits_{m\in\mathcal M^k}L(\eta_m^k)^2\|G_m^k\|^2+(\frac{\eta_0^k}{2}+L(\eta_0^k)^2)\|\hat g_0^k-g_0^k\|^2+\sum\limits_{m\in\mathcal M^k}(\frac{\eta_m^k}{2}+L(\eta_m^k)^2)\|\hat g_m^k-g_m^k\|^2.
\end{align*}
And by taking expectation with respect to $\mathcal M^k, n_k(m)$, it follows that (with $t=\sum\limits_{m=1}^Mq_m$)
\begin{align*}
&\EE[F(\theta_0^{k+1}, \vtheta^{k+1})|\Theta^k]\\
    \leq &F(\theta_0^k, \vtheta^k)-(\frac{\eta_0^k}{2}-L(\eta_0^k)^2)\|G_0^k\|^2-\sum\limits_{m=1}^Mq_m(\frac{\eta_{m}^k}{2}-L(\eta_m^k)^2)\|G_m^k\|^2+\frac{1}{t^2}L(\eta_0^k)^2\sigma_0^2+\sum\limits_{m=1}^Mq_mL(\eta_m^k)^2\sigma_m^2\\
    &+\left(\left(\frac{\eta_0^k}{2}+L(\eta_0^k)^2\right)L_0^2+t\max\limits_m\left(\frac{\eta_{m}^k}{2}+L(\eta_m^k)^2\right)L_m^2\right)\EE[\|\hat \vtheta^k-\vtheta^k\|^2|\Theta^k].
\end{align*}

Following the Lyapunov analysis of the asynchronous case, it can be shown that the vertical $t$-synchronous federated learning achieves the same order of convergence rate as in Theorems \ref{thm:nc1}-\ref{thm:sc2}. 

\subsection{Convergence results}
For completeness, we state the convergence results for the vertical $t$-synchronous federated learning as follows.  
\begin{theorem}[Bounded delay, nonconvex]\label{thm:nc1t}
Under Assumptions \ref{assump:sample},\ref{assump:lip},\ref{assump:boundeddelay} and \ref{assump:tclient}, if $\eta_0^k=t\eta_m^k=\min\{\frac{1}{4(1+D)L}, \frac{c_{\eta}}{\sqrt{K}}\}$ with $c_{\eta}\!>\!0$, then we have
\begin{equation}
    \frac{1}{K}\sum\limits_{k=0}^{K-1}\EE[\|\nabla F(\theta_0^k,\vtheta^k)\|^2]={\cal O}\left({1}/{\sqrt{K}}\right).
\end{equation}
\end{theorem}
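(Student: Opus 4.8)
The plan is to mirror the Lyapunov argument used to establish Theorem \ref{thm:nc1}, taking as the starting point the $t$-synchronous descent inequality derived just above in the ``Connecting with asynchronous case'' subsection. That inequality already plays the role of Lemma \ref{lemma:Fdescent} in the asynchronous proof; the only structural differences are the factor $\frac{1}{t^2}$ multiplying $L(\eta_0^k)^2\sigma_0^2$ (coming from the averaging of $\hat g_0^k$ over the $t$ active clients) and the factor $t$ appearing in the coefficient $\big(\frac{\eta_0^k}{2}+L(\eta_0^k)^2\big)L_0^2 + t\max_m(\frac{\eta_m^k}{2}+L(\eta_m^k)^2)L_m^2$ of the delay term $\EE[\|\hat\vtheta^k-\vtheta^k\|^2|\Theta^k]$.

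First I would reuse the same Lyapunov function $V^k := F(\theta_0^k,\vtheta^k) + \sum_{d=1}^D \gamma_d\|\vtheta^{k-d+1}-\vtheta^{k-d}\|^2$ together with the bounded-delay bound $\|\hat\vtheta^k-\vtheta^k\|^2 \le \sum_{d=1}^D D\|\vtheta^{k+1-d}-\vtheta^{k-d}\|^2$. Next I would decompose the one-step movement $\EE[\|\vtheta^{k+1}-\vtheta^k\|^2|\Theta^k]$ exactly as in \eqref{eqn:decompose2}, keeping in mind that now $t$ clients move per iteration, so $\sum_{m=1}^M q_m = t$ rather than $1$; the stepsize prescription $\eta_m^k = \eta_0^k/t$ is precisely what rescales the per-client contributions so that the aggregate movement remains controlled at the same order as in the asynchronous case.

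Substituting this decomposition into the $t$-synchronous descent inequality then yields a Lyapunov recursion of the same shape as \eqref{eqn:l1descent2}. I would fix the constants $\{\gamma_d\}$ by the identical telescoping recipe used for Theorem \ref{thm:nc1}, namely $\gamma_1 = \frac{\frac{3}{2}\bar\eta D^2 L^2}{1-2D^2\bar\eta^2 L^2}$ and $\gamma_{d+1}=\gamma_d - \frac{3}{2}D\eta L^2 - 2D\gamma_1\eta^2 L^2$, so that each coefficient of $\EE\|\vtheta^{k+1-d}-\vtheta^{k-d}\|^2$ is nonpositive and the recursion collapses to $\EE V^{k+1}-\EE V^k \le -c\,\eta\,\EE[\|\nabla F(\theta_0^k,\vtheta^k)\|^2] + \mathcal O(\eta^2)$ for a positive constant $c$ depending on $\{q_m\}$ and $t$. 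Summing over $k=0,\dots,K-1$, telescoping $V^k$, invoking $F^* > -\infty$, and using $\eta \le c_\eta/\sqrt{K}$ then delivers the claimed $\mathcal O(1/\sqrt K)$ rate.

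The main obstacle I anticipate is bookkeeping rather than conceptual: I must verify that the rescaling $\eta_0^k = t\eta_m^k$ together with $\sum_m q_m = t$ leaves the sign conditions $\gamma_d - \frac{3}{2}D\eta L^2 - 2D\gamma_1\eta^2 L^2 \ge 0$ intact, and that the extra factor $t$ in $c^k$ and the factor $\frac{1}{t^2}$ on $\sigma_0^2$ are absorbed cleanly by the stepsize scaling, so the noise terms still enter only at order $\eta^2$ and the negative drift coefficient $c$ stays bounded away from zero. Once these $t$-dependent factors are checked to cancel as intended, the remainder is verbatim the argument of Theorem \ref{thm:nc1}.
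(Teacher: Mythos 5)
Your proposal is correct and follows essentially the same route as the paper: the paper's own treatment derives exactly the $t$-synchronous descent inequality you start from (with the $\frac{1}{t^2}$ factor on $\sigma_0^2$, the factor $t$ in the delay coefficient, and $t=\sum_{m=1}^M q_m$) and then simply asserts that ``following the Lyapunov analysis of the asynchronous case'' gives the rate of Theorem~\ref{thm:nc1}. Your bookkeeping checks---that $\eta_m^k=\eta_0^k/t$ makes the extra factor $t$ in $c^k$ and the per-iteration movement collapse to the asynchronous orders, so the same $\{\gamma_d\}$ choice and telescoping apply---are precisely the details the paper leaves implicit.
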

\begin{theorem}[Bounded delay, strongly convex]\label{thm:sc1t}
Assume that $F$ is $\mu$-strongly convex in $(\theta_0,\vtheta)$. Then under the same assumptions of Theorem \ref{thm:nc1t}, if $\eta^k=\frac{4}{\mu\min\limits_m\sqrt{q_m}(k+K_0)}$ with $K_0=\frac{4(4(D+1)L+\mu\min\limits_m\sqrt{q_m}D)}{\mu t\min\limits_m\sqrt{q_m}}$, then
\begin{equation}
    \EE F\left(\theta_0^K,\vtheta^K\right)-F^*={\cal O}\left({1}/{K}\right).
\end{equation}
\end{theorem}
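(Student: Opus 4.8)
The plan is to replicate the Lyapunov argument used in the proof of Theorem~\ref{thm:sc1}, but starting from the $t$-synchronous one-step descent inequality already established in the subsection ``Connecting with asynchronous case'' rather than from the asynchronous descent Lemma~\ref{lemma:Fdescent}. Those two inequalities have identical structure; the only differences are the factor $1/t^2$ multiplying $\sigma_0^2$, the extra factor $t$ in front of the $\max_m$ term inside the coefficient of $\EE[\|\hat\vtheta^k-\vtheta^k\|^2\mid\Theta^k]$, and the stepsize relation $\eta_0^k=t\eta_m^k$. Because the variance term only affects the additive noise constant and not the sign of the contraction, the entire machinery carries over with modified constants.

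First I would substitute the bounded-delay bound $\|\hat\vtheta^k-\vtheta^k\|^2\le D\sum_{d=1}^{D}\|\vtheta^{k+1-d}-\vtheta^{k-d}\|^2$ (valid under Assumption~\ref{assump:boundeddelay}) into the $t$-synchronous descent inequality, and then add the telescoping correction $\sum_{d=1}^D\gamma_d\|\vtheta^{k+1-d}-\vtheta^{k-d}\|^2$ defining $V^k$. This produces a Lyapunov recursion of exactly the same shape as \eqref{eqn:l1descent}, with the gradient-norm coefficients and the coefficients of the consecutive-difference terms inheriting the $t$-dependent constants. I would then choose $\{\gamma_d\}$ as in the strongly convex analysis of Theorem~\ref{thm:sc1}, namely so that each difference coefficient equals $-\tfrac{\mu}{2}\min_m\sqrt{q_m}\,\bar\eta\gamma_1$; solving the resulting linear system fixes $\gamma_1,\dots,\gamma_D$ and, crucially, pins down the admissible stepsize ceiling $\bar\eta$.

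With these weights, invoking strong convexity \eqref{eqn:strongcvx} lets me fold the negative gradient-norm terms and the negative difference terms into a single contraction $-\nu\eta^k\,\EE V^k$ with $\nu=\tfrac{\mu}{2}\min_m\sqrt{q_m}$, yielding $\EE V^{k+1}\le(1-\nu\eta^k)\EE V^k+(\eta^k)^2R$ for a constant $R$ collecting the (now $t$-scaled) variance contributions. Plugging in $\eta^k=\frac{4}{\mu\min_m\sqrt{q_m}(k+K_0)}$ makes the contraction factor equal to $\frac{k+K_0-2}{k+K_0}$, so the product $\prod_{j}(1-\nu\eta^j)$ telescopes exactly as in the proof of Theorem~\ref{thm:sc1}. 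The same two bounds used there---$\prod_{k=0}^{K-1}\frac{k+K_0-2}{k+K_0}=\frac{(K_0-2)(K_0-1)}{(K+K_0-2)(K+K_0-1)}$ and the matching bound on $\sum_k\frac{1}{(k+K_0)^2}\prod_{j>k}\frac{j+K_0-2}{j+K_0}$---give $\EE V^K={\cal O}(1/K^2)V^0+{\cal O}(1/K)$. Since $V^K\ge F(\theta_0^K,\vtheta^K)$, this is precisely $\EE F(\theta_0^K,\vtheta^K)-F^*={\cal O}(1/K)$.

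I expect the only genuine obstacle to be bookkeeping rather than conceptual: verifying that the prescribed $K_0=\frac{4(4(D+1)L+\mu\min_m\sqrt{q_m}D)}{\mu t\min_m\sqrt{q_m}}$ is exactly the value guaranteeing $\eta^k\le\bar\eta$ for all $k$, where $\bar\eta$ is the threshold emerging from the $t$-modified constant $c^k$ (with its extra factor $t$) and the relation $\eta_0^k=t\eta_m^k$. Tracking how the $t$ factors propagate through $\bar\eta$, $\gamma_1$, and the $\min_m\sqrt{q_m}$ versus $\min_m q_m$ normalizations is the delicate step; once the stepsize ceiling is matched to $K_0$, the nonnegativity of all the $\gamma_d$ difference coefficients, and hence the whole recursion, follows automatically.
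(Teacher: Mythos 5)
Your proposal matches the paper's own treatment: the paper derives the $t$-synchronous one-step descent inequality and then simply states that ``following the Lyapunov analysis of the asynchronous case'' the same rates hold, which is exactly the argument you spell out --- the same Lyapunov function $V^k$, the same choice of the weights $\gamma_d$ forcing each difference coefficient to equal $\tfrac{\mu}{2}\min_m\sqrt{q_m}\,\bar\eta\gamma_1$, the same contraction $\EE V^{k+1}\le(1-\nu\eta^k)\EE V^k+(\eta^k)^2R$, and the same telescoping-product bounds as in the proof of Theorem~\ref{thm:sc1}. Your flagged bookkeeping of the $t$-dependent constants (the $1/t^2$ factor on $\sigma_0^2$, the extra $t$ in the coefficient of $\EE[\|\hat\vtheta^k-\vtheta^k\|^2\mid\Theta^k]$, and the relation $\eta_0^k=t\eta_m^k$ feeding into $\bar\eta$ and $K_0$) is precisely the part the paper leaves implicit, so the proposal is correct and essentially identical to the paper's route.
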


\begin{theorem}[Unbounded stochastic delay, nonconvex]\label{thm:nc2t}
Under Assumptions \ref{assump:sample},\ref{assump:lip},\ref{assump:unboundeddelay} and \ref{assump:tclient}, if we choose $\eta_0^k=t\eta_m^k=\min\big\{\frac{1}{4(1+\min_m\sqrt{c_m})L}, \frac{c_{\eta}}{\sqrt{K}}\big\}$, then we have
\begin{equation}
    \frac{1}{K}\sum\limits_{k=0}^{K-1}\EE[\|\nabla F(\theta_0^k,\vtheta^k)\|^2]={\cal O}\left({1}/{\sqrt{K}}\right).
\end{equation}
\end{theorem}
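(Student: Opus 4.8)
The plan is to run the unbounded-delay nonconvex argument of Theorem \ref{thm:nc2} essentially verbatim, feeding it the $t$-synchronous one-step descent inequality derived in the preceding ``Connecting with asynchronous case'' subsection in place of Lemma \ref{lemma:Fdescent}. That inequality differs from its asynchronous counterpart in only two places: the server-variance term is reduced to $\tfrac{1}{t^2}L(\eta_0^k)^2\sigma_0^2$ because $\hat g_0^k$ is an average of $t$ independent samples, and the multiplier of $\EE[\|\hat\vtheta^k-\vtheta^k\|^2|\Theta^k]$ now carries an extra factor $t$ on the local-model part, namely $c^k=(\tfrac{\eta_0^k}{2}+L(\eta_0^k)^2)L_0^2+t\max_m(\tfrac{\eta_m^k}{2}+L(\eta_m^k)^2)L_m^2$. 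Neither change affects the negative drift terms $-(\tfrac{\eta_0^k}{2}-L(\eta_0^k)^2)\|G_0^k\|^2$ and $-\sum_m q_m(\tfrac{\eta_m^k}{2}-L(\eta_m^k)^2)\|G_m^k\|^2$, so the structure needed for the Lyapunov recursion is preserved.

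First I would reuse, unchanged, the unbounded-delay bound $\EE[\|\hat\vtheta^k-\vtheta^k\|^2|\Theta^k]\le\sum_{m=1}^M\sum_{d=1}^{\infty}c_{m,d}\|\theta_m^{k+1-d}-\theta_m^{k-d}\|^2$ together with the geometric-tail constants $c_{m,d},c_m$ of Assumption \ref{assump:unboundeddelay}; here one only has to note that under Assumption \ref{assump:tclient} each client still activates with marginal probability $q_m$ and its per-sample delay still obeys $\PP(\tau_{n_k,m}^k=d)\le\bar p_m\rho^d$, so the same tail sums apply. Next I would invoke the per-coordinate decomposition \eqref{eqn:decompose3} for $\EE[\|\theta_m^{k+1}-\theta_m^k\|^2|\Theta^k]$, which involves only $q_m$ and $\eta_m^k$ and is therefore unchanged. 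Substituting both into the $t$-synchronous descent inequality and collecting the coefficients of each difference term $\|\theta_m^{k+1-d}-\theta_m^{k-d}\|^2$ yields the exact analogue of \eqref{eqn-pf:lemma3}, with the single modification that the inflated $c^k$ propagates the factor $t$ into the coefficient $\tfrac{3}{2}\bar\eta L^2$.

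I would then set the telescoping weights $\gamma_{m,d}=\sum_{s=d}^{\infty}c_{m,s}(\tfrac{3}{2}\bar\eta L^2+2q_m\gamma_{m,1}\bar\eta^2 L^2)$ with $\xi_m=0$ exactly as in the proof of Theorem \ref{thm:nc2}, so that every delay-difference term telescopes to a nonpositive contribution and drops out of the bound. The relation $\eta_0^k=t\eta_m^k$ is precisely what absorbs the extra factor $t$ in $c^k$: writing everything in terms of the common scale $\eta$ and using the threshold $\eta\le\tfrac{1}{4(1+\min_m\sqrt{c_m})L}$ keeps the coefficients $\tfrac{\eta_0^k}{2}-L(\eta_0^k)^2$ and $\tfrac{\eta_m^k}{2}-(2\gamma_{m,1}+L)(\eta_m^k)^2$ positive. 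With these choices the recursion collapses to the clean form $\EE V^{k+1}-\EE V^k\le-\tfrac14\min_m q_m\,\eta\,\EE\|\nabla F(\theta_0^k,\vtheta^k)\|^2+\eta^2R$ matching \eqref{eqn:pf-thm3}; summing over $k=0,\dots,K-1$, telescoping $V^k$, and using $\eta\le c_\eta/\sqrt{K}$ gives the stated $\mathcal O(1/\sqrt{K})$ rate.

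The main obstacle I anticipate is bookkeeping rather than conceptual: confirming that the factor $t$ inherited by $c^k$ and the client-stepsize rescaling $\eta_m^k=\eta_0^k/t$ combine so that the bound $c^k\le\tfrac32\bar\eta L^2$ and the consequent choice of $\{\gamma_{m,d}\}$ still leave nonnegative slack in every difference coefficient simultaneously across all $m$ and all $d$. In particular one should verify that the stepsize threshold is effectively governed by $\max_m\sqrt{c_m}$ (so that $2\gamma_{m,1}+L$ stays below the stepsize inverse for the worst client, via $\gamma_{m,1}\le\tfrac12\sqrt{c_m}L$), and that the averaged server variance $\tfrac{1}{t^2}\sigma_0^2$ only improves the constant in the final bound, leaving the order unaffected.
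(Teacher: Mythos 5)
Your proposal is correct and follows essentially the same route as the paper: the paper's own proof derives exactly the $t$-synchronous one-step descent inequality you cite (with the $\tfrac{1}{t^2}L(\eta_0^k)^2\sigma_0^2$ server-variance term and the factor $t$ on the staleness coefficient $c^k$) and then declares that the Lyapunov analysis of Theorem~\ref{thm:nc2} carries over, which is precisely what you execute, including reusing the geometric-tail bound on $\EE[\|\hat\vtheta^k-\vtheta^k\|^2|\Theta^k]$, the decomposition \eqref{eqn:decompose3}, the weights $\gamma_{m,d}$ with $\xi_m=0$, and the rescaling $\eta_m^k=\eta_0^k/t$ to absorb the factor $t$. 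Your remark that the stepsize threshold is really governed by $\max_m\sqrt{c_m}$ (the $\min_m\sqrt{c_m}$ in the statement appearing to be a typo) is also consistent with the paper's proof of Theorem~\ref{thm:nc2}.
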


\begin{theorem}[Unbounded stochastic delay, strongly convex]\label{thm:sc2t}
Assume that $F$ is $\mu$-strongly convex in $(\theta_0,\vtheta)$. Then under the same assumptions of Theorem \ref{thm:nc2t}, if $\eta_0^k=\eta_m^k=\frac{2}{\nu(k+K_0)}$ where $K_0=\frac{4(1+\max_m\sqrt{c_m})L}{t\nu}$ and $\nu$ is a positive constant depending on $\mu, L, \bar p_m, \rho$, then it follows that
\begin{equation}
    \EE F\left(\theta_0^K,\vtheta^K\right)-F^*={\cal O}\left({1}/{K}\right).
\end{equation}
\end{theorem}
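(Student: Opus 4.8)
The plan is to establish Theorem~\ref{thm:sc2t} as the $t$-synchronous analogue of Theorem~\ref{thm:sc2}, reusing the strong-convexity contraction argument of the asynchronous unbounded-delay case and feeding it the $t$-synchronous one-step descent inequality. First I would invoke the per-iteration bound already derived in the subsection connecting the $t$-synchronous and asynchronous cases, namely the estimate for $\EE[F(\theta_0^{k+1},\vtheta^{k+1})\mid\Theta^k]$, which is structurally identical to the asynchronous descent of Lemma~\ref{lemma:Fdescent} except that the server variance term carries a factor $1/t^2$ and the delayed coupling term $\EE[\|\hat\vtheta^k-\vtheta^k\|^2\mid\Theta^k]$ picks up an extra factor $t$ on the local-smoothness contribution. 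The prescribed stepsize coupling $\eta_0^k=t\eta_m^k$ is chosen precisely so that, after substitution, the server update acts with an effective stepsize $\eta_m^k$ and these two $t$-factors are reconciled with the asynchronous template; verifying this reconciliation quantitatively is part of the bookkeeping below.

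Next I would carry over the unbounded-delay Lyapunov machinery essentially verbatim. Using the Lyapunov function $V^k$ with the per-client, per-delay weights $\gamma_{m,d}$, I would bound the delayed coupling term by $\sum_{m}\sum_{d}c_{m,d}\|\theta_m^{k+1-d}-\theta_m^{k-d}\|^2$ exactly as in the stochastic-unbounded-delay derivation, where the $c_{m,d}$ are the moment sums of the geometric-tail delay distribution of Assumption~\ref{assump:unboundeddelay}. This reproduces an inequality of the form \eqref{eqn-pf:lemma3}. I would then select the weights $\gamma_{m,d}$ through the same recursion with a strictly positive slack $\xi_m=\tfrac14 c_m\bar\eta L^2$, which forces each delay-difference term to appear with a negative coefficient proportional to $\xi_m c_{m,d}$ while keeping $\gamma_{m,1}=2c_m\bar\eta L^2\le\tfrac12\sqrt{c_m}L$ bounded. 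Applying the strong-convexity bound \eqref{eqn:strongcvx} to the remaining gradient-norm terms then yields the linear contraction $\EE V^{k+1}\le(1-\nu\eta^k)\EE V^k+(\eta^k)^2R$ with $R=L\sigma_0^2+(2\gamma_{m,1}+L)\sum_m q_m\sigma_m^2$ and $\nu=\inf_{m,d}\{\xi_m c_{m,d}/(\bar\eta\gamma_{m,d}),\,\mu q_m/2\}$.

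Finally I would solve this scalar recursion with $\eta^k=2/(\nu(k+K_0))$ and the $t$-adjusted $K_0=4(1+\max_m\sqrt{c_m})L/(t\nu)$, telescoping the product $\prod_j(1-\nu\eta^j)$ exactly as in the proof of Theorem~\ref{thm:sc2} to reach the $\mathcal{O}(1/K)$ bound on $\EE F(\theta_0^K,\vtheta^K)-F^*$. The main obstacle I anticipate is the bookkeeping of the $t$-factors: I must verify that the admissibility conditions (the analogue of $\eta_0^k,\eta_m^k\le\bar\eta$ ensuring $c^k\le\tfrac32\bar\eta L^2$ and the solvability of the $\gamma_{m,d}$ recursion) survive once $\eta_0^k=t\eta_m^k$, and, most delicately, that the positive-$\nu$ argument of the Remark following Theorem~\ref{thm:sc2} still goes through with the $t$-modified constants, i.e.\ that $\xi_m c_{m,d}/(\bar\eta\gamma_{m,d})$ remains bounded below by a positive constant using $c_{m,d}/\sum_{s=d}^{\infty}c_{m,s}\ge(1-\rho)/2$.
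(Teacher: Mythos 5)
Your proposal follows essentially the same route as the paper, which proves Theorem~\ref{thm:sc2t} precisely by deriving the $t$-synchronous descent inequality (with the $1/t^2$ factor on the server variance and the extra factor $t$ on the delayed-coupling term) in its ``connecting with the asynchronous case'' subsection and then rerunning the unbounded-delay Lyapunov contraction of Theorem~\ref{thm:sc2} verbatim, with $\xi_m=\tfrac{1}{4}c_m\bar\eta L^2$, the same $\nu=\inf_{m,d}\{\xi_m c_{m,d}/(\bar\eta\gamma_{m,d}),\,\mu q_m/2\}$, and the $t$-adjusted $K_0$. One minor correction: the statement of Theorem~\ref{thm:sc2t} prescribes $\eta_0^k=\eta_m^k$ (unlike Theorem~\ref{thm:nc2t}, which uses $\eta_0^k=t\eta_m^k$), so your appeal to the $t$-coupled stepsize should be dropped or adapted, though this bookkeeping change does not affect the contraction argument or the $\mathcal{O}(1/K)$ conclusion.
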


\section{Proof of Theorem \ref{thm:smoothness}}
Before proceeding to the proof of Theorem \ref{thm:smoothness}, we first present the smoothness of a single neuron in the following lemma.
\begin{lemma}\label{lemma:singlesmoothness}
If $\sigma(x)$ is $L_{\sigma}^0$-Lipschitz continuous and differentiable almost everywhere, $Z$ is a continuous random variable with pdf $\mu(Z)$, then $\bar\sigma(x):=\EE\sigma(x+Z)$ is differentiable with Lipschitz continuous gradient $\nabla\bar\sigma(x)=\EE\nabla\sigma(x+Z)$. 
\end{lemma}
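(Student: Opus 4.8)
The plan is to exploit the convolution structure of $\bar\sigma$. Writing $\bar\sigma(x)=\int\sigma(x+z)\mu(z)\,dz$ and substituting $y=x+z$ gives $\bar\sigma(x)=\int\sigma(y)\mu(y-x)\,dy$, so $\bar\sigma$ is the convolution of the merely Lipschitz function $\sigma$ with the density $\mu$. The smoothing will come entirely from $\mu$: the $x$-dependence can be moved from the nonsmooth factor $\sigma$ onto the regular factor $\mu$ by this change of variables, and the gradient bound $\|\nabla\sigma\|\le L_{\sigma}^0$ (valid a.e., since $\sigma$ is Lipschitz and differentiable almost everywhere by hypothesis) then lets me control everything through a norm of $\mu$ alone.

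First I would make sense of the candidate gradient $g(x):=\EE\nabla\sigma(x+Z)=\int\nabla\sigma(y)\mu(y-x)\,dy$. Let $\mathcal{B}$ be the Lebesgue-null set where $\sigma$ fails to be differentiable; since $Z$ has a density, $x+Z\in\mathcal{B}$ with probability zero, so the integrand is defined almost surely and $\|g(x)\|\le L_{\sigma}^0$. To identify $\nabla\bar\sigma$ with $g$, fix a unit vector $e$ and observe that $s\mapsto\sigma(x+se+Z)$ is absolutely continuous, so for almost every $Z$ the fundamental theorem of calculus gives $\sigma(x+te+Z)-\sigma(x+Z)=\int_0^t\langle\nabla\sigma(x+se+Z),e\rangle\,ds$. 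Taking expectations and applying Fubini (the integrand is bounded by $L_{\sigma}^0$) yields $\bar\sigma(x+te)-\bar\sigma(x)=\int_0^t\langle g(x+se),e\rangle\,ds$; dividing by $t$, sending $t\to0$, and using continuity of $g$ (established below) identifies the directional derivative as $\langle g(x),e\rangle$. Since this holds in every direction and $g$ is continuous, $\bar\sigma\in C^1$ with $\nabla\bar\sigma=g$.

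The main work, and the key quantitative step, is the Lipschitz bound on $g$. Here I would use the change of variables above to write $g(x)-g(x')=\int\nabla\sigma(y)\big(\mu(y-x)-\mu(y-x')\big)\,dy$, then pull out the uniform bound to obtain $\|g(x)-g(x')\|\le L_{\sigma}^0\,\|\mu(\cdot-x)-\mu(\cdot-x')\|_{L^1}$. Thus the gradient is Lipschitz exactly with the constant governing the $L^1$-modulus of continuity of the density under translation. For the perturbations in \eqref{eq.pertb} this modulus is finite and explicit: a Gaussian density of variance $c^2$ satisfies $\|\mu(\cdot-x)-\mu(\cdot)\|_{L^1}=O(\|x\|/c)$ (bounded by $\|\nabla\mu\|_{L^1}$), while a uniform density on an interval of half-width $\sqrt{3}\,c_l$ gives the matching-region estimate $O(\|x\|/c_l)$. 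These are precisely what produce the smoothness constant $L_{\bar{\sigma}}(c_l)=2\sqrt{d}L_{\sigma}^0/c_l$ and the $1/c$ scaling in Theorem \ref{thm:smoothness}; the same estimate also furnishes the continuity of $g$ used above.

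The hard part is this last estimate. Because $\sigma$ is nonsmooth, the $x$-dependence must be transferred entirely onto $\mu$, so the conclusion genuinely requires the density to have a finite translation modulus (equivalently, bounded variation, i.e.\ a derivative in $L^1$). A generic ``continuous random variable'' need not have this, so the statement should be read together with the specific Gaussian and uniform choices in \eqref{eq.pertb}, for which the bound and its favorable $1/c$ dependence hold. The passage from a single neuron to the full composite embedding $h_m$ is then a routine layerwise induction, which is what assembles the recursion \eqref{eq.smoothness} in Theorem \ref{thm:smoothness}.
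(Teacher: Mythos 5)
Your proposal is correct and essentially reproduces the paper's proof: the paper likewise justifies $\nabla\bar\sigma(x)=\EE\nabla\sigma(x+Z)$ by differentiating under the integral with the uniform bound $L_{\sigma}^0$ (via dominated convergence on difference quotients, a minor variant of your FTC--Fubini step), and then bounds $\|\nabla\bar\sigma(x)-\nabla\bar\sigma(x')\|\le L_{\sigma}^0\int|\mu(y-x)-\mu(y-x')|\,dy$, handling the uniform case exactly as you do and the Gaussian case by the equivalent integration-by-parts bound $L_{\sigma}^0\bigl(\int\|\nabla\mu(y)\|\,dy\bigr)\|x-x'\|$. Your caveat that the lemma as stated is too general --- the Lipschitz-gradient conclusion genuinely requires the density to have a finite $L^1$ translation modulus, which the paper only verifies for the uniform and Gaussian perturbations of \eqref{eq.pertb} --- is accurate and matches what the paper's proof actually establishes.
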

\begin{proof}
We first prove that $\bar\sigma(x)$ is smooth and $\EE\nabla\sigma(x+Z)=\nabla\bar\sigma(x)$.
\begin{align*}
    \frac{\EE_Z\sigma(x+\delta v+Z)-\EE_Z\sigma(x+Z)}{\delta}=\int_{\RR^d}\frac{\sigma(x+\delta v+Z)-\sigma(x+Z)}{\delta}\mu(Z)dZ
\end{align*}
Since $\sigma$ is differentiable almost everywhere, for any fixed $x\in\RR^d$ and directional vector $v\in\RR^d$, we have
\begin{align*}
    \lim_{\delta\rightarrow 0}\frac{\sigma(x+\delta v+Z)-\sigma(x+Z)}{\delta}=v^{\top}\nabla\sigma(x+Z)=\sum\limits_{i=1}^n\frac{\partial \sigma}{\partial x_i}(x+Z)v_i\quad\quad a.e.
\end{align*}
and
\begin{align*}
    \int_{\RR^d}\left|\frac{\sigma(x+\delta v+Z)-\sigma(x+Z)}{\delta}\right|\mu(Z)dZ\leq\int_{\RR^d}L_{\sigma}^0\mu(Z)dZ=L_{\sigma}^0.
\end{align*}
Then by dominated convergence theorem, when taking $\delta\rightarrow 0$, it follows that
\begin{align*}
&\frac{\partial\bar\sigma}{\partial x_i}(x)=\int_{\RR^d}\frac{\partial\sigma}{\partial x_i}(x+Z)\mu(Z)dZ,\\
&\frac{\partial\bar\sigma}{\partial v}(x)=\int_{\RR^d}\sum\limits_{i=1}^d\frac{\partial \sigma}{\partial x_i}(x+Z)v_i\mu(Z)dZ=\sum\limits_{i=1}^n\frac{\partial\bar\sigma}{\partial x_i}(x)v_i.
\end{align*}
Therefore, $\bar\sigma(x)$ is differentiable, that is
\begin{align*}
    \nabla\bar\sigma(x)=\int_{\RR^d}\nabla\sigma(x+Z)\mu(Z)dZ=\EE_Z\nabla\sigma(x+Z).
\end{align*}
Next we derive the smoothness constant of $\bar\sigma(x)$. We focus on the uniform distribution and the Gaussian distribution. 

\textbf{Case I.} Assume that the uniform distribution $Z\sim\mathcal U[-\frac{c}{2},\frac{c}{2}]^d$, i.e., $\mu(Z)=\frac{1}{c^d}\mathbbm{1}_{\{-\frac{c}{2}\leq Z_i\leq \frac{c}{2},1\leq i\leq d\}}(Z)$.
\begin{align*}
    \|\nabla\bar\sigma(x)-\nabla\bar\sigma(x')\|&=\left\|\int_{\RR^d}\nabla\sigma(x+y)\mu(Z)dZ-\int_{\RR^d}\nabla\sigma(x'+Z)\mu(Z)dZ\right\|\\
    &=\left\|\int_{\RR^d}\nabla\sigma(y)(\mu(y-x)-\mu(y-x'))dy\right\|\leq L_{\sigma}^0\int_{\RR^d}\left|\mu(y-x)-\mu(y-x')\right|dy\\
    &\leq \frac{2\sqrt{d}L_{\sigma}^0}{c}\|x-x'\|:=L_{\bar{\sigma}}(c)\|x-x'\|
\end{align*}
where the smoothness constant is defined as $L_{\bar{\sigma}}(c):=\frac{2\sqrt{d}L_{\sigma}^0}{c}$.

\textbf{Case II.} Assume that the Gaussian distribution $Z\sim\mathcal N(0, c^2I_d)$, i.e., $\mu(Z)=\frac{1}{\sigma\sqrt{2\pi}}e^{-\frac{\|Z\|^2}{2\sigma^2}}$. 
\begin{align*}
    \bar\sigma(x)=\int_{\RR^d}\sigma(x+Z)\mu(Z)dZ=\int_{Z\in\RR^d}\sigma(y)\mu(y-x)dy.
\end{align*}
By the Leibniz rule, we have
\begin{align*}
    \nabla\bar{\sigma}(x)=-\int_{\RR^d}\sigma(y)\nabla\mu(y-x)dy=-\int_{\RR^d}\sigma(y+x)\nabla\mu(y)dy.
\end{align*}
Then it follows
\begin{align*}
    \|\nabla\bar\sigma(x)-\nabla\bar\sigma(x')\|&=\left\|\int_{\RR^d}(\sigma(y+x)-\sigma(y+x'))\nabla\mu(y)dy\right\|\\
    &\leq L_{\sigma}^0\left(\int_{\RR^d}\|\nabla\mu(y)\|dy\right)\|x-x'\|\\
    &=\frac{L_{\sigma}^0d}{c}\|x-x'\|:= L_{\bar{\sigma}}(c)\|x-x'\|
\end{align*}
where the smoothness constant is $L_{\bar{\sigma}}(c):=\frac{L_{\sigma}^0d}{c}$.
\end{proof}

\subsection{Proof of Theorem \ref{thm:smoothness}}
Building upon Lemma \ref{lemma:singlesmoothness}, we next prove Theorem \ref{thm:smoothness}. 
For simplicity, we assume that all the activation functions are same, e.g., $\sigma_l=\sigma,\,\forall l=1,\cdots,L$.
We use $L_{f}$ to denote the lipschitz constant of a function $f$. 
In the following proof, we change the order of differentiation and integration (expectation) as it is supported by Leibniz integral rule. We also let $\bar f=\EE f$.


Since $\nabla_{b_L}\bar h=\EE[\nabla\bar\sigma], \nabla_{w_L}\bar h=\EE[\nabla\bar\sigma]u_{L-1}^{\top}, \nabla_{u_{L-1}}\EE_{Z_L}h=w_L^{\top}\EE[\nabla\bar\sigma]$. The smoothness of $\bar\sigma$ implies that $\nabla_{b_L}\bar h, \nabla_{w_L}\bar h, \nabla_{u_{L-1}}\bar h$ are $L_{b_L}^{\bar h}, L_{w_L}^{\bar h}, L_{u_{L-1}}^{\bar h}$-Lipschitz continuous respectively, with
\begin{align*}
    &L_{b_L}^{\bar h}:=L_{\bar{\sigma}}(c),\\
    &L_{w_L}^{\bar h}:=L_{b_L}^{\bar h}\EE[\|u_{L-1}\|],\\
    &L_{u_{L-1}}^{\bar h}:=L_{b_L}^{\bar h}\|w_L\|,\\
    &\|\nabla_{u_{L-1}}\bar h\|\leq L_{\sigma}^0\|w_L\|.
\end{align*}
Since $\sigma$ is differentiable almost everywhere, $\bar\sigma(w_L\sigma(\cdot + Z_{L-1})+b_L)$ is differentiable almost everywhere and thus is smooth in expectation of $Z_{L-1}$. By some calculation, we can show that 
\begin{align*}
    &L_{b_{L-1}}^{\bar h}=L_{u_{L-1}}^{\bar h}(L_{\sigma}^0)^2+L_{\sigma}^0\|w_L\|L_{\bar{\sigma}}(c_l)\\
    &L_{w_{L-1}}^{\bar h}=L_{b_{L-1}}^{\bar h}\EE[\|u_{L-1}\|]\\
    &L_{u_{L-2}}^{\bar h}=L_{b_{L-1}}^{\bar h}\|w_{L-1}\|.
\end{align*}
Following the similar steps, we can obtain that
\begin{align*}
    &L_{b_l}^{\bar h}=L_{u_l}^{\bar h}(L_{\sigma}^0)^2+L_{\sigma}^0\|w_L\|\cdots L_{\sigma}^0\|w_{l+1}\|L_{\bar\sigma}(c_l),\\
    &L_{w_l}^{\bar h}=L_{b_l}^{\bar h}\EE[\|u_{l-1}\|],\\
    &L_{u_{l-1}}^{\bar h}=L_{b_l}^{\bar h}\|w_l\|.
\end{align*}

As long as the overall loss $\ell(\theta_0, h_1,\ldots, h_M;y)$ is smooth w.r.t. $\theta_0, h_1,\ldots, h_M$, we can extend our results to show that it is smooth in the local parameters $\theta_1,\ldots,\theta_M$. 
Taking $u_l$ from $h_m$ as example, that is
\begin{align*}
    L^{\bar\ell}_{\theta_m}=L^{\bar\ell}_{h_m}(L_{h_m}^0)^2+L_{\ell}^0L^{\bar h_m}_{\theta_m}
\end{align*}
we can extend our results to show that $F_c(\theta_0,\vtheta)=\frac{1}{N}\sum\limits_{n=1}^N\EE\ell(\theta_0,h_{n,1},\ldots,h_{n,M};y_n)+\sum\limits_{m=1}^Mr(\theta_m)$ is smooth, where the expectation is taken with respect to all the random neurons in local embedding vectors $h_1,\ldots, h_M$. Specifically, the smoothness of $F_c$ is given by 
\begin{equation}
    L^{F_c}_{\theta_m}=L^{\bar\ell}_{h_m}(L_{\bar h_m}^0)^2+L_{\ell}^0\sum_{l=1}^L(L_{w_l}^{\bar h}+L_{b_l}^{\bar h})+L^{r}_{\theta_m}
\end{equation}
where $L^{r}_{\theta_m}$ is the smoothness constant of the regularizer w.r.t. $\theta_m$; $L_{b_l}^{\bar h}$ and $L_{w_l}^{\bar h}$ are the smoothness constants of the perturbed local embedding $h$ w.r.t. the bias $b_l$ and weight $w_l$.

\subsection{The objective difference after local perturbation}
Now we evaluate the difference between $F_{c}(\theta_0,\vtheta)$ and $F(\theta_0,\vtheta)$. Note that
\begin{align}\label{eqn:smoothdiff}
    |F_{c}(\theta_0,\vtheta)-F(\theta_0,\vtheta)|^2
    &=\left|\frac{1}{N}\sum_{n=1}^N(\EE_{\mathbf{Z}}\ell(\theta_0,h_{n,1}',\ldots,h_{n,M}’;\mathbf {Z})-\ell(\theta_0,h_{n,1},\ldots,h_{n,M}))\right|^2\nonumber\\
    &\leq\frac{1}{N}\sum_{n=1}^N\EE_{\mathbf{Z}}[|\ell(\theta_0,h_{n,1}',\ldots,h_{n,M}';\mathbf{Z})-\ell(\theta_0,h_{n,1},\ldots,h_{n,M})|^2]\nonumber\\
    &\leq\frac{M}{N}\sum_{n=1}^NL_{\ell}^2\EE_{\mathbf{Z}}[\|h_{n,m}'-h_{n,m}\|^2]
\end{align}
where $h_{n,m}$ and $h_{n,m}'$ correspond to the outputs of \eqref{eqn:localembed} and \eqref{eqn:pertlocalembed}, respectively. 
Since we have that
\begin{align*}
    &\|h_{n,m}'-h_{n,m}\|=\|u_L'-u_L\|\leq L_{\sigma_L}(\|w_L\|\|u_{L-1}'-u_{L-1}\|+\|Z_L\|)\leq\cdots\leq\sum\limits_{j=1}^L\left(\prod\limits_{l=j}^{L} L_{\sigma_l}\|w_l\|\right)\|Z_l\|
\end{align*}    
and thus it follows that
\begin{align*}    
    &\|h_{n,m}'-h_{n,m}\|^2\leq\left(\sum\limits_{j=1}^L\prod\limits_{l=j}^LL_{\sigma_l}^2\|w_l\|^2\right)\left(\sum\limits_{j=1}^L\|Z_j\|^2\right).
\end{align*}    
Taking expectation on both side, we have
\begin{align*}    
    &\EE[\|h_{n,m}'-h_{n,m}\|^2]\leq\left(\sum\limits_{j=1}^L\prod\limits_{l=j}^LL_{\sigma_l}^2\|w_l\|^2\right)\left(\sum\limits_{j=1}^{L-1}{c_l^2}+c^2\right).
\end{align*}
Plugging into \eqref{eqn:smoothdiff}, we arrive at
\begin{align*}
    |F_{c}(\theta_0,\vtheta)-F(\theta_0,\vtheta)|\leq M\left(\sum\limits_{j=1}^L\prod\limits_{l=j}^LL_{\sigma_l}^2\|w_l\|^2\right)^{\frac{1}{2}}\left(\sum\limits_{j=1}^{L-1}{c_l^2}+c^2\right)^{\frac{1}{2}}.
\end{align*}

\section{Proof of Theorem \ref{thm:privacy}}
Let $u_l,u_l'$ denote the the outputs of $l$-th layer with inputs $u_0=x,x'$. Under the assumptions that $Z_l\sim\mathcal U[-c_l/2,c_l/2]$,$\sigma_l$ is $L_{\sigma_l}$-Lipschitz continuous for $l=1,\ldots, L-1$, we can derive that
\begin{align*}
    \|w_Lu_{L-1}-w_Lu_{L-1}'\|
    &\leq \|w_L\|L_{\sigma_{L-1}}(\|w_{L-1}\|\|u_{L-2}-u_{L-2}'\|+\sqrt{d_{L-1}}c_{L-1})\\
    &\leq \|w_L\|\prod\limits_{l=1}^{L-1}L_{\sigma_l}\|w_l\|\|x-x'\|+\|w_L\|\sum\limits_{l=1}^{L-1}\left(\prod\limits_{j=1}^{l}L_{\sigma_j}\sqrt{d_j}\right)c_{l}\\
   &:=\bar B.
\end{align*}
Consider the linear operation of $L$-th layer $\mathcal M(u_{L-1})=w_Lu_{L-1}+b_L+Z_L$ which is a random mechanism defined by $Z_L\sim\mathcal N(0,\nu^2)$. Since differential privacy is immune to post-processing \cite{dwork2014algorithmic}, $\sigma_L\circ\mathcal{M}$ does not increase the privacy loss compared with $\mathcal M$. According to Theorem 1 in \cite{abadi2016}, Algorithm \ref{alg:Async-SGD-FL} is $(\varepsilon,\delta)$-differentially private if $\nu=c\frac{q\sqrt{T\log(1/\delta)}}{\varepsilon}$.

\section{Simulation details}
In this section, we present the details of our simulations, and provide the additional test results. 
\subsection{Simulation environment}
We conducted our simulations on a deep learning workstation with 2 \textsf{Nvidia Titan V} and 2 \textsf{Nvidia GeForce RTX 2080 Ti} GPUs. 
Codes are written using \textsf{Python 3.6} and \textsf{Tensorflow 2.0}. 
\subsection{VAFL for federated logistic regression}
\noindent\textbf{Data allocation.}
The datasets we choose are CIFAR-10, Parkinson Disease, MNIST and Fashion MNIST. The batch size is selected to be approximate 0.01 fraction of the entire training dataset.
The data are uniformly distributed among $M=8$ clients for CIFAR-10, $M=3$ for Parkinson Disease, and $M=7$ for both MNIST and Fashion MNIST. 

\noindent\textbf{Stepsize.}
The stepsize is $\eta=1\times10^{-2}$ for Parkinson Disease, $\eta=2\times10^{-4}$ for CIFAR-10, and $\eta=1\times10^{-4}$ for both MNIST and Fashion MNIST.  

\noindent\textbf{Random delay.}
The random delay follows a Poisson distribution with client-specific parameters to reflect heterogeneity. 
The delay on each worker $m$ follows the Poisson distribution with parameter $2m$ and scaled by $1/2M$, where $M$ is the number of workers and $m$ is the worker index.
The expectation of maximum worker delay is one second. 

\noindent\textbf{Perturbation.}
The noise added to the output of each local client follows the Gaussian distribution of each task is $\mathcal N(0,0.01)$ for CIFAR-10, MNIST and Fashion MNIST and $\mathcal N(0,1)$ for Parkinson Disease.

For each task, we run the algorithms sufficiently many epochs and record the training loss. Testing accuracy and wall clock time are recorded at the end of each epoch.

 \begin{figure}[t]
\centering
\includegraphics[width=.45\textwidth]{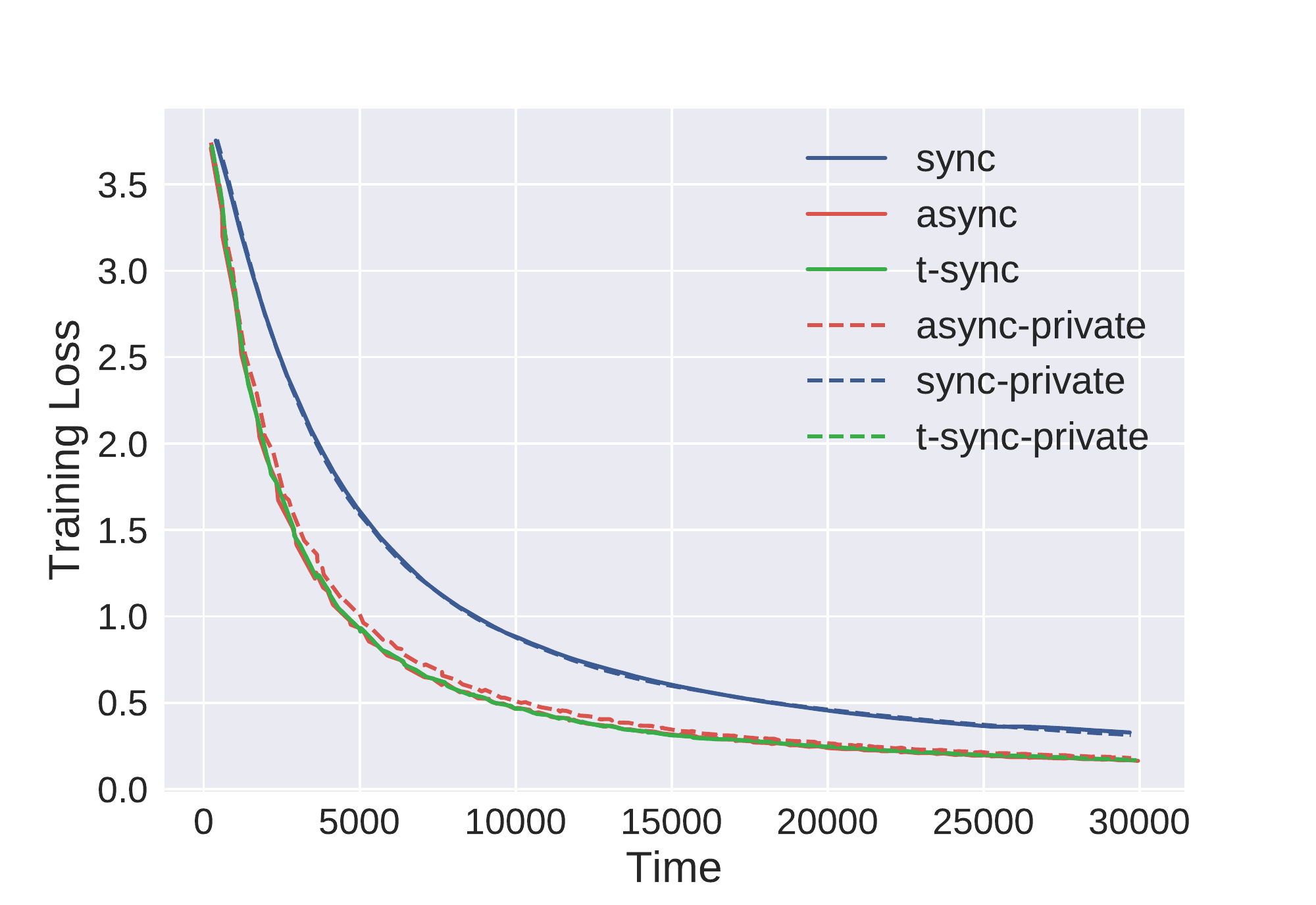}
\vspace{-0.2cm}
\caption{Training loss of VAFL with nonlinear local embedding on \textit{ModelNet40} dataset.}
\label{fig:Nonlinear_loss2}
\vspace{-0.2cm}
\end{figure}

\begin{figure}[t]
\centering
\includegraphics[width=.45\textwidth]{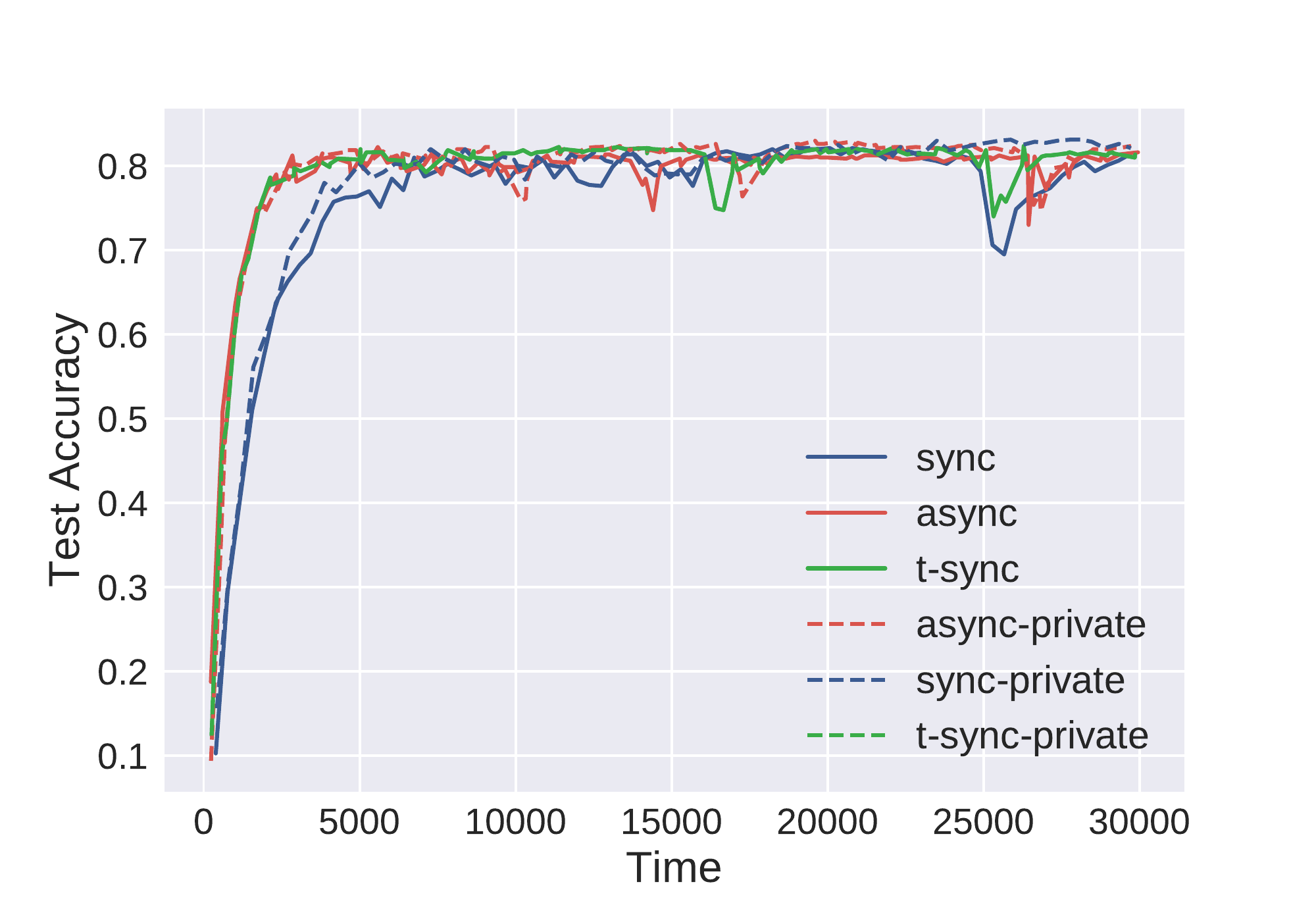}
\vspace{-0.2cm}
\caption{Testing accuracy of VAFL with nonlinear local embedding on \textit{ModelNet40} dataset.}
\label{fig:Nonlinear_acc2}
\vspace{-0.3cm}
\end{figure}

\begin{figure}[t]
\centering
\includegraphics[width=.45\textwidth]{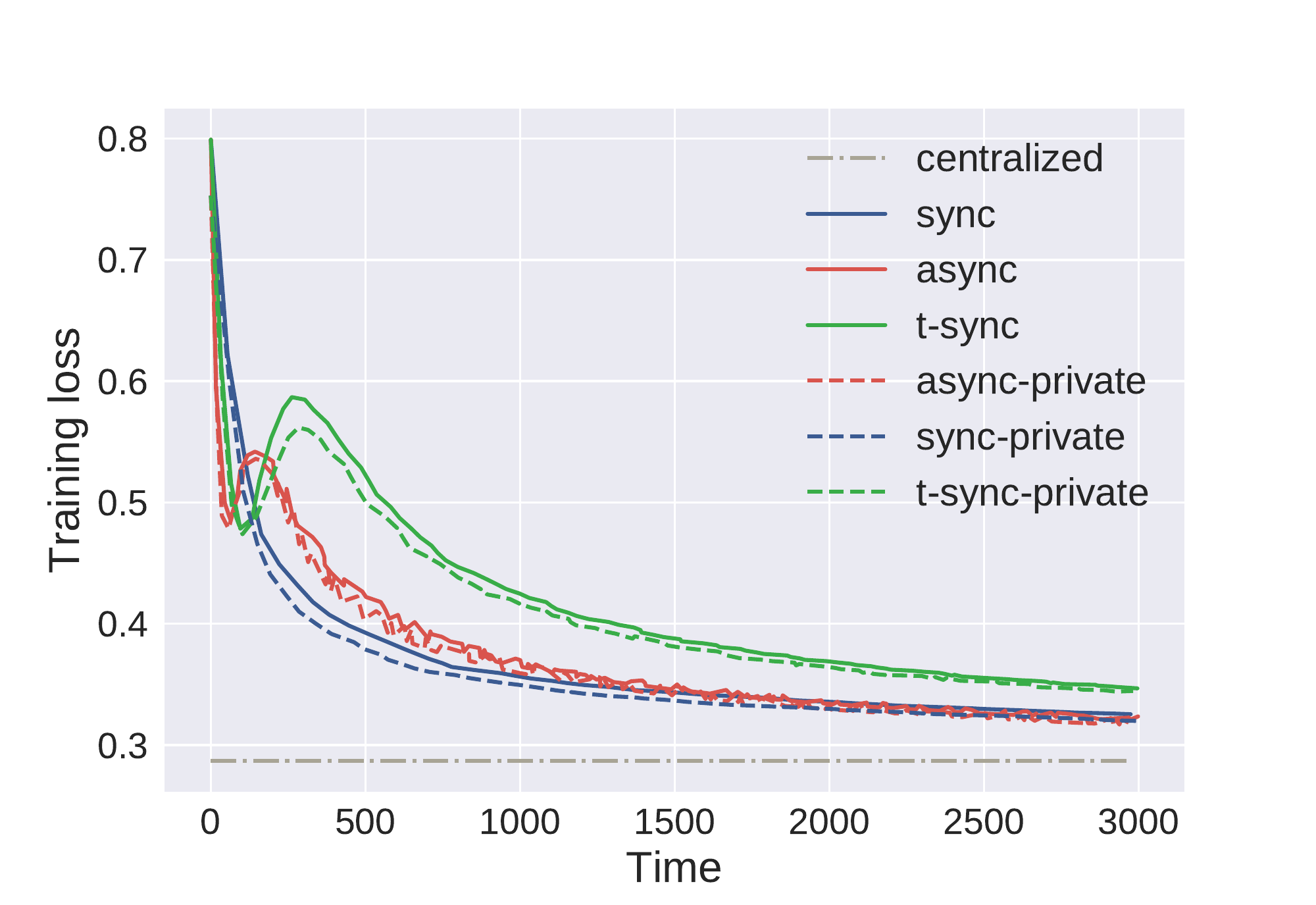}
\vspace{-0.2cm}
\caption{Training loss of VAFL with local LSTM embedding on \textit{MIMIC-III} critical care dataset.}
\label{fig:mortality_loss2}
\vspace{-0.2cm}
\end{figure}

\begin{figure}[t]
\centering
\includegraphics[width=.45\textwidth]{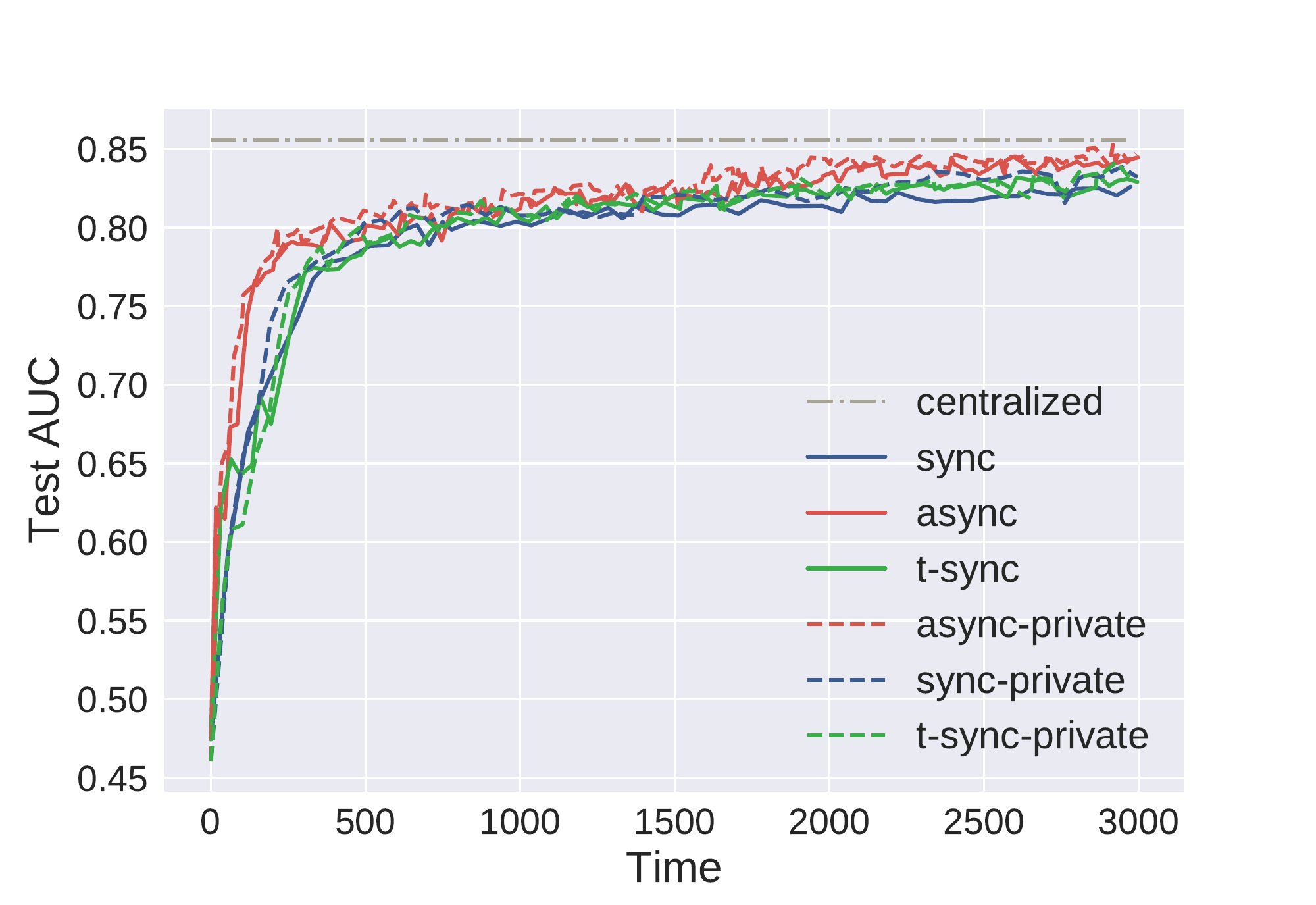}
\vspace{-0.2cm}
\caption{AUC curve of VAFL with local LSTM embedding on \textit{MIMIC-III} clinical care dataset.}
\label{fig:mortality_auc2}
\vspace{-0.3cm}
\end{figure}

\subsection{VAFL for federated deep learning}
\subsubsection{Training on ModelNet40 dataset}

\noindent\textbf{Local embedding structure.}
We train a convolutional neural network-based model consisting of two parts: the local embedding models and the server model. Each local model is a 7-layer convolutional neural network. 
The server part is a centralized 3-layer fully connect neural network. 

\noindent\textbf{Vertical data allocation.}
The data we choose is ModelNet40 and we vertically distributed images of the objects in the dataset from 12 angles and assign to each local client. Each local client deals with the data assigned by their local convolutional network and generate a vector whose dimension is $512$ as the local output. 

\noindent\textbf{Random delay.}
The random delay follows exponential distribution with client-specific parameters to reflect heterogeneity. 
For each worker $m$, the delay follows the exponential distribution with parameter $m$.

\noindent\textbf{Random perturbation.}
We use ReLU as the local embedding activation function. We add a random noise on the output of each local embedding convolutional layer. The noises follow the following distributions: $\mathcal U(-0.1,0.1)$ (the first two layers), $\mathcal U(-0.01,0.01)$ (the other convolutional layers except the last layer) and $\mathcal N(0,1)$ (the last convolutional layers).

\noindent\textbf{Server structure.}
The server then combines the $12$ vectors linearly and pass them into the three-layer fully connected neural network and classify into 40 classes. The number of nodes of each layer is $256$, $100$ and $40$.

\noindent\textbf{Learning rate.}
The stepsize of the local embedding update $\eta_m$ is $10^{-3}$ and the server stepsize $\eta_0 = \frac{\eta_m}{M}$ where $M$ is the number of workers.

\subsubsection{Training on MIMIC-III dataset}
MIMIC is an open dataset comprising deidentified health data associated with ~60,000 intensive care unit admissions \cite{johnson2016mimic}. The data are allocated into 4 workers having different feature dimensions. 

\noindent\textbf{Local embedding structure.}
The local embedding part is a two layer LSTM models and the server part is a fully connected layer. The first layer is a bidirectional LSTM and the number of units is 16. The second layer is a normal LSTM layer and the number of units is also 16.

\noindent\textbf{Random delay.}
The random delay follows an exponential distribution with client-specific parameters to reflect heterogeneity. 
The delay on each worker $m$ follows an exponential distribution with parameter $m$.

\noindent\textbf{Random perturbation.}
A random noise following Gaussian distribution $\mathcal N(0,10^{-4})$ is also added on the output of each local embedding layer.

We have also added simulation results that compare all the algorithms with their private counterparts on both ModelNet40 and MIMIC-III datasets.

\end{document}